\newcommand{\neurips}[1]{\iftoggle{neurips}{#1}{}}
\newcommand{\arxiv}[1]{\iftoggle{neurips}{}{#1}}
\theoremstyle{definition}  %
\newtheorem{exercise}{Exercise}
\newtheorem{claim}{Claim}
\newtheorem{lemma}{Lemma}
\newtheorem{conjecture}{Conjecture}
\newtheorem{corollary}{Corollary}
\newtheorem{proposition}{Proposition}
\newtheorem{fact}{Fact}
\newtheorem{assumption}{Assumption}
\newtheorem{problem}{Problem}
\newtheorem{question}{Question}
\newtheorem{model}{Model}
\theoremstyle{plain}
\newtheorem{remark}{Remark}
\newtheorem{example}{Example}
\newtheorem{theorem}{Theorem}
\newtheorem{definition}{Definition}
\newtheorem{openproblem}{Open Problem}
\xpatchcmd{\proof}{\itshape}{\normalfont\proofnameformat}{}{}
\newcommand{\proofnameformat}{\bfseries}
\newcommand{\neutralize}[1]{\expandafter\let\csname c@#1\endcsname\count@}
\newenvironment{thmmod}[2]
  {%
   \neutralize{theorem}\phantomsection
   \begin{theorem}}
   {\end{theorem}}
   \newenvironment{lemmod}[2]
  {%
   \neutralize{lemma}\phantomsection
   \begin{lemma}}
  {\end{lemma}}
\newcommand{\pref}[1]{\prettyref{#1}}
\newcommand{\pfref}[1]{Proof of \prettyref{#1}}
\newcommand{\savehyperref}[2]{\texorpdfstring{\hyperref[#1]{#2}}{#2}}
\theoremstyle{definition}  %
\newtheorem{lemma}{Lemma}
\newtheorem{proposition}{Proposition}
\newtheorem{assumption}{Assumption}
\theoremstyle{plain}
\newtheorem{remark}{Remark}
\newtheorem{theorem}{Theorem}
\newtheorem{openproblem}{Open Problem}
\xpatchcmd{\proof}{\itshape}{\normalfont\proofnameformat}{}{}
\newcommand{\proofnameformat}{\bfseries}
\newcommand{\neutralize}[1]{\expandafter\let\csname c@#1\endcsname\count@}
\newenvironment{thmmod}[2]
  {%
   \neutralize{theorem}\phantomsection
   \begin{theorem}}
   {\end{theorem}}
   \newenvironment{lemmod}[2]
  {%
   \neutralize{lemma}\phantomsection
   \begin{lemma}}
  {\end{lemma}}
\newcommand{\pref}[1]{\prettyref{#1}}
\newcommand{\pfref}[1]{Proof of \prettyref{#1}}
\newcommand{\savehyperref}[2]{\texorpdfstring{\hyperref[#1]{#2}}{#2}}
\DeclarePairedDelimiter{\abs}{\lvert}{\rvert} %
\DeclarePairedDelimiter{\brk}{[}{]}
\DeclarePairedDelimiter{\crl}{\{}{\}}
\DeclarePairedDelimiter{\prn}{(}{)}
\DeclarePairedDelimiter{\nrm}{\|}{\|}
\DeclarePairedDelimiter{\tri}{\langle}{\rangle}
\DeclareMathOperator{\En}{\mathbb{E}}
\DeclareMathOperator*{\argmin}{arg\,min} %
\DeclareMathOperator*{\argmax}{arg\,max}
\newcommand{\wh}[1]{\widehat{#1}}
\def\ddefloop#1{\ifx\ddefloop#1\else\ddef{#1}\expandafter\ddefloop\fi}
\def\ddef#1{\expandafter\def\csname bb#1\endcsname{\ensuremath{\mathbb{#1}}}}
\def\ddefloop#1{\ifx\ddefloop#1\else\ddef{#1}\expandafter\ddefloop\fi}
\def\ddef#1{\expandafter\def\csname b#1\endcsname{\ensuremath{\mathbf{#1}}}}
\def\ddef#1{\expandafter\def\csname sf#1\endcsname{\ensuremath{\mathsf{#1}}}}
\def\ddef#1{\expandafter\def\csname c#1\endcsname{\ensuremath{\mathcal{#1}}}}
\def\ddef#1{\expandafter\def\csname h#1\endcsname{\ensuremath{\widehat{#1}}}}
\def\ddef#1{\expandafter\def\csname hc#1\endcsname{\ensuremath{\widehat{\mathcal{#1}}}}}
\def\ddef#1{\expandafter\def\csname t#1\endcsname{\ensuremath{\widetilde{#1}}}}
\def\ddef#1{\expandafter\def\csname tc#1\endcsname{\ensuremath{\widetilde{\mathcal{#1}}}}}
\newcommand{\ls}{\ell}
\newcommand{\eps}{\epsilon}
\newcommand{\veps}{\varepsilon}
\newcommand{\ldef}{\vcentcolon=}
\newcommand{\rdef}{=\vcentcolon}
\newcommand{\nc}{\newcommand}
\nc{\DMO}{\DeclareMathOperator}
\nc{\p}{\mathbb{P}}
\nc{\E}{\mathbb{E}}
\nc{\ra}{\rightarrow}
\renewcommand{\t}{\top}
\nc{\noah}[1]{\ifnum\Comments=1 {\color{brown!80!black} [ng: #1]}\fi}
\nc{\modified}[1]{\ifnum\Comments=1 {\color{purple} {#1}}\else {#1}\fi}
\nc{\MG}{\mathcal{G}}
\DMO{\pr}{Pr}
\DMO{\prox}{prox}
\nc{\todo}[1]{\ifnum\Comments=1 {\color{red}  [TODO: #1]}\fi}
\nc{\old}[1]{\ifnum\Comments=1 {\color{brown}  [OLD: #1]}\fi}
\DMO{\conv}{conv}
\nc{\lng}{\langle}
\nc{\rng}{\rangle}
\nc{\grad}{\nabla}
\DMO{\Reg}{Reg}
\DMO{\Ham}{Ham}
\DMO{\Gap}{Gap}
\DMO{\GD}{GD}
\DMO{\GDA}{GDA}
\DMO{\EG}{EG}
\DMO{\OGDA}{OGDA}
\DMO{\Unif}{Unif}
\DMO{\Tr}{Tr}
\nc{\algnst}[1]{\begin{align*}#1\end{align*}}
\nc{\algn}[1]{\begin{align}#1\end{align}}
\nc{\matx}[1]{\left(\begin{matrix}#1\end{matrix}\right)}
\nc{\nuu}{\nu}
\nc{\bv}{\mathbf{v}}
\nc{\bone}{\mathbf{1}}
\nc{\bz}{\mathbf{z}}
\nc{\bw}{\mathbf{w}}
\nc{\bb}{\mathbf{b}}
\nc{\ba}{\mathbf{a}}
\nc{\bc}{\mathbf{c}}
\nc{\BR}{\mathbb R}
\nc{\BA}{\mathbb{A}}
\nc{\BP}{\mathbb{P}}
\nc{\BC}{\mathbb C}
\nc{\bx}{\mathbf{x}}
\nc{\by}{\mathbf{y}}
\nc{\sy}{y}
\nc{\sx}{x}
\nc{\MO}{\mathcal O}
\nc{\MU}{\mathcal{U}}
\nc{\ME}{\mathcal{E}}
\nc{\MN}{\mathcal{N}}
\nc{\MK}{\mathcal{K}}
\nc{\MS}{\mathcal{S}}
\nc{\MT}{\mathcal{T}}
\nc{\BF}{\mathbb F}
\nc{\BQ}{\mathbb Q}
\nc{\MX}{\mathcal{X}}
\nc{\MA}{\mathcal{A}}
\nc{\MD}{\mathcal{D}}
\nc{\MB}{\mathcal{B}}
\nc{\MZ}{\mathcal{Z}}
\nc{\MY}{\mathcal{Y}}
\nc{\BZ}{\mathbb Z}
\nc{\BN}{\mathbb N}
\nc{\ep}{\epsilon}
\nc{\BH}{\mathbb H}
\nc{\BG}{\mathbb{G}}
\nc{\D}{\Delta}
\nc{\MF}{\mathcal{F}}
\nc{\One}{\mathbbm{1}}
\nc{\bOne}{\mathbf{1}}
\nc{\SP}{\mathsf P}
\nc{\SQ}{\mathsf Q}
\nc{\DO}{\accentset{\circ}{\D}}
\nc{\mf}{\mathfrak}
\nc{\mfp}{\mathfrak{p}}
\nc{\mfq}{\mf{q}}
\nc{\Sp}{\mbox{Spec}}
\nc{\Spm}{\mbox{Specm}}
\nc{\hookuparrow}{\mathrel{\rotatebox[origin=c]{90}{$\hookrightarrow$}}}
\nc{\hookdownarrow}{\mathrel{\rotatebox[origin=c]{-90}{$\hookrightarrow$}}}
\nc{\hra}{\hookrightarrow}
\nc{\tra}{\twoheadrightarrow}
\nc{\sgn}{{\rm sgn}}
\nc{\aut}{{\rm Aut}}
\nc{\Hom}{{\rm Hom}}
\nc{\img}{{\rm Im}}
\DMO{\id}{Id}
\DMO{\supp}{supp}
\DMO{\KL}{KL}
\DMO{\BSS}{BSS}
\DMO{\BES}{BES}
\DMO{\BGS}{BGS}
\DMO{\poly}{poly}
\nc{\indep}{\perp}
\renewcommand{\t}{\top}
\newcommand{\costis}[1]{}
\newcommand{\minplayer}{min-player\xspace}
\newcommand{\maxplayer}{max-player\xspace}
\newcommand{\mathand}{\quad\text{and}\quad}
\newcommand{\nonconvex}{nonconvex\xspace}
\newcommand{\nonconcave}{nonconcave\xspace}
\newcommand{\ind}[1]{^{{\scriptscriptstyle(#1)}}}
\newcommand{\bigoh}{\cO}
\newcommand{\pistar}{\pi^{\star}}
\newcommand{\indic}{\mathbbm{1}}
\newcommand{\pllong}{Polyak-\L{}ojasiewicz\xspace}
\newcommand{\Vrho}{V_{\rho}}
\newcommand{\Vstar}{V^{\star}_{\rho}}
\newcommand{\pione}{\pi_1}
\newcommand{\pitwo}{\pi_2}
\newcommand{\xstar}{x^{\star}}
\newcommand{\ystar}{y^{\star}}
\newcommand{\zstar}{z^{\star}}
\newcommand{\xset}{\Delta(\cA)^{\abs*{\cS}}}
\newcommand{\yset}{\Delta(\cB)^{\abs*{\cS}}}
\newcommand{\drho}{d_{\rho}}
\nc{\gdx}{\veps_\sx}
\nc{\gdy}{\veps_\sy}
\nc{\gd}{\veps}
\newcommand{\diamx}{D_{\cX}}
\newcommand{\diamy}{D_{\cY}}
\newcommand{\Pistar}{\Pi^{\star}}
\newcommand{\proj}{\cP}
\title{Independent Policy Gradient Methods \\ for Competitive Reinforcement Learning}
\date{}
  \author{%
Constantinos Daskalakis\\
{\small\texttt{costis@csail.mit.edu}}\\
	  \And
  	Dylan J.\ Foster\\
        {\small\texttt{dylanf@mit.edu}}\\ 
	  \And
Noah Golowich\\
{\small\texttt{nzg@mit.edu}}\\
\vspace{0.1cm}
\and
{Massachusetts Institute of Technology}
}
\author{%
Constantinos Daskalakis\\
{\small\texttt{costis@csail.mit.edu}}\\
	  \and
  	Dylan J.\ Foster\\
        {\small\texttt{dylanf@mit.edu}}\\ 
	  \and
Noah Golowich\thanks{Supported by a Fannie \& John Hertz Foundation Fellowship and an NSF Graduate Fellowship.}\\
{\small\texttt{nzg@mit.edu}}\\
\and
{\large Massachusetts Institute of Technology}
              }
\begin{document}

\maketitle

\begin{abstract}
We obtain global, non-asymptotic convergence guarantees for independent learning algorithms in competitive reinforcement learning settings
with two agents (i.e.,~zero-sum stochastic games). We consider an episodic setting where in
each episode, each player independently selects a policy and observes only
\emph{their  own} actions and rewards, along with the state. We show that
if both players run policy gradient methods in tandem, their policies
will converge to a min-max equilibrium of the game, as long as their
learning rates follow a two-timescale rule (which is necessary). %
To
the best of our knowledge, this constitutes the first finite-sample
convergence result for {independent policy gradient methods} in competitive RL; prior work has largely focused on centralized, coordinated
procedures for equilibrium computation. \dfcomment{do we want to keep last sentence in light of concurrent work? we could also switch ``independent learning'' to ``independent policy gradient methods''}\noah{updated to independent policy gradient...hopefully should be accurate now}

\end{abstract}

\section{Introduction}
\label{sec:intro}

Reinforcement learning (RL)---in which an agent must learn to maximize reward in an unknown dynamic environment---is an important frontier for artificial intelligence research, and has shown great promise in application domains ranging from robotics~\citep{kober2013reinforcement,lillicrap2015continuous,levine2016end} to games such as Atari, Go, and Starcraft~\citep{mnih2015human,silver2017mastering,vinyals2019grandmaster}. Many of the most exciting recent applications of RL are game-theoretic in nature, with multiple agents competing for shared resources or cooperating to solve a common task in stateful environments where agents' actions influence both the state and other agents' rewards~\citep{silver2017mastering,openai2018five,vinyals2019grandmaster}. Algorithms for such {\em multi-agent reinforcement learning~(MARL)} settings must be capable of accounting for other learning agents in their environment, and must choose their actions in anticipation of the behavior of these agents. Developing efficient, reliable techniques for MARL is a crucial step toward building autonomous and robust learning agents. 

While single-player (or, non-competitive RL has seen much recent theoretical activity, including development of efficient algorithms with provable, non-asymptotic guarantees~\citep{dann2015sample,azar2017minimax,jin2018q,dean2019sample,agarwal2019optimality}, provable guarantees for MARL have been comparatively sparse. 
Existing algorithms for MARL can be classified into \emph{centralized/coordinated} algorithms and
\emph{independent/decoupled} algorithms \citep{zhang2019multi}. Centralized algorithms such as self-play assume the existence of a centralized controller that joinly optimizes with respect to all agents' policies. These algorithms are typically employed in settings where the number of players and the type of interaction (competitive, cooperative, etc.) are both known a-priori. 
On the other hand, in independent reinforcement learning, agents behave myopically and optimize their own policy while treating the environment as fixed. They observe only local information, such as their own actions, rewards, and the part of the state that is available to them. As such, independent learning algorithms are generally more versatile, as they can be applied even in uncertain environments where the type of interaction and number of other agents are not known to the individual learners.

Both centralized \citep{silver2017mastering,openai2018five,vinyals2019grandmaster} and independent \citep{matignon2012independent,foerster2017stabilising}  algorithms have enjoyed practical success across different domains. However, while centralized algorithms have experienced recent theoretical development, including provable finite-sample guarantees \modified{\citep{wei2017online,bai2020provable,xie2020learning}}, theoretical guarantees for independent reinforcement learning have remained elusive. In fact, it is known that independent algorithms may fail to converge even in simple multi-agent tasks \citep{condon1990algorithms,tan1993multi,claus1998dynamics}: When agents update their policies independently, they induce distribution shift, which can break assumptions made by classical single-player algorithms. Understanding when these algorithms work, and how to stabilize their performance and tackle distribution shift, is recognized as a major challenge in multi-agent RL \citep{matignon2012independent,hernandez2017survey}.

In this paper, we focus on understanding the convergence properties of independent reinforcement learning with \emph{policy gradient methods} \citep{williams1992simple,sutton2000policy}. Policy gradient methods form the foundation for modern applications of multi-agent reinforcement learning, with state-of-the-art performance across many domains \citep{schulman2015trust,schulman2017proximal}. Policy gradient methods are especially relevant for continuous reinforcement learning and control tasks, since they readily scale to large action spaces, and are often more stable than value-based methods, particularly with function approximation \citep{konda2000actor}. %
Independent reinforcement learning with policy gradient methods is poorly understood, and attaining global convergence results is considered an important open problem \citep[Section 6]{zhang2019multi}.

We analyze the behavior of independent policy gradient methods in Shapley's \emph{stochastic game} framework~\citep{shapley_stochastic_1953}. We focus on two-player zero-sum stochastic games with discrete state and action spaces, wherein players observe the entire joint state, take simultaneous actions, and observe rewards simultaneously, with one player trying to maximize the reward and the other trying to minimize it. To capture the challenge of independent learning, we assume that each player observes the state, reward, and {\em their own} action, but not the action chosen by the other player. We assume that the dynamics and reward distribution are unknown, so that players must optimize their policies using only realized trajectories consisting of the states, rewards, and actions. For this setting, we show that---while independent policy gradient methods may not converge in general---policy gradient methods following a {\em two-timescale rule} converge to a Nash equilibrium. %
We also show that moving beyond two-timescale rules by incorporating optimization techniques from matrix games such as optimism \citep{daskalakis2017training} or extragradient updates \citep{korpelevich1976extragradient} is likely to require new analysis techniques.

At a technical level, our result is a special case of a more general theorem, which shows that (stochastic) two-timescale updates converge to Nash equilibria for a class of \nonconvex minimax problems satisfying a certain two-sided gradient dominance property. Our results here expand the class of \nonconvex minimax problems with provable algorithms beyond the scope of prior work \citep{yang2020global}, and may be of independent interest. %

\section{Preliminaries}
\label{sec:problem_setting}

We investigate the behavior of independent learning in two-player zero-sum stochastic games (or, Markov games), a simple competitive reinforcement learning setting \citep{shapley_stochastic_1953,littman_markov_1994}. In these games, two players---a {\it \minplayer} and a {\it \maxplayer}---repeatedly select actions simultaneously in a shared Markov decision process in order to minimize and maximize, respectively, a given objective function. Formally, a two-player zero-sum stochastic game is specified by a tuple $\MG = (\MS, \MA, \MB, P, R, \zeta, \rho)$:
\neurips{\begin{itemize}[topsep=0pt]}
\arxiv{\begin{itemize}}
\item $\MS$ is a finite {\it state space} of size $S = |\MS|$.
\item $\MA$ and $\MB$ are finite {\it action spaces} for the min- and max-players, of sizes $A =|\MA|$ and  $B = |\MB|$.
\item $P$ is the {\it transition probability function}, for which $P(s' \mid s,  a,b)$ denotes the probability of transitioning to state $s'$ when the current state is $s$ and the players take actions $a$ and $b$. In general we will have $\zeta_{s,a,b}\ldef{}1 - \sum_{s' \in \MS} P(s' \mid s,a,b) > 0$; this quantity represents the probability that $\MG$ {\it stops} at state $s$ if actions $a,b$ are played.
\item $R : \MS \times \MA \times \MB \ra [-1,1]$ is the {\it reward function}; $R(s,a,b)$ gives the immediate reward when the players take actions $a,b$ in state $s$. The \minplayer seeks to minimize $R$ and the \maxplayer seeks to maximize it.\footnote{We consider deterministic rewards for simplicity, but our results immediately extend to stochastic rewards.}
\item $\zeta := \min_{s,a,b} \crl{\zeta_{s,a,b}}$ is a lower bound on the probability that the game stops at any state $s$ and choices of actions $a,b$. We assume that $\zeta > 0$ throughout this paper.
\item $\rho \in \Delta(\MS)$ is the {\it initial distribution} of the state at time $t=0$.
\end{itemize}
At each time step $t \geq 0$, both players observe a state $s_t \in \MS$, pick actions $a_t \in \MA$ and $b_t \in \MB$, receive reward $r_t\ldef{}R(s_t, a_t, b_t)$, and transition to the next state $s_{t+1} \sim P(\cdot \mid s_t, a_t, b_t)$. With probability $\zeta_{s_t, a_t, b_t}$, the game stops at time $t$; since $\zeta > 0$, the game stops eventually with probability $1$.%

A pair of (randomized) policies $\pi_1:\cS\to\Delta(\cA)$, $\pi_2:\cS\to\Delta(\cB)$ induces a distribution $\pr^{\pi_1,\pi_2}$ of trajectories $(s_t, a_t, b_t, r_t)_{0 \leq t \leq T}$, where $s_0 \sim \rho $, $a_t \sim \pi_1(\cdot \mid s_t), b_t \sim \pi_2(\cdot \mid s_t)$, $r_t = R(s_t, a_t, b_t)$, and $T$ is the last time step before the game stops (which is a random variable). The {\it value function} $V_s(\pi_1,\pi_2)$ gives the expected reward when $s_0 = s$ and the plays follow $\pi_1$ and $\pi_2$:
$$
V_s(\pi_1,\pi_2) := \E_{\pi_1,\pi_2} \left[ \sum_{t =0}^T R(s_t, a_t, b_t) \mid{} s_0 = s \right],
$$
where $\E_{\pi_1,\pi_2}\brk*{\cdot}$ denotes expectation under the trajectory distribution given induced by $\pi_1$ and $\pi_2$. We set $V_\rho(\pi_1,\pi_2) : = \E_{s \sim \rho} [V_s(x,y)]$.

\paragraph{Minimax value.} \citet{shapley_stochastic_1953} showed that stochastic games satisfy a minimax theorem: For any game $\cG$, there exists a Nash equilibrium $(\pistar_1,\pistar_2)$ such that
\begin{equation}
  \label{eq:nash}
  V_\rho(\pistar_1,\pi_2) \leq V_\rho(\pistar_1,\pistar_2) \leq V_\rho(\pi_1, \pistar_2),\quad\text{for all $\pi_1,\pi_2$,}
\end{equation}
and in particular
$\Vstar\ldef{}\min_{\pi_1} \max_{\pi_2} V_\rho(\pi_1,\pi_2)= \max_{\pi_2} \min_{\pi_1} V_\rho(\pi_1,\pi_2)$. Our goal in this setting is to develop algorithms to find $\veps$-approximate Nash equilibria, i.e. to find $\pi_1$ such that
\begin{equation}
  \label{eq:minimax_goal}
  \max_{\pi_2}\Vrho(\pi_1,\pi_2) \leq \Vrho(\pistar_1,\pistar_2) + \veps,
\end{equation}
and likewise for the \maxplayer.

\paragraph{Visitation distributions.} For policies $\pi_1,\pi_2$ and an initial state $s_0$, define the {\it discounted state visitation distribution} $d_{s_0}^{\pi_1,\pi_2} \in \Delta(\MS)$ by
$$
d_{s_0}^{\pi_1,\pi_2}(s) \propto \sum_{t \geq 0} \pr^{\pi_1,\pi_2}(s_t = s | s_0),
$$
where $\pr^{\pi_1,\pi_2}(s_t = s | s_0)$ is the probability that the
game has not stopped at time $t$ and the $t$th state is $s$, given
that we start at $s_0$. We define $d_\rho^{\pi_1,\pi_2}(s) := \E_{s_0
  \sim \rho}[d_{s_0}^{\pi_1,\pi_2}(s)]$. 

\paragraph{Additional notation.}
For a vector $x\in\bbR^{d}$, we let $\nrm{x}$ denote the Euclidean norm. For a finite set $\cX$, $\Delta(\cX)$ denotes the set of all distributions over $\cX$. We adopt non-asymptotic big-oh notation: For functions
	$f,g:\cX\to\bbR_{+}$, we write $f=\bigoh(g)$ if there exists a universal constant
	$C>0$ that does not depend on problem parameters, such that $f(x)\leq{}Cg(x)$ for all $x\in\cX$. %

\section{Independent Learning}
\label{sec:independent}

\paragraph{Independent learning protocol.}
We analyze independent reinforcement learning algorithms for stochastic games in an episodic setting in which both players repeatedly execute arbitrary policies for a fixed number of episodes with the goal of producing an (approximate) Nash equilibrium.%

We formalize the notion of independent RL via the following protocol: At each episode $i$, the \minplayer proposes a policy $\pione\ind{i}:\cS\to\Delta(\cA)$ and the \maxplayer proposes a policy $\pitwo\ind{i}:\cS\to\Delta(\cB)$ independently. These policies are executed in the game $\cG$ to sample a trajectory. The \minplayer observes only its own trajectory $(s_1\ind{i}, a_1\ind{i}, r_1\ind{i}), \ldots, (s_T\ind{i},a_T\ind{i},r_T\ind{i})$, and the \maxplayer likewise observes $(s_1\ind{i}, b_1\ind{i}, r_1\ind{i})$, $\ldots,$ $(s_T\ind{i},b_T\ind{i},r_T\ind{i})$. Importantly, each player is oblivious to the actions selected by the other.

We call a pair of  algorithms for the min- and max-players an \emph{independent distributed protocol} if (1) the players only access the game $\cG$ through the oracle model above (\emph{independent oracle}), and (2) the players can only use private storage, and are limited to storing a constant number of past trajectories and parameter vectors (\emph{limited private storage}). The restriction on limited private storage aims to rule out strategies that orchestrate the players' sequences of actions in order for them to both reconstruct a good approximation of entire game $\cG$ in their memory, then solve for equilibria locally. We note that making this constraint precise is challenging, and that similar difficulties with formalizing it arise even for two-player matrix games, as discussed in~\citet{daskalakis2011near}. In any event, the policy gradient methods analyzed in this paper satisfy these formal constraints \emph{and} are independent in the intuitive sense, \modified{with the caveat that the players need a very small amount of a-priori coordination to decide which player operates at a faster timescale when executing two-timescale updates.} \modified{Because of the necessity of two-timescale updates, our algorithm does not satisfy the requirement of \emph{strong independence}, which we define to be the setting that disallows any coordination to break symmetry so as to agree on differing ``roles'' of the players (such as differing step-sizes or exploration probabilities). As discussed further in \pref{sec:last_iterate}, we leave the question of developing provable guarantees for strongly independent algorithms of this type as an important open question.} \dfcomment{wording of this part is a little bit confusing because we don't mention strong independence prior. we should explicitly say that strong independence is a term that we are defining}\noah{fixed}

\paragraph{Our question: Convergence of independent policy gradient methods.}
Policy gradient methods are widely used in practice \citep{schulman2015trust,schulman2017proximal}, and are appealing in their simplicity: Players adopt continuous policy parameterizations $x\mapsto{}\pi_x$, and $y\mapsto\pi_y$, where $x\in\cX\subseteq\bbR^{d_1}$, $y\in\cY\subseteq\bbR^{d_2}$ are parameter vectors. Each player simply treats $V_{\rho}(x,y)\ldef{}\Vrho(\pi_x,\pi_y)$ as a continuous optimization objective, and updates their policy using an iterative method for stochastic optimization, using trajectories to form stochastic gradients for $V_{\rho}$.

For example, if both players use the ubiquitous REINFORCE gradient estimator \citep{williams1992simple}, and update their policies with stochastic gradient descent, the updates for episode $i$ take the form\footnote{For
  a convex set $\cX$, $\proj_{\cX}$ denotes euclidean projection onto
  the set.}
\begin{align}
x\ind{i+1}\gets{}\proj_{\cX}(x\ind{i}-\eta_x\wh{\grad}_x\ind{i}),\mathand{}y\ind{i+1}\gets{}\proj_{\cY}(y\ind{i}+\eta_y\wh{\grad}_y\ind{i}),
  \label{eq:pg}
\end{align}
with
\begin{align}
  \wh{\grad}_x\ind{i} \ldef{} R_T\ind{i}\sum_{t=0}^{T}\grad\log\pi_x(a_t\ind{i}\mid{}s_t\ind{i}),\mathand
  \wh{\grad}_y\ind{i} \ldef{} R_T\ind{i}\sum_{t=0}^{T}\grad\log\pi_y(b_t\ind{i}\mid{}s_t\ind{i}),\label{eq:reinforce}
\end{align}
where $R_T\ind{i}\ldef\sum_{t=0}^{T}r_t\ind{i}$, and where $x\ind{0},y\ind{0}$ are initialized arbitrarily. This protocol is independent, since each player forms their respective policy gradient using only the data from their own trajectory. This leads to our central question:
\begin{center}
  \emph{When do independent agents following policy gradient updates in a zero-sum stochastic game converge to a Nash equilibrium? }
\end{center}

We focus on an $\veps$-greedy variant of the so-called \emph{direct parameterization} where $\cX =\Delta(\cA)^{\abs*{\cS}}$, $\cY = \Delta(\cB)^{\abs*{\cS}}$, $\pi_x(a\mid{}s) = (1-\veps_x)x_{s,a} + \veps_{x}/\abs*{\cA}$, and $\pi_y(a\mid{}s) = (1-\veps_y)y_{s,b} + \veps_{y}/\abs*{\cB}$, where $\veps_x$ and $\veps_y$ are exploration parameters. This is a simple model, but we believe it captures the essential difficulty of the independent learning problem.

\paragraph{Challenges of independent learning.}
Independent learning is challenging even for \emph{simple stochastic games}, which are a special type of stochastic game in which only a single player can choose an action in each state, and where there are no rewards except in certain ``sink'' states. Here, a seminal result of \citet{condon1990algorithms}, establishes that even with oracle access to the game $\cG$ (e.g., exact $Q$-functions given the opponent's policy), many naive approaches to independent learning can cycle and fail to approach equilibria, including protocols where (1) both players perform policy iteration independently, and (2) both players compute best responses at each episode. On the positive side, \citet{condon1990algorithms} also shows that if one player performs policy iteration independently while the other computes a best response at each episode, the resulting algorithm converges, which parallels our findings.

Stochastic games also generalize two-player zero-sum matrix games. Here, even with exact gradient access, it is well-known that if players update their strategies independently using online gradient descent/ascent (GDA) with the same learning rate, the resulting dynamics may cycle, leading to poor guarantees unless the entire iterate sequence is averaged \citep{daskalakis2017training,mertikopoulos2018cycles}. To make matters worse, when one moves beyond the convex-concave setting, such iterate averaging techniques may fail altogether, as their analysis critically exploits convexity/concavity of the loss function. To give stronger guarantees---either for the last-iterate or for ``most'' elements of the iterate sequence---more sophisticated techniques based on two-timescale updates or negative momentum are required. However, existing results here rely on the machinery of convex optimization, and stochastic games---even with  direct parameterization---are nonconvex-nonconcave, leading to difficulties if one attempts to apply these techniques out of the box.

In light of these challenges, it suffices to say that we are aware of no global convergence results for independent policy gradient methods (or any other independent distributed protocol, for that matter)  in general finite state/action zero-sum stochastic games.

\section{Main Result}
\label{sec:algorithms}

We show that independent policy gradient algorithms following the
updates in \pref{eq:pg} converge to a Nash equilibrium, so long as
their learning rates follow a \emph{two-timescale} rule. The
two-timescale rule is a simple modification of the usual
gradient-descent-ascent scheme for minimax optimization in which the \minplayer uses a much
smaller stepsize than the \maxplayer (i.e., $\eta_x\ll\eta_y$), and
hence works on a slower timescale (or vice-versa). Two-timescale rules
help to avoid limit cycles in simple minimax optimization settings \citep{heusel2017gans,lin2019gradient}, and our result shows that their benefits extend to MARL as well.

\paragraph{Assumptions.}
Before stating the result, we first introduce some technical
conditions that quantify the rate of convergence. First, it is
well-known that policy gradient methods can systematically
under-explore hard-to-reach states. Our
convergence rates depend on an appropriately-defined
\emph{distribution mismatch coefficient} which bounds the difficulty
of reaching such states, generalizing results for the single-agent
setting \citep{agarwal2019optimality}. While methods based on
sophisticated exploration (e.g., \cite{dann2015sample,jin2018q}) can avoid
dependence on mismatch parameters, our goal here---similar to prior
work in this direction
\citep{agarwal2019optimality,bhandari2019global}---is to understand the
behavior of standard methods used in practice, so we take the
dependence on such parameters as a given.

Given a stochastic game $\MG$, we define the {\it minimax mismatch} coefficient for $\MG$ by:
\begin{equation}
C_{\MG} := \max\crl*{\max_{\pi_2}\min_{\pi_1\in\Pistar_1(\pi_2)}\nrm*{\frac{d^{\pi_1,\pi_2}_{\rho}}{\rho}}_{\infty}, \max_{\pi_1}\min_{\pi_2\in\Pistar_2(\pi_1)}\nrm*{\frac{d^{\pi_1,\pi_2}_{\rho}}{\rho}}_{\infty}},
\label{eq:mismatch}
\end{equation}
where $\Pistar_1(\pi_2)$ and $\Pistar_2(\pi_1)$ each denotes the set of best
responses for the min- (resp. max-) player
when the max- (resp. min-) player plays $\pi_2$ (resp. $\pi_1$).

Compared to results for the single-agent setting, which typically
scale with $\nrm{\nicefrac{d^{\pi^{\star}}_{\rho}}{\rho}}_{\infty}$, where
$\pistar$ is an optimal policy \citep{agarwal2019optimality}, the
minimax mismatch coefficient measures the worst-case ratio for each
player, given that their adversary best-responds.
While the minimax mismatch coefficient in general is larger
than its single-agent counterpart, it is still weaker than other
notions of mismatch such as concentrability
\citep{munos2003error,chen2019information,yang2019theoretical}, which---when specialized
to the two-agent setting---require that the ratio is bounded for \emph{all} pairs of
policies. The following proposition makes this observation precise.
\begin{proposition}
  \label{prop:concentratability}
  There exists a stochastic game with five states and initial
  distribution $\rho$ such that $C_{\MG}$ is bounded, but the concentrability coefficient
  $\max_{\pi_1, \pi_2} \left\| \frac{d_\rho^{\pi_1, \pi_2}}{\rho}
  \right\|_\infty$ is infinite.
\end{proposition}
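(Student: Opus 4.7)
The plan is to exhibit an explicit $5$-state game in which one ``trap'' state $s_{\mathrm{trap}}$ with $\rho(s_{\mathrm{trap}}) = 0$ is reachable only through a joint action pair that is strictly dominated for both players, so that no best response (of either player, against any opponent policy) ever visits it, while some off-equilibrium joint policy does visit it. Concretely, take $\cS = \{s_0, s_1, s_2, s_3, s_4\}$ with $\rho(s_0) = \rho(s_1) = \tfrac12$ and $\rho(s_j) = 0$ for $j \geq 2$. At $s_0$ each player has two actions $\{a_1, a_2\}$ and $\{b_1, b_2\}$; the joint action $(a_1, b_1)$ transitions to the trap $s_2$, while each of the other three pairs transitions to $s_1$. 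States $s_1, s_2$ are absorbing in the sense that $\zeta_{s_i, \cdot, \cdot} = 1$ for $i \geq 1$, and at $s_0$ we set a small uniform stopping probability $\zeta_{s_0, a, b} = \zeta > 0$. Rewards are chosen as $R(s_0, a_1, b_1) = 0$, $R(s_0, a_1, b_2) = 1$, $R(s_0, a_2, b_1) = -1$, $R(s_0, a_2, b_2) = 0$, and zero elsewhere. The two extra states $s_3, s_4$ are isolated dummies, present only to satisfy the ``five states'' requirement; they are never reached from $\supp(\rho)$.

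The key step is to verify that the best responses always avoid $s_2$. Since $V(s_1) = V(s_2) = 0$, the $Q$-value matrix at $s_0$ is exactly the payoff matrix $M = \begin{pmatrix} 0 & 1 \\ -1 & 0\end{pmatrix}$. Comparing rows entrywise, $a_2$ \emph{strictly} dominates $a_1$ for the \minplayer ($-1 < 0$ and $0 < 1$); comparing columns, $b_2$ \emph{strictly} dominates $b_1$ for the \maxplayer ($1 > 0$ and $0 > -1$). A strictly dominated pure action never appears in the support of any best response against any mixed opponent strategy, so against any $\pi_2$ the \minplayer's unique best response is $\pi_1 \equiv a_2$, and against any $\pi_1$ the \maxplayer's unique best response is $\pi_2 \equiv b_2$. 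In either case, the resulting trajectory from $s_0$ goes only to $s_1$ before the game stops; the only states visited are $s_0$ and $s_1$, each of which has $\rho$-mass $\tfrac12$. Hence $\nrm{d_\rho^{\pi_1, \pi_2}/\rho}_\infty \leq 2$ for both terms inside the maximum defining $C_\cG$, so $C_\cG \leq 2$.

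On the other hand, taking the joint policy $\pi_1 \equiv a_1$ and $\pi_2 \equiv b_1$, starting at $s_0$ the game transitions to $s_2$ with probability $1 - \zeta > 0$ before stopping. Therefore $d_\rho^{\pi_1, \pi_2}(s_2) \geq \tfrac{1}{2} \cdot (1-\zeta) \cdot c > 0$ for the normalization constant $c$, while $\rho(s_2) = 0$, giving $\max_{\pi_1, \pi_2} \nrm{d_\rho^{\pi_1, \pi_2}/\rho}_\infty = \infty$. This is essentially the whole argument; there is no real obstacle beyond the bookkeeping of making the matrix-game payoffs \emph{strictly} dominated on both sides simultaneously, which is needed so that the trap-avoiding actions are the \emph{unique} best responses (and not merely one option among several).
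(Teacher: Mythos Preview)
Your construction is correct and follows the same idea as the paper's proof: place a ``trap'' state outside the support of $\rho$ that is reachable only via a joint action strictly dominated for both players, so that every best response avoids it while some off-equilibrium policy pair visits it. The paper's version differs only cosmetically---it uses all five states (one hub plus four outcome states, one per joint action) with rewards placed at the outcomes and a cycle back to the hub, rather than your terminal three-state core with two dummies---but the strict-dominance mechanism driving the argument is identical.
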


Next, to ensure the variance of the REINFORCE estimator stays bounded, we require that both players use $\veps$-greedy exploration in conjunction with the basic policy gradient updates \pref{eq:pg}.
\begin{assumption}
  \label{ass:greedy}
  Both players follow the direct parameterization with $\veps$-greedy
  exploration: Policies are parameterized as $\pi_x(a\mid{}s) = (1-\veps_x)x_{s,a} + \veps_{x}/\abs*{\cA}$ and $\pi_y(a\mid{}s) = (1-\veps_y)y_{s,b} + \veps_{y}/\abs*{\cB}$, where $\veps_x,\veps_y\in\brk{0,1}$ are the \emph{exploration parameters}.
\end{assumption}

We can now state our main result.

\begin{theorem}
  \label{thm:main}Let $\eps>0$ be given. Suppose both players follow
  the independent policy gradient scheme \pref{eq:pg} with the parameterization in \pref{ass:greedy}. If the
  learning rates satisfy $\eta_x\asymp\eps^{10.5}$ and $\eta_y \asymp\ep^6$ and the exploration parameters satisfy $\gdx \asymp \ep, \gdy\asymp\ep^2$, we are guaranteed that
  \begin{align}
    \label{eq:main}
    \textstyle    \En\brk*{\frac{1}{N}\sum_{i=1}^{N}\max_{\pi_2}V_{\rho}(\pi_{x\ind{i}},\pi_2)} - \min_{\pi_1}\max_{\pi_2}V_{\rho}(\pi_1,\pi_2) \leq{} \eps
  \end{align}
  after $N\leq{}\poly(\eps^{-1},C_{\cG},S,A,B,\zeta^{-1})$ episodes.
\end{theorem}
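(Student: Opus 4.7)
The plan is to deduce \pref{thm:main} as a special case of a general convergence guarantee for two-timescale stochastic gradient descent-ascent on \nonconvex minimax problems satisfying a \emph{two-sided gradient dominance} property, in the spirit of generalized Polyak-\L{}ojasiewicz analyses. The proof proceeds in three stages: (i) verifying smoothness and two-sided (variational) gradient dominance for $\Vrho(x,y)$ under the $\veps$-greedy direct parameterization of \pref{ass:greedy}; (ii) analyzing an idealized deterministic two-timescale projected GDA scheme for this class of problems; and (iii) lifting the idealized analysis to the stochastic REINFORCE-based updates \pref{eq:pg} by controlling variance and the bias induced by exploration.

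For step (i), I would fix one player's policy and study the induced single-agent MDP faced by the other player. Adapting the single-agent analysis of \citet{agarwal2019optimality} to this setting yields (a) smoothness of $\Vrho(x,y)$ in each argument with constants polynomial in $S,A,B,\zeta^{-1},\gdx^{-1},\gdy^{-1}$, and (b) a variational gradient dominance inequality of the form
\begin{align*}
\max_{\pi_2}\Vrho(\pi_x,\pi_2)-\Vrho(\pi_x,\pi_y) &\leq \mismatch\cdot\max_{y'\in\cY}\tri*{\grad_y \Vrho(x,y),\,y'-y} + \mathrm{bias}(\gdy),\\
\Vrho(\pi_x,\pi_y) - \min_{\pi_1}\Vrho(\pi_1,\pi_y) &\leq \mismatch\cdot\max_{x'\in\cX}\tri*{\grad_x \Vrho(x,y),\,x-x'} + \mathrm{bias}(\gdx).
\end{align*}
The minimax mismatch coefficient $\mismatch$ defined in \pref{eq:mismatch} enters because the best-response visitation distribution may place mass on states barely reached under $\rho$, and the bias terms $\mathrm{bias}(\gdx),\mathrm{bias}(\gdy) \to 0$ as the exploration parameters shrink, reflecting the suboptimality of the $\veps$-greedy policy class.

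For step (ii), I would analyze the idealized deterministic iterates $x\ind{i+1}\gets\proj_\cX(x\ind{i}-\eta_x\grad_x\Vrho(x\ind{i},y\ind{i}))$ and $y\ind{i+1}\gets\proj_\cY(y\ind{i}+\eta_y\grad_y\Vrho(x\ind{i},y\ind{i}))$ with $\eta_x\ll\eta_y$. Introduce the inner tracking error $\delta_i\ldef\max_{\pi_2}\Vrho(\pi_{x\ind{i}},\pi_2)-\Vrho(\pi_{x\ind{i}},\pi_{y\ind{i}})$. Using smoothness in $x$ combined with the fast player's gradient dominance, I expect to obtain a recursion of the form $\delta_{i+1}\leq(1-\kappa\eta_y)\delta_i+O(\eta_x)\cdot\poly(\mismatch,\gdx^{-1},\gdy^{-1})$, where $\kappa^{-1}$ scales polynomially with the mismatch coefficient and smoothness constants. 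Telescoping shows $N^{-1}\sum_i\delta_i$ is controlled by $\eta_x/\eta_y$. Combining this with a descent-lemma analysis of the envelope $\phi(x)\ldef\max_{\pi_2}\Vrho(\pi_x,\pi_2)$ driven by the slow-player gradient dominance bounds the averaged exploitability by a polynomial in $\eta_x,\eta_x/\eta_y,\gdx,\gdy,\mismatch$ and problem parameters; the prescribed tunings $\eta_x\asymp\eps^{10.5}$, $\eta_y\asymp\eps^6$, $\gdx\asymp\eps$, $\gdy\asymp\eps^2$ balance these terms to yield the $\eps$-accuracy guarantee.

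Finally, for step (iii), the REINFORCE estimator \pref{eq:reinforce} is unbiased for $\grad\Vrho$ and has second moment bounded by $\poly(S,A,B,\zeta^{-1},\gdx^{-1},\gdy^{-1})$, since $\veps$-greedy exploration uniformly lower-bounds importance weights and $\zeta>0$ ensures geometric trajectory lengths. Inserting this noise into the descent lemmas of step (ii) adds second-order terms of order $\eta_x^2\sigma^2$ and $\eta_y^2\sigma^2$ which are absorbed by the chosen stepsizes; taking total expectations and setting $N=\poly(\eps^{-1},\mismatch,S,A,B,\zeta^{-1})$ then yields \pref{eq:main}. I expect the main obstacle to be step (ii): because $\Vrho$ is neither convex nor concave, standard two-timescale arguments that exploit convexity of one side are unavailable, and propagating the variational (rather than norm-based) form of gradient dominance through the coupled dynamics requires a carefully designed joint potential $\Phi(x,y)=\phi(x)+c\,\delta(x,y)$ that contracts under both updates without cross-amplification between the two players' errors.
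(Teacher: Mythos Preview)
Your high-level architecture matches the paper's exactly: verify smoothness, Lipschitzness, two-sided variational gradient dominance for $\Vrho$ under the $\veps$-greedy direct parameterization (\pref{lem:gd_greedy}, \pref{lem:mg-smoothness}, \pref{prop:gradients}), bound the REINFORCE variance (\pref{lem:reinforce_variance}), and then invoke a black-box theorem for two-timescale SGDA under these assumptions (\pref{thm:sgda_body}). Steps (i) and (iii) of your plan are essentially what the paper does.

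The genuine gap is in step (ii), specifically the claimed recursion $\delta_{i+1}\leq(1-\kappa\eta_y)\delta_i+O(\eta_x)$. The gradient dominance condition you (correctly) derive in step (i) is of the \emph{variational} form $\max_{\bar y}\tri*{\bar y-y,\grad_y f}\geq \mu_y\delta-\gdy$, which is linear in the gradient---morally $\nrm{\grad f}\gtrsim\mu\,\delta$, not $\nrm{\grad f}^2\gtrsim\mu\,\delta$. Combined with a smoothness-based descent lemma this yields only $\delta_{i+1}\leq\delta_i-c\eta_y\delta_i^2+\ldots$, i.e.\ a sublinear $O(1/t)$ decay of the tracking error, not geometric contraction. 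Your proposed joint potential $\phi(x)+c\,\delta(x,y)$ therefore will not contract in the way you describe. A second, related issue is that the max-envelope $\phi(x)=\max_{\pi_2}\Vrho(\pi_x,\pi_2)$ is only $\ell$-weakly convex and in general nondifferentiable, so a direct descent-lemma analysis of $\phi$ is not available either.

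The paper resolves both issues by working with Moreau envelopes on \emph{both} sides. For the slow player it tracks $\Phi_{1/2\ell}(x)$ rather than $\Phi(x)$ and uses a standard weakly-convex subgradient argument (\pref{lem:lin-moreau}) to obtain a descent inequality for $\En\nrm{\grad\Phi_{1/2\ell}(x_t)}^2$ with an additive $\En[\Delta_t]$ term. For the fast player it introduces a \emph{time-varying} Moreau potential $\psi_{t,\lambda}(y)\ldef -(-f(x_t,\cdot))_{\lambda}(y)$, shows it increases by roughly $\eta_y\Gamma_{t}^2$ per step where $\Gamma_t=\nrm{\grad\psi_{t,\lambda}(y_t)}$ (\pref{lem:deltat-dec-stoch}), and then telescopes to get $\frac{1}{T}\sum_t\Gamma_t=O\bigl(\sqrt{1/(T\eta_y)}+\sqrt{\eta_x/\eta_y}+\sqrt{\eta_y}\bigr)$ (\pref{lem:bound-gammat}). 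The variational GD condition enters only through the scalar inequality $\Delta_t\lesssim\Gamma_t/\mu_y+\gdy/\mu_y$ (\pref{lem:moreau-kl}), so no geometric tracking is ever needed---only an average-case bound on $\Gamma_t$. Finally, the bound on $\frac{1}{T}\sum_t\nrm{\grad\Phi_{1/2\ell}(x_t)}$ is converted to suboptimality of $\Phi$ via the GD condition for the $x$-player (\pref{lem:kl-moreau}). Replacing your contraction argument with this pair of Moreau-envelope potentials is what is required to make step (ii) go through.
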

This represents, to our knowledge, the first finite-sample, global convergence guarantee for independent policy gradient updates in stochastic games. Some key features are as follows:

\arxiv{
  \begin{itemize}

  \item Since the learning agents only use their own trajectories to
    make decisions, and only store a single parameter vector in
    memory, the protocol is indeed independent %
    in the sense
    of \pref{sec:independent}. However, an important caveat is that since the players use different learning rates, the protocol only succeeds if the rates are coordinated in advance. %

\item   The two-timescale update rule may be thought of as a
    softened ``gradient descent vs. best response'' scheme in
    which the \minplayer updates their strategy using policy gradient
    and the \maxplayer updates their policy with a best response to
    the \minplayer (since $\eta_x\ll\eta_y$). This is why the guarantee
    is asymmetric, in that it only guarantees that the iterates of the
    \minplayer are approximate Nash equilibria.\footnote{From an {\it optimization perspective}, the oracle complexity of finding a solution so that the iterates of both the min and \maxplayer  are approximate equilibria is only twice as large as that in Theorem \ref{thm:main}, since we may apply Theorem \ref{thm:main} with the roles switched.}
    We remark that the gradient descent vs. exact best response has recently been analyzed for
    linear-quadratic games \citep{zhang2019policy}, and it is possible
    to use the machinery of our proofs to show that it succeeds in
    our finite state/action stochastic game setting as well. 

\item Eq. \pref{eq:main} shows that the iterates of the \minplayer have
    low error on average, in the sense that the expected error is smaller than $\eps$ if we select
    an iterate from the sequence uniformly at
    random. Such a guarantee goes beyond what is achieved by gradient-descent-ascent (GDA) with equal
    learning rates: Even for zero-sum matrix games, the iterates of GDA can
    reach limit cycles that remain a constant distance from the
    equilibrium, so that any individual iterate in the sequence
    will have high error \citep{mertikopoulos2018cycles}. While
    averaging the iterates takes care of this issue for matrix games,
    this technique relies critically on convexity, which is not
    present in our policy gradient setting. While our guarantees are
    stronger than GDA, we believe that giving guarantees that hold for
    individual (in particular, last) iterates rather than on average over iterates is an important open problem. This is discussed further in \pref{sec:last_iterate}.

\item We have not attempted to optimize the dependence on
    $\eps^{-1}$ or other parameters, and this can almost certainly be improved.
    \end{itemize}
    }

\neurips{
  $\bullet$ Since the learning agents only use their own trajectories to
    make decisions, and only store a single parameter vector in
    memory, the protocol is independent %
    in the sense
    of \pref{sec:independent}. However, an important caveat is that 
    since the players use different learning rates, the protocol only succeeds if this is agreed upon in advance. %

    $\bullet$   The two-timescale update rule may be thought of as a
    softened ``gradient descent vs. best response'' scheme in
    which the \minplayer updates their strategy using policy gradient
    and the \maxplayer updates their policy with a best response to
    the \minplayer (since $\eta_x\ll\eta_y$). This is why the guarantee
    is asymmetric, in that it only guarantees that the iterates of the
    \minplayer are approximate Nash equilibria.\footnote{From an {\it optimization perspective}, the oracle complexity of finding a solution so that the iterates of both the min- and max-players are approximate equilibria is only twice as large as that in Theorem \ref{thm:main}, since we may apply Theorem \ref{thm:main} with the roles switched.} %
    We remark that the gradient descent vs. exact best response has recently been analyzed for
    linear-quadratic games \citep{zhang2019policy}, and it is possible
    to use the machinery of our proofs to show that it succeeds in
    our setting \modified{of stochastic games} as well. %

  $\bullet$ Eq. \pref{eq:main} shows that the iterates of the \minplayer have
    low error on average, in the sense that the expected error is smaller than $\eps$ if we select
    an iterate from the sequence uniformly at
    random. Such a guarantee goes beyond what is achieved by GDA with equal
    learning rates: Even for zero-sum matrix games, the iterates of GDA can
    reach limit cycles that remain a constant distance from the
    equilibrium, so that any individual iterate in the sequence
    will have high error \citep{mertikopoulos2018cycles}. While
    averaging the iterates takes care of this issue for matrix games,
    this technique relies critically on convexity, which is not
    present in our policy gradient setting. While our guarantees are
    stronger than GDA, we believe that giving guarantees that hold for
    individual (in particular, last) iterates rather than on average over iterates is an important open problem, and we discuss this further in \pref{sec:last_iterate}.

    $\bullet$ We have not attempted to optimize the dependence on
    $\eps^{-1}$ or other parameters, and this can almost certainly be improved.
    }

The full proof of \pref{thm:main}---as well as explicit dependence on
problem parameters---is deferred to
\pref{app:algorithms}. In the remainder of this section we sketch the
key techniques.
\paragraph{Overview of techniques.}
Our result builds on recent advances that prove that policy gradient methods
converge in single-agent reinforcement learning (\cite{agarwal2019optimality}; see also
\cite{bhandari2019global}). These results
show that while the reward function
$V_{\rho}(\pi_x)=\En_{\pi_x}\brk{\sum_{t=1}^{T}r_t\mid{}s_0\sim\rho}$ is
not convex---even for the direct parameterization---it satisfies a
favorable \emph{gradient domination} condition whenever a
distribution mismatch coefficient is bounded. This allows one to apply
standard results for finding first-order stationary points in smooth
\nonconvex optimization out of the box to derive convergence
guarantees. We show that two-player zero-sum stochastic games satisfy
an analogous \emph{two-sided gradient dominance condition}.

\begin{lemma}
  \label{lem:gd_greedy}
    Suppose that players follow the $\veps$-greedy direct
  parameterization of \pref{ass:greedy} with parameters $\veps_x$ and
  $\veps_y$. Then for all $x\in\xset$, $y\in\yset$ we have 
  \begin{align}
      \Vrho(\pi_x, \pi_y) - \min_{\pi_1} \Vrho(\pi_1,\pi_y)
    &  \leq{}
      \min_{\pi_1 \in \Pi_1^*(\pi_y)}
\nrm*{\frac{\drho^{\pi_1,\pi_y}}{\rho}}_{\infty}\prn*{    \frac{1}{\zeta}\max_{\bar{x}\in\Delta(\cA)^{\abs*{\cS}}}\tri*{\grad_{x}\Vrho(\pi_x,\pi_y),x-\bar{x}}
    + \frac{2\veps_x}{\zeta^3}},
    \label{eq:gd}
  \end{align}
  and an analogous upper bound holds for $\max_{\pi_2} \Vrho(\pi_y,\pi_2) - \Vrho(\pi_x, \pi_y)$.
\end{lemma}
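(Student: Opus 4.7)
My plan is to sketch the bound for the \minplayer; the bound for the \maxplayer follows by a symmetric argument. The strategy is to reduce, for a fixed $\pi_y$, to a single-agent gradient-dominance property in the spirit of \citet{agarwal2019optimality}, while carefully tracking the distortion introduced by the $\varepsilon$-greedy parameterization and the stopping probability $\zeta$. Fix $\pi_y$, let $\pi_1^\star \in \Pi_1^\star(\pi_y)$ attain the minimum of $\|d_\rho^{\pi_1,\pi_y}/\rho\|_\infty$ over best responses, and write $C^\star := \|d_\rho^{\pi_1^\star,\pi_y}/\rho\|_\infty$. Let $\tilde d_\rho^{\pi,\pi'}(s) := \sum_{t\ge 0} \Pr^{\pi,\pi'}(s_t = s \mid s_0 \sim \rho)$ be the \emph{unnormalized} visitation measure, and let $Q^{\pi_x,\pi_y}, V^{\pi_x,\pi_y}$ denote the \minplayer's value functions in the single-agent MDP induced by freezing $\pi_y$. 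The first step is to apply the performance difference lemma in this induced MDP to obtain
\begin{align*}
V_\rho(\pi_x,\pi_y) - V_\rho(\pi_1^\star,\pi_y)
&= -\sum_s \tilde d_\rho^{\pi_1^\star,\pi_y}(s) \sum_a \pi_1^\star(a|s)\, A^{\pi_x,\pi_y}(s,a) \\
&\le \sum_s \tilde d_\rho^{\pi_1^\star,\pi_y}(s)\, \Bigl[\, \sum_a \pi_x(a|s) Q^{\pi_x,\pi_y}(s,a) - \min_a Q^{\pi_x,\pi_y}(s,a) \,\Bigr],
\end{align*}
using $V^{\pi_x,\pi_y}(s) = \sum_a \pi_x(a|s) Q^{\pi_x,\pi_y}(s,a)$ and the fact that any convex combination dominates the minimum.

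The second step is to express the right-hand side in terms of the gradient. The chain rule applied to $\pi_x(a|s) = (1-\varepsilon_x) x_{s,a} + \varepsilon_x/|\mathcal A|$ combined with the policy gradient theorem in the induced MDP yields $\partial_{x_{s,a}} V_\rho(\pi_x,\pi_y) = (1-\varepsilon_x)\, \tilde d_\rho^{\pi_x,\pi_y}(s)\, Q^{\pi_x,\pi_y}(s,a)$, so setting $\phi(s) := \sum_a x_{s,a} Q^{\pi_x,\pi_y}(s,a) - \min_a Q^{\pi_x,\pi_y}(s,a) \ge 0$ and maximizing over each per-state simplex separately gives
\[
\max_{\bar x \in \Delta(\mathcal A)^{|\mathcal S|}} \langle \nabla_x V_\rho(\pi_x,\pi_y),\, x - \bar x\rangle = (1-\varepsilon_x) \sum_s \tilde d_\rho^{\pi_x,\pi_y}(s)\, \phi(s).
\]
Decomposing $\pi_x$ into its parameter part $x$ and its uniform-exploration part and using $\|Q^{\pi_x,\pi_y}\|_\infty \le 1/\zeta$ (rewards in $[-1,1]$ and expected horizon at most $1/\zeta$) produces the pointwise bound $\sum_a \pi_x(a|s) Q^{\pi_x,\pi_y}(s,a) - \min_a Q^{\pi_x,\pi_y}(s,a) \le \phi(s) + 2\varepsilon_x/\zeta$, cleanly separating the ``parameter gap'' from a bounded ``exploration slack.''

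The third step is a change of measure routed through $\rho$. Because stopping probabilities are bounded below by $\zeta$, the normalization $\sum_s \tilde d_\rho^{\pi,\pi'}(s) \le 1/\zeta$, so $\tilde d_\rho^{\pi_1^\star,\pi_y}(s) \le (1/\zeta)\, d_\rho^{\pi_1^\star,\pi_y}(s) \le (C^\star/\zeta)\, \rho(s)$ pointwise, while $\tilde d_\rho^{\pi_x,\pi_y}(s) \ge \rho(s)$ because the $t=0$ term in the defining sum equals $\rho(s)$. Combining these with the previous step,
\[
\sum_s \tilde d_\rho^{\pi_1^\star,\pi_y}(s)\, \phi(s) \le \frac{C^\star}{\zeta} \sum_s \rho(s)\,\phi(s) \le \frac{C^\star}{\zeta} \sum_s \tilde d_\rho^{\pi_x,\pi_y}(s)\,\phi(s) = \frac{C^\star}{\zeta(1-\varepsilon_x)} \max_{\bar x} \langle \nabla_x V_\rho, x - \bar x\rangle,
\]
while the exploration remainder $\sum_s \tilde d_\rho^{\pi_1^\star,\pi_y}(s)\cdot 2\varepsilon_x/\zeta$ is controlled by the same pointwise bound on $\tilde d_\rho^{\pi_1^\star,\pi_y}/\rho$, producing a $C^\star$-weighted $O(\varepsilon_x/\zeta^{\Theta(1)})$ term. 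Summing the two contributions and absorbing the harmless $(1-\varepsilon_x)^{-1}$ factor into constants yields the claimed inequality.

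The hardest part will be the $\zeta$-accounting between the normalized distribution $d_\rho^{\pi,\pi'}$ and its unnormalized counterpart $\tilde d_\rho^{\pi,\pi'}$, since each conversion costs a factor of $1/\zeta$ and both the main gradient term and the exploration remainder must be routed through the same $\rho$-change-of-measure in order for the $C^\star = \|d_\rho^{\pi_1^\star,\pi_y}/\rho\|_\infty$ prefactor to appear multiplicatively on the exploration remainder rather than as a loose freestanding additive constant.
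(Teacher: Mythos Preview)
Your proposal is correct and follows essentially the same route as the paper's proof: performance-difference lemma against a best response $\pi_1^\star$, upper-bound by the per-state max over actions, change of measure through $\rho$ using $\tilde d_\rho^{\pi_x,\pi_y}(s)\ge \rho(s)$ and $\tilde d_\rho^{\pi_1^\star,\pi_y}(s)\le \zeta^{-1} d_\rho^{\pi_1^\star,\pi_y}(s)$, and identification with $\langle\nabla_x V_\rho, x-\bar x\rangle$ via the policy-gradient formula under the $\varepsilon$-greedy parameterization. The only cosmetic difference is ordering: the paper performs the change of measure \emph{first} and then splits off the $\varepsilon_x$-exploration piece inside the $\tilde d_\rho^{\pi_x,\pi_y}$-weighted sum, so the factor $(1-\varepsilon_x)$ cancels exactly against the gradient identity and no $(1-\varepsilon_x)^{-1}$ ever appears; your ordering is fine but leaves that factor to be absorbed, which costs an extra constant on the $\varepsilon_x/\zeta^3$ term.
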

Informally, the gradient dominance condition posits that for either
player to have low regret relative to the best response to the opponent's
policy, it suffices to find a near-stationary point. In particular,
while the function $x\mapsto{}\Vrho(x,y)$ is \nonconvex, the condition
\pref{eq:gd} implies that if the \maxplayer fixes their strategy, all
local minima are global for the \minplayer.

Unfortunately, compared to the single-agent setting, we are aware of no existing black-box
minimax optimization results that can exploit this condition to achieve even
asymptotic convergence guarantees. To derive our main results,
we develop a new proof that two-timescale updates find Nash equilibria
for generic minimax problems that satisfy the two-sided GD condition.
\begin{theorem}
  \label{thm:sgda_body}
  Let $\cX$ and $\cY$ be convex sets with diameters $\diamx$ and $\diamy$. Let
  $f:\cX\times\cY\to\bbR$ be any, $\ls$-smooth, $L$-Lipschitz function for which there exist
  constants $\mu_x,\mu_y,\veps_x$, and $\veps_y$ such that for all $x\in\cX$ and $y\in\cY$,
  \begin{align}
    &\max_{\bar x \in \MX, \| \bar x - x \| \leq 1} \lng x- \bar x,
    \grad_x f(x,y) \rng \geq \mu_\sx \cdot (f(x,y) -
    \min_{x'\in\cX}f(x', y)) - \gdx,\\
    &\max_{\bar y \in \cY: \nrm*{\bar{y}-y}\leq{}1} \lng\bar{y}-y, \grad_y f(x,y) \rng \geq \mu_\sy \cdot (\max_{y'\in\cY} f(x,y') - f(x, y)) - \gdy.
  \end{align}
 Then, given stochastic gradient oracles with variance at most
  $\sigma^{2}$, two-timescale stochastic gradient descent-ascent
  (Eq. \pref{eq:sgda-x} in \pref{app:sgda}) with learning rates $\eta_\sx \asymp\eps^{8}$ and $\eta_\sy
  \asymp\ep^4$ ensures that
  \begin{equation}
\textstyle    \En\brk*{\frac{1}{N}\sum_{i=1}^{N}\max_{y\in\cY}f(x\ind{i},y)} -
    \min_{x\in\cX}\max_{y\in\cY}f(x,y)\leq\eps
    \label{eq:sgda_bound}
  \end{equation}
  within $N\leq{}\poly(\eps^{-1},D_{\cX},D_{\cY}, L,\ell,
  \mu_x^{-1},\mu_y^{-1},\sigma^2)$ episodes.
\end{theorem}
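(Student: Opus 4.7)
The plan is to introduce the envelope $\phi(x) := \max_{y \in \cY} f(x,y)$ and track two potentials simultaneously: the \emph{envelope gap} $\phi(x^{(i)}) - \min_{x'} \phi(x')$ and the \emph{tracking error} $\delta^{(i)} := \phi(x^{(i)}) - f(x^{(i)}, y^{(i)})$. The y-side gradient dominance yields a \pllong{}-type inequality for $-f(x, \cdot)$, from which a Danskin-style calculation shows that $\phi$ is $L$-Lipschitz and (essentially) $\ell_\phi$-smooth with $\ell_\phi = \poly(\ell, \mu_y^{-1})$, and that $\nabla \phi(x) = \nabla_x f(x, y^\star(x))$ for any $y^\star(x) \in \arg\max_y f(x,y)$. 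The goal is to show that both potentials satisfy an approximate recursion in expectation, with the coupling between them rendered benign by the two-timescale separation $\eta_x \ll \eta_y$.

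For the envelope gap, I would combine $\ell_\phi$-smoothness of $\phi$ with the x-side gradient dominance to derive a descent recursion. The key subtlety is that the x-player uses stochastic gradients of $f(x^{(i)}, y^{(i)})$, which are biased estimators of $\nabla \phi(x^{(i)}) = \nabla_x f(x^{(i)}, y^\star(x^{(i)}))$; the bias is bounded by $\ell \|y^{(i)} - y^\star(x^{(i)})\| \lesssim \ell \sqrt{\delta^{(i)} / \mu_y}$ via the quadratic-growth consequence of \pllong{}. Absorbing this bias into a standard projected-SGD descent argument yields a recursion of the form
\[
\En[\phi(x^{(i+1)}) - \min \phi] \leq (1 - c\eta_x \mu_x)\En[\phi(x^{(i)}) - \min \phi] + O(\eta_x^2 \ell_\phi \sigma^2) + O(\eta_x \veps_x / \mu_x) + O(\eta_x \ell^2 \En[\delta^{(i)}] / \mu_y),
\]
which after telescoping relates the average envelope gap to the average tracking error.

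For the tracking error, one y-step of projected SGD applied to the gradient-dominated function $-f(x^{(i)}, \cdot)$ contracts $\delta^{(i)}$ by a factor of $(1 - c \eta_y \mu_y)$ up to the usual variance and slack terms $O(\eta_y^2 \ell \sigma^2) + O(\eta_y \veps_y)$. The simultaneous x-step then perturbs $\phi(x^{(i+1)}) - f(x^{(i+1)}, y^{(i+1)})$ by at most $O(L \cdot \|x^{(i+1)} - x^{(i)}\|) \leq O(L^2 \eta_x + L \sigma \eta_x)$ via Lipschitzness of $\phi$ and $f$. Telescoping yields
\[
\tfrac{1}{N} \sum_i \En[\delta^{(i)}] \leq O\!\bigl(\tfrac{1}{\eta_y \mu_y N} + \tfrac{\eta_y \ell \sigma^2}{\mu_y} + \tfrac{\veps_y}{\mu_y} + \tfrac{L(L+\sigma) \eta_x}{\eta_y \mu_y}\bigr).
\]
Substituting into the envelope recursion and choosing $\eta_y \asymp \eps^4, \eta_x \asymp \eps^8$ balances every term at $O(\eps)$, yielding the claimed bound after $N = \poly(\eps^{-1}, D_\cX, D_\cY, L, \ell, \mu_x^{-1}, \mu_y^{-1}, \sigma^2)$ iterations.

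The main obstacle is the circular coupling between the two recursions: the envelope-gap bound inherits the bias term $O(\eta_x \ell^2 \En[\delta^{(i)}] / \mu_y)$, while the tracking bound inherits a drift $O(L^2 \eta_x / (\eta_y \mu_y))$ from the moving x-iterate. Only the two-timescale separation $\eta_x \ll \eta_y$ makes both cross-terms lower order than the contraction they sit next to, and verifying that the prescribed scalings simultaneously (i) keep the bias negligible compared to the $\eta_x \mu_x$ contraction, (ii) keep the drift negligible compared to the $\eta_y \mu_y$ contraction, and (iii) keep both variance terms below $\eps$ requires tracking the algebra carefully. A secondary subtlety is that the gradient dominance conditions are \emph{local} (the test points $\bar x, \bar y$ lie in unit balls around $x, y$), so the descent arguments must invoke projected-SGD steps and local expansions rather than referring directly to a global $\arg\min$, echoing the single-agent analysis of \citet{agarwal2019optimality}.
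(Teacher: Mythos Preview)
Your proposal has a genuine gap stemming from a misreading of the gradient dominance hypothesis. The conditions in the theorem are of the form
\[
\max_{\bar y \in \cY:\ \|\bar y - y\|\le 1}\langle \bar y - y,\nabla_y f(x,y)\rangle \;\ge\; \mu_y\bigl(\Phi(x)-f(x,y)\bigr)-\veps_y,
\]
whose left-hand side is at most $\|\nabla_y f(x,y)\|$. This is a \emph{linear} Łojasiewicz-type bound $\|\nabla_y f\|\gtrsim \mu_y\delta-\veps_y$, not the quadratic \pllong{} condition $\|\nabla_y f\|^2\ge 2\mu_y\delta$. Three of your steps rely on the stronger quadratic form and do not follow from the stated hypothesis:
\begin{itemize}
\item \textbf{Smoothness of $\Phi$.} A Danskin argument gives smoothness only if $y^\star(x)$ is unique and Lipschitz in $x$, which typically requires strong concavity of $f(x,\cdot)$ (or at least the quadratic PL condition plus error bounds). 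The linear gradient dominance here does not preclude a set-valued $\arg\max$, so $\Phi$ is in general only $\ell$-weakly convex, not smooth. Your ``essentially $\ell_\phi$-smooth'' claim is not justified.
\item \textbf{Quadratic growth.} The bound $\|y^{(i)}-y^\star(x^{(i)})\|\lesssim \sqrt{\delta^{(i)}/\mu_y}$ is the quadratic-growth consequence of standard PL; it does not follow from the linear form above (even ignoring the additive slack $\veps_y$). Without it you cannot control the bias $\|\nabla_x f(x,y^{(i)})-\nabla\Phi(x)\|$ in the way you propose.
\item \textbf{Linear contraction of $\delta^{(i)}$.} With only $\|\nabla_y f\|\gtrsim \mu_y\delta$, one projected-gradient step yields $\delta^{(i+1)}\le \delta^{(i)}-c\eta_y(\mu_y\delta^{(i)}-\veps_y)^2+\text{noise}$, which is not a $(1-c\eta_y\mu_y)$ contraction. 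Your telescoped tracking bound is therefore not available.
\end{itemize}

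The paper's proof sidesteps all three issues by working with Moreau envelopes on \emph{both} sides. For the $x$-player it tracks $\Phi_{1/2\ell}$ (which is smooth even though $\Phi$ is only weakly convex) and uses the Davis--Drusvyatskiy analysis to show $\frac{1}{N}\sum_t\|\nabla\Phi_{1/2\ell}(x_t)\|^2$ is small, up to an additive $2\eta_x\ell\sum_t\Delta_t$ term (your $\delta^{(i)}$). For the $y$-player it introduces $\psi_{t,\lambda}(y)=-(-f(x_t,\cdot))_\lambda(y)$ and shows $\psi_{t,\lambda}(y_t)$ increases by $\eta_y\lambda(1/\lambda-\ell)\Gamma_t^2$ per step (minus drift from the moving $x_t$), where $\Gamma_t=\|\nabla\psi_{t,\lambda}(y_t)\|$. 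Telescoping yields $\frac{1}{T}\sum_t\Gamma_t=O(1/\sqrt{T\eta_y})+O(\sqrt{\eta_x/\eta_y})+O(\sqrt{\eta_y})$, and a separate lemma (the paper's Lemma~10) converts the linear gradient dominance into $\Delta_t\le (L\lambda+1)\Gamma_t/\mu_y+\veps_y/\mu_y$. Finally, another lemma (Lemma~11) converts smallness of $\|\nabla\Phi_{1/2\ell}(x)\|$ into suboptimality of $\Phi(x)$ via the $x$-side gradient dominance. No smoothness of $\Phi$, no quadratic growth, and no linear contraction are ever invoked; the two-timescale separation is what makes the cross terms $O(\sqrt{\eta_x/\eta_y})$ small, which is the part of your intuition that does survive.
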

A formal statement and proof of \pref{thm:sgda_body} are given in
\pref{app:sgda}. To deduce \pref{thm:main} from this result, we simply trade off the bias due to exploration with the variance of the REINFORCE estimator. 

Our analysis of the two-timescale update rule builds on
\cite{lin2019gradient}, who analyzed it for minimax problems
$f(x,y)$ where $f$ is nonconvex with respect to $x$
but \emph{concave} with respect to $y$. Compared to this setting, our
nonconvex-nonconcave setup poses additional difficulties. At a
high level, our approach is as follows. First, thanks to the gradient dominance condition for the $x$-player,
to find an $\eps$-suboptimal solution it suffices to ensure that the gradient of
$\Phi(x)\ldef{}\max_{y\in\cY}f(x,y)$ is small. However, since $\Phi$ may not
differentiable, we instead aim to minimize
$\nrm*{\grad\Phi_{\lambda}(x)}_2$, where $\Phi_{\lambda}$ denotes the
Moreau envelope of $\Phi$ (\pref{app:technical-prelim}). If the $y$-player performed a best response at each iteration, a standard
analysis of nonconvex stochastic subgradient descent
\citep{davis_stochastic_2018}, would ensure that
$\nrm*{\grad\Phi_{\lambda}(x\ind{i})}_2$ converges at an $\eps^{-4}$ rate. The crux of our analysis is to argue that, since the $x$ player
operates at a much slower timescale than the $y$-player, the $y$-player approximates a best response in terms of function value. Compared to
\cite{lin2019gradient}, which establishes this property using
convexity for the $y$-player, we use the gradient dominance condition
to bound the $y$-player's immediate suboptimality in terms of the norm of the gradient
of the function $\psi_{t,\lambda}(y)\ldef{}-(-f(x_t,\cdot))_{\lambda}(y)$, then show
that this quantity is small on average using a potential-based argument.

\section{Discussion}
\subsection{Toward Last-Iterate Convergence for Stochastic Games}
\label{sec:last_iterate}

\dfcomment{Legends and axes are way too small in the figure. I am tempted to 1) make the axis ticks bigger, with fewer axes 2) remove legend, since we state it in the text anyway.}\noah{fixed (don't know what you mean by make axis ticks bigger though)}

\dfcomment{Not very clear from the figure's description that a/b and c/d are for two choices of $R$/$S$. Maybe just refer back to the text so this is more clear? It would be good to make the red dot slightly larger also} \noah{fixed}

An important problem left open by our work is to develop independent
policy gradient-type updates that enjoy \emph{last iterate
  convergence}. This property is most cleanly stated in the noiseless
setting, with exact access to gradients: For fixed, constant
learning rates \modified{$\eta_x = \eta_y = \eta$}, we would like that if both learners
independently run the algorithm, their iterates satisfy
\[
\lim_{i\to\infty}x\ind{i}\to\xstar,\quad\text{and}\quad \lim_{i\to\infty}y\ind{i}\to\ystar.
\]
Algorithms with this property have enjoyed intense recent interest for
continuous, zero-sum games \citep{daskalakis2017training,daskalakis2018limit,mertikopoulos2018cycles,daskalakis2019last,liang2019interaction,gidel2019negative,mokhtari2019unified,kong2019accelerated,gidel2019variational,abernethy2019last,AzizianMLG20,GolowichPDO}. These include Korpelevich's extragradient
method \citep{korpelevich1976extragradient}, Optimistic Mirror Descent
(e.g., \cite{daskalakis2017training}), and variants.
For a generic minimax
problem $f(x,y)$, the updates for the extragradient method take the form
\begin{equation}
  \tag{EG}\label{eq:eg}
\begin{aligned}
  &x\ind{i+1}\gets{}\proj_{\cX}\prn*{x\ind{i}-\eta\grad_xf(x\ind{i+1/2},
  y\ind{i+1/2})},\quad\text{and}\quad
  y\ind{i+1}\gets{}\proj_{\cY}\prn*{y\ind{i}+\eta\grad_yf(x\ind{i+1/2},
  y\ind{i+1/2})},\\
  &\text{where}\quad x\ind{i+1/2}\gets{}\proj_{\cX}\prn*{x\ind{i}-\eta\grad_xf(x\ind{i},
  y\ind{i})},\quad\text{and}\quad y\ind{i+1/2}\gets{}\proj_{\cY}\prn*{y\ind{i}+\eta\grad_yf(x\ind{i},
  y\ind{i})}. 
\end{aligned}
\end{equation}
In the remainder of this section we show that while the extragradient
method appears to succeed in simple two-player zero-sum stochastic
games experimentally, establishing last-iterate convergence formally likely requires new tools.  We conclude with an
open problem. 

As a running example, we consider von Neumann's
\emph{ratio} game \citep{neumann1945model}, a very simple stochastic game given by
\begin{equation}
  \label{eq:ratio_game}
  V(x,y) = \frac{\tri*{x,Ry}}{\tri*{x,Sy}},
\end{equation}
where $x\in\Delta(\cA)$, $y\in\Delta(\cB)$,
 $R\in\bbR^{A\times{}B}$, and $S\in\bbR_{+}^{A\times{}B}$, with $\tri*{x,Sy}\geq\zeta$
for all $x\in\Delta(\cA)$, $y\in\Delta(\cB)$. The expression \pref{eq:ratio_game} can be interpreted as the value $V(\pi_x,\pi_y)$ for a stochastic game with a single state, where the immediate reward
for selecting actions $(a,b)$ is $R_{a,b}$, the probability of
stopping in each round is $S_{a,b}$, and both players use the direct parameterization.\footnote{Since there is
  a single state, we drop the dependence on the initial state
  distribution.} Even for this simple game, with exact gradients, we
know of no algorithms with last iterate guarantees.

\begin{figure}[t]
   \subfigure[MVI heatmap]{\includegraphics[scale=0.245]{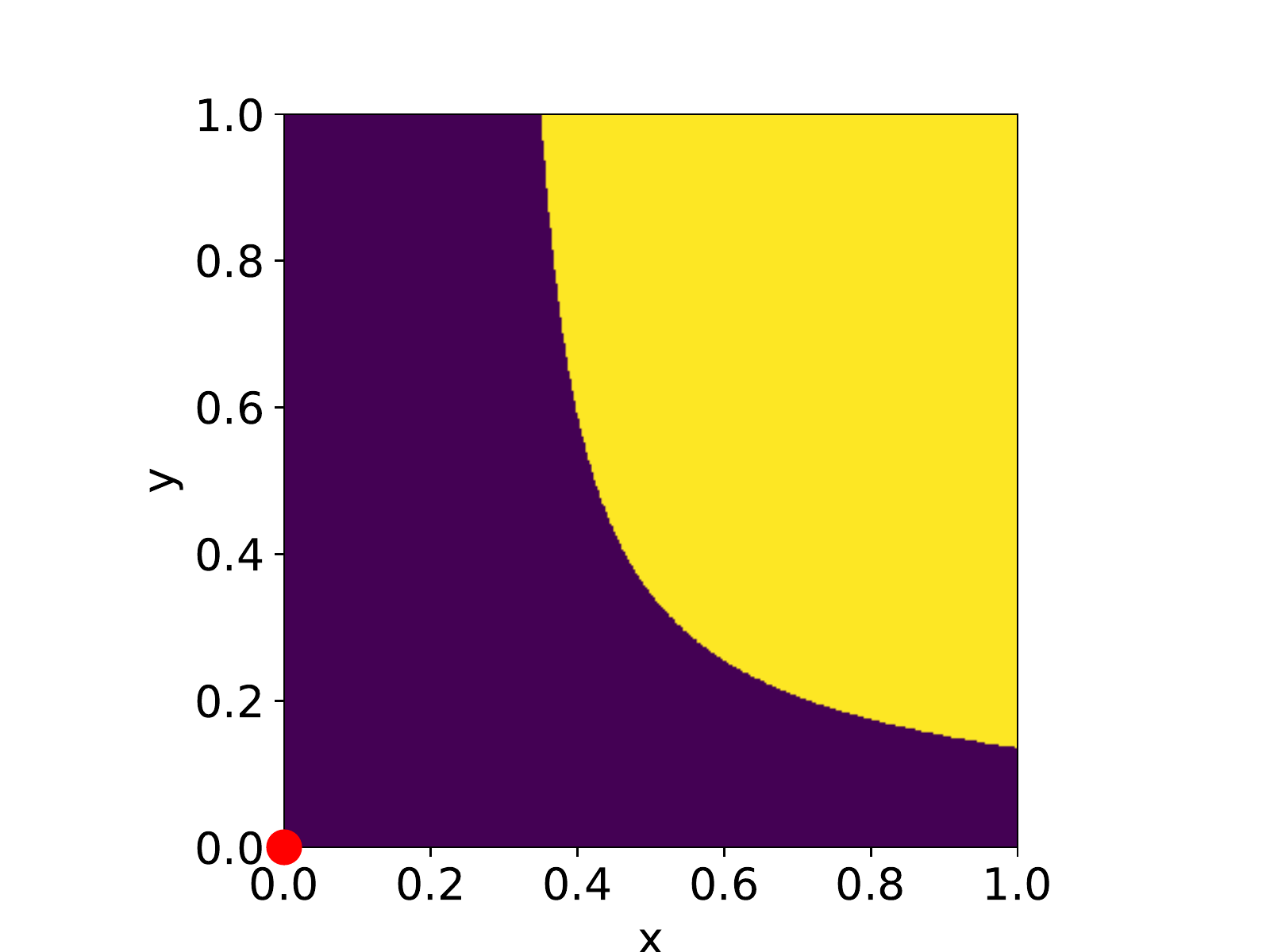}\label{fig:mvi_easy}}
 \subfigure[Convergence of EG]{\includegraphics[scale=0.245]{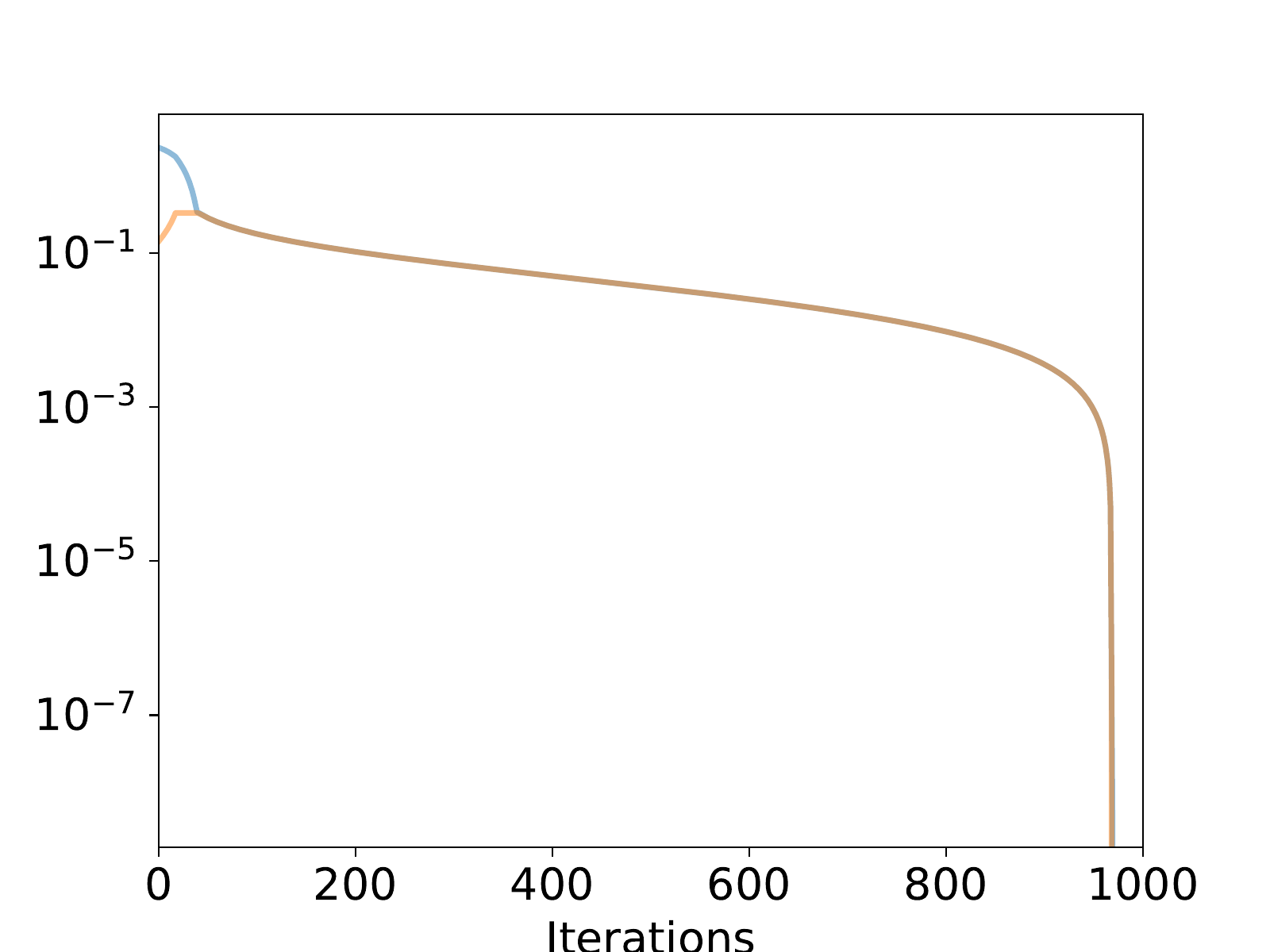}\label{fig:convergence_easy}}
 \centering \subfigure[MVI heatmap]{\includegraphics[scale=0.245]{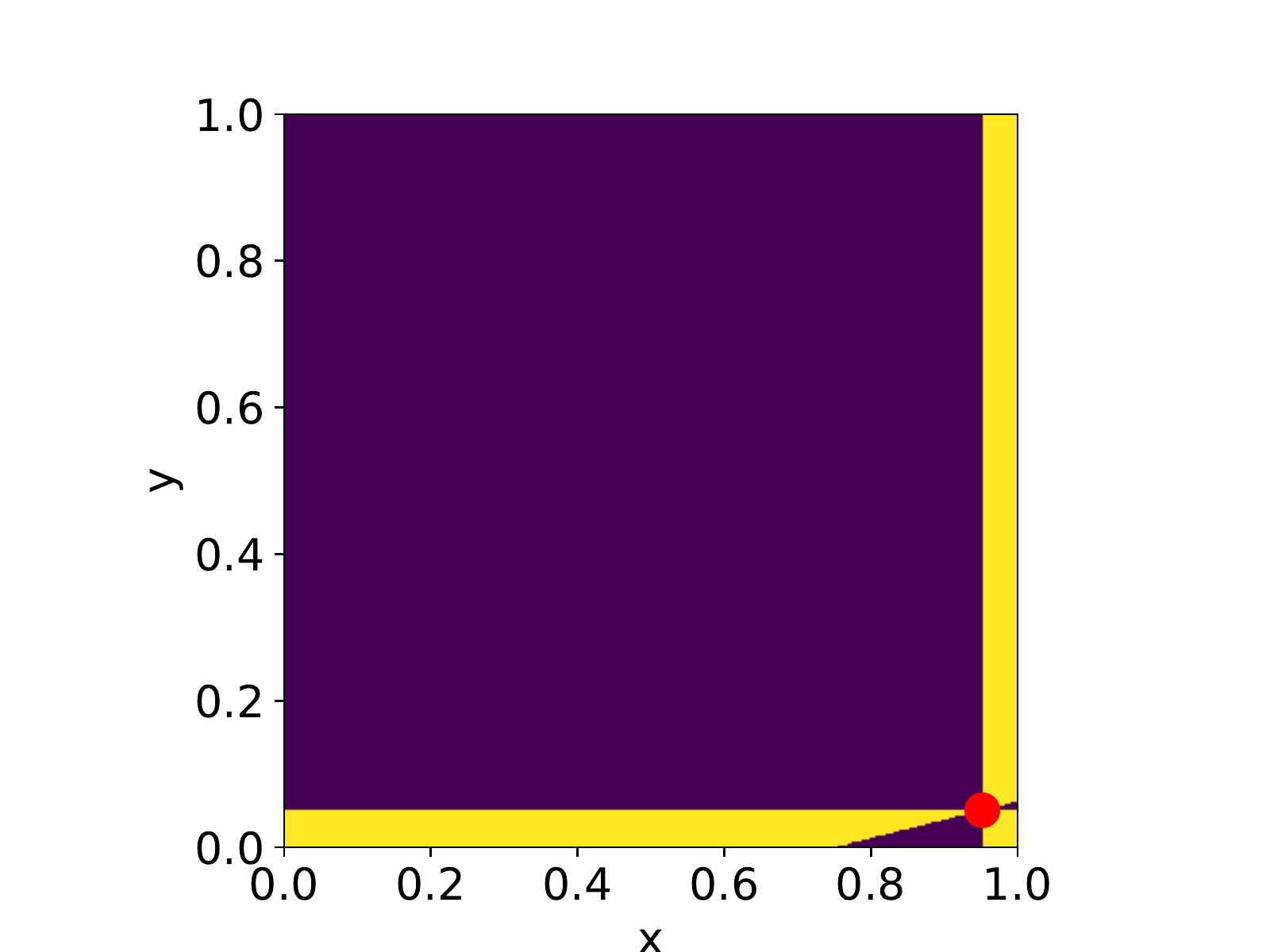}\label{fig:mvi_hard}}
 \subfigure[Convergence of
 EG]{\includegraphics[scale=0.245]{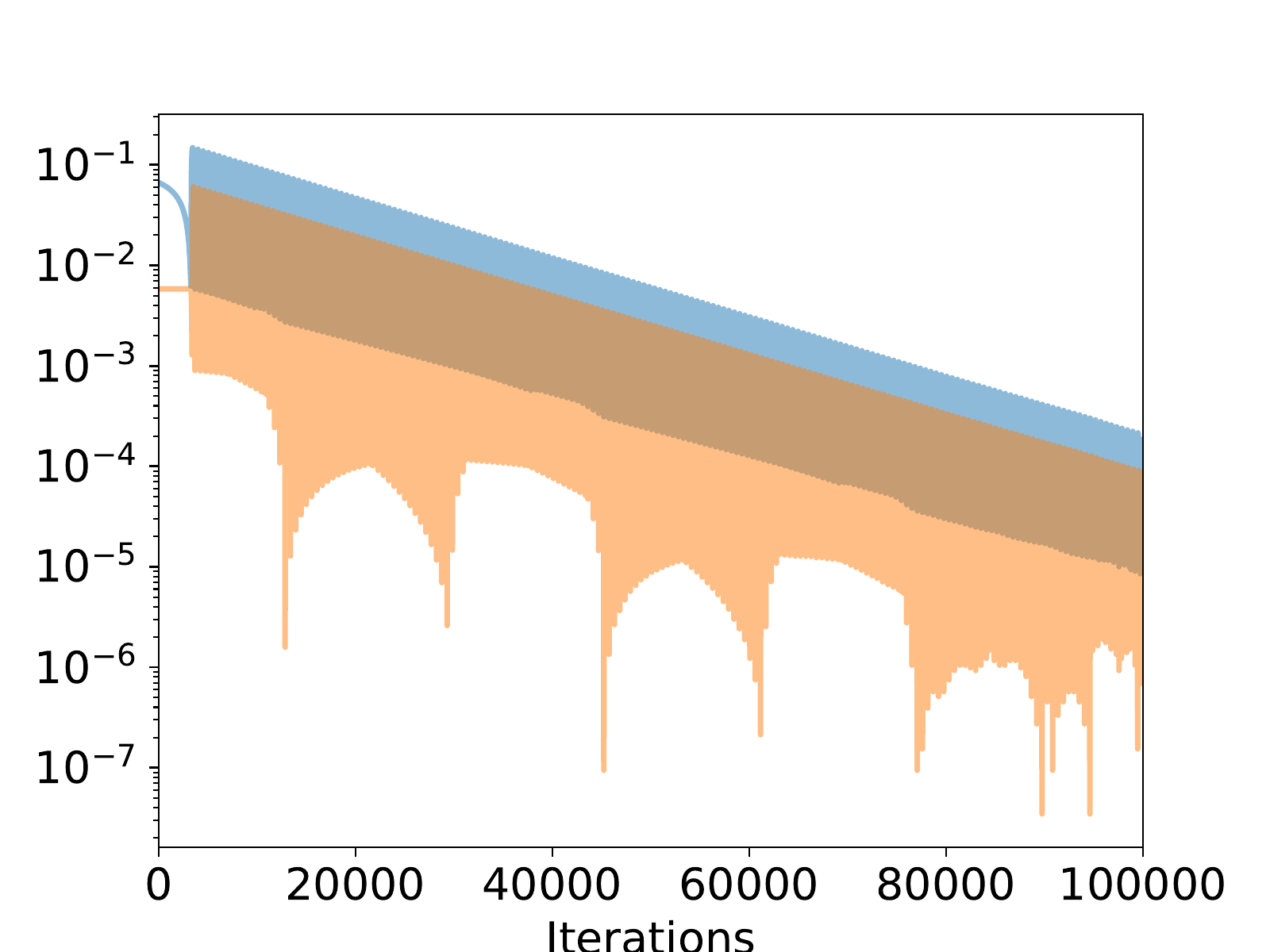} \label{fig:convergence_hard}}
 \caption{Figures (a) and (b) display plots for one ratio game, and Figures (c) and (d) display plots for another; the games' matrices are specified in \pref{app:experimental-details}. Figures (a) and (c) plot the quantity ${\rm sign}(\lng F(z),
   z - z^* \rng)$ for $z \in \Delta^2 \times \Delta^2$, parameterized as $z := (x,1-x,y,1-y)$; yellow denotes
   negative and purple denotes positive. The red dot denotes the
   equilibrium $z^*$. Figures (b) and (d)
   plot convergence of extragradient with learning rate 0.01, %
   initialized at $z_0 := (1,0,1,0)$; note that $z_0$ is inside the region in which the MVI does not hold for each problem. The blue line plots the primal-dual gap $\max_{y'} V(x\^i,y') - \min_{x'} V(x',y\^i)$ and the orange line plots the primal gap $\max_{y'} V(x\^i,y') - V(x^*, y^*)$.} 
\end{figure}

\paragraph{On the MVI condition.}
For nonconvex-nonconcave minimax problems, the only general tool we
are aware of for establishing last-iterate convergence for the 
extragradient method and its relatives is the \emph{Minty Variational
Inequality} (MVI) property
\citep{facchinei2007finite,lin2018solving,mertikopoulos2018stochastic,mertikopoulos2019optimistic,gidel2019variational}. For
$z=(x,y)$ and $F(z) \ldef (\grad_xf(x,y),-\grad_yf(x,y))$, the MVI
property requires that there exists a point $\zstar\in\cZ\ldef\cX\times\cY$
such that
\begin{equation}
  \tag{MVI}\label{eq:mvi}
  \tri*{F(z),z-\zstar}\geq{}0\quad\forall{}z\in\cZ.
\end{equation}
For general minimax problems, the MVI property is typically applied with $\zstar$ as a Nash equilibrium \citep{mertikopoulos2019optimistic}.
We show that this condition fails in stochastic games, even for the
simple ratio game in \pref{eq:AS}
\begin{proposition}
  \label{prop:mvi_counterex}
  Fix $\ep,s\in (0,1)$ with $\eps<\frac{1-s}{2s}$. Suppose we take
\begin{equation}
  \label{eq:AS}
R = \matx{
  -1 & \ep \\
  -\ep & 0
}, \quad\text{and}\quad S = \matx{s & s \\ 1 & 1}.
\end{equation}
Then the ratio game defined by \pref{eq:AS} has the following
properties: (1) there is a unique Nash equilibrium $z^* = (z^*, y^*)$ given by $x^* = y^* =
(0,1)$, (2) $\zeta\geq{}s$, (3) there exists $z = (x,y) \in \Delta(\cA)\times \Delta(\cB)$ so that $
\lng F(z), z - z^* \rng < 0$.\footnote{In fact, for this example the
  MVI property fails for all choices of $\zstar$, not just the Nash equilibrium.}
\end{proposition}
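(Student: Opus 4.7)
The plan is to exploit the extremely low dimensionality of the ratio game and reduce everything to scalar calculus. Writing $x = (p,1-p)$ and $y = (q,1-q)$ with $p,q \in [0,1]$, direct substitution yields
\[
N(p,q) \ldef \lng x, Ry \rng = p\ep - pq - \ep q, \qquad D(p,q) \ldef \lng x, Sy \rng = 1 - p(1-s),
\]
so that $V(x,y) = N(p,q)/D(p,q)$ with $D(p,q) \geq s > 0$ everywhere on the simplex. All three claims will follow by plugging these closed forms into the relevant quantities.

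For the uniqueness-of-Nash claim, I would first note that since $D$ is independent of $q$,
\[
\frac{\partial V}{\partial q}(p,q) = \frac{-p - \ep}{D(p,q)} < 0
\]
for every $(p,q) \in [0,1]^2$ (using $\ep > 0$). Hence the \maxplayer's \emph{unique} best response against any $x$ is $q=0$, i.e.\ $y=(0,1)$. Substituting, $V(x,(0,1)) = p\ep/(1 - p(1-s))$ is strictly increasing in $p$, so the \minplayer's unique best response is $p=0$. This simultaneously establishes that $z^* = ((0,1),(0,1))$ is the unique Nash. Claim (2) is immediate from the single-state interpretation: the stopping probability when actions $(a,b)$ are played equals $S_{a,b}$, so $\zeta = \min_{a,b} S_{a,b} = s$.

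For the MVI violation, I would test the corner $z = ((1,0),(1,0))$, which gives $z - z^* = (1,-1,1,-1)$, and therefore
\[
\lng F(z), z - z^* \rng = \bigl([\grad_x V]_1 - [\grad_x V]_2\bigr) - \bigl([\grad_y V]_1 - [\grad_y V]_2\bigr).
\]
Applying the quotient rule and exploiting the structural fact that every row of $S$ is constant (so $x^\trn S = (D,D)$), the $y$-difference simplifies to $-(p+\ep)/D$ (matching $\partial V/\partial q$), and the $x$-difference simplifies to $((\ep-q)D + (1-s)N)/D^2$. Plugging in $p=q=1$, $N=-1$, $D=s$, the entire inner product collapses to $(s(1+2\ep) - 1)/s^2$, which is strictly negative exactly when $\ep < (1-s)/(2s)$, i.e.\ precisely under the stated hypothesis.

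Every step is elementary; the only care required is applying the quotient rule correctly and tracking signs along the simplex tangent directions $e_1 - e_2$. The footnote, asserting that the MVI fails against \emph{every} candidate $z^* \in \Delta(\cA) \times \Delta(\cB)$ rather than just the Nash, would be handled by an analogous case analysis using a different corner as witness depending on where $z^*$ lies in the square $[0,1]^2$ of parameters; I would not attempt a single uniform witness.
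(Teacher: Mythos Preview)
Your argument for all three numbered claims is correct and essentially identical to the paper's: both verify the Nash by checking best responses (you via the sign of $\partial V/\partial q$, the paper by explicitly writing $\Phi$ and $\Psi$ at the two corners of the simplex), and both witness the MVI failure at the same corner $z=((1,0),(1,0))$, arriving at the same scalar $(s(1+2\ep)-1)/s^{2}$.

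The one place where the paper does something genuinely different is the footnote. Rather than a case analysis over corners, the paper gives a short general argument: for any $\hat z$ that is not a Nash equilibrium some player can strictly improve, say the \minplayer via $x$ with $V(x,\hat y)<V(\hat x,\hat y)$; expanding this strict inequality gives $\langle x-\hat x,\nabla_x V(\hat x,\hat y)\rangle<0$, and then by continuity there is a point $x_\alpha=(1-\alpha)\hat x+\alpha x$ along the segment with $\langle x_\alpha-\hat x,\nabla_x V(x_\alpha,\hat y)\rangle<0$, yielding the MVI violation at $z=(x_\alpha,\hat y)$. This avoids any case split and works verbatim for any ratio game (indeed any smooth game), whereas your sketched corner-based case analysis would be specific to this $2\times 2$ instance.
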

\pref{fig:mvi_easy} plots the sign of $\tri*{F(z),z-\zstar}$ for the game in 
\pref{eq:AS} as a
function of the players' parameters, which changes based on
whether they belong to one of two regions, and
\pref{fig:convergence_easy} shows that extragradient readily converges
to $\zstar$ in spite of the failure of MVI. While this example satisfies the MVI property locally
around $\zstar$, \pref{fig:mvi_hard} shows a randomly
generated game (\pref{app:experimental-details}) for which the MVI property fails to hold even locally. Nonetheless,
\pref{fig:convergence_hard} shows that extragradient converges
for this example, albeit more slowly, and with oscillations. This
leads to our open problem.

\begin{openproblem}
  \label{op:eg}
  Does the extragradient method with constant learning rate have
  last-iterate convergence for the ratio game \pref{eq:ratio_game} for any fixed $\zeta > 0$?
\end{openproblem}
Additional experiments with \emph{multi-state} games generated at random suggest that the extragradient method has last-iterate convergence for general stochastic games with a positive stopping probability. Proving such a convergence result for extragradient or for relatives such as the optimistic gradient method would be of interest not only because it would guarantee last-iterate convergence, but because it would provide an algorithm that is \emph{strongly independent} in the sense that two-timescale updates are not required.

\neurips{\vspace{-5pt}}
\subsection{Related Work}
\arxiv{Issues of independence in MARL have enjoyed extensive investigation. We refer the reader to \citet{zhang2019multi} for a comprehensive overview and discuss some particularly relevant related work below.

\neurips{
\vspace{0.2cm}
  }
\paragraph{Stochastic games.}
Beginning with their introduction by \citet{shapley_stochastic_1953}, there is a long line of work developing computationally efficient algorithms for multi-agent RL in stochastic games \citep{littman_markov_1994,hu2003nash, bu2008comprehensive}. While centralized, coordinated MARL algorithms such as self-play have recently enjoyed some advances in terms of non-asymptotic guarantees \modified{\cite{brafman2002r,wei2017online,bai2020provable,xie2020learning,zhang2020model}}, independent RL has seen less development, with a few exceptions we discuss below.

A recent line of work \citep{srinivasan2018actor,omidshafiei2019neural,lockhart2019computing} shows that for zero-sum extensive form games (EFG), independent policy gradient methods can be formulated in the language of \emph{counterfactual regret minimization} \citep{zinkevich2008regret}, and uses this observation to derive convergence guarantees. Unfortunately, for the general zero-sum stochastic games we consider, reducing to an EFG results in exponential blowup in size with respect to horizon.

\modified{\citet{arslan2017decentralized} introduce an algorithm for learning stochastic games which can be viewed as a 2-timescale method and show convergence (though without rates) in a somewhat different setting from ours.} \citet{perolat2018actor} provide asymptotic guarantees for an independent two-timescale actor-critic method in zero-sum stochastic games with a ``simultaneous-move multistage''  structure in which each state can only be visited once. Our result is somewhat more general since it works for arbitrary infinite-horizon stochastic games, and is non-asymptotic.

\citet{zhang2019policy,bu2019global} recently gave global convergence results for policy gradient methods in two-player zero-sum linear-quadratic games. These results show that if the \minplayer follows policy gradient updates and the \maxplayer follows the best response at each timestep, the \minplayer will converge to a Nash equilibrium. These results do not satisfy the independence property defined in \pref{sec:independent}, since they follow an inner-loop/outer-loop structure and assume exact access to gradients of the value function. Interestingly, \citet{mazumdar2019policy} show that for general-sum linear-quadratic games, independent policy gradient methods can fail to converge even locally.

\modified{Two concurrent works also develop provable independent learning algorithms for stochastic games. \citet{lee2020linear} show that the optimistic gradient algorithm obtains linear rates in the full-information and finite-horizon (undiscounted) setting, where the transition probability function $P$ is known and we have exact access to gradients. Their rate depends on the constant in a certain restricted secant inequality; this constant can be arbitrarily small even in the setting of matrix games (i.e., a single state, $\zeta = 1$, and fixed $A,B$), which causes the rate to be arbitrarily slow. In a setting very similar to that of this paper, \citet{bai2020near} propose a model-free upper confidence bound-based algorithm, Nash V-learning, which satisfies the independent learning requirement and has near-optimal sample complexity, achieving superior dependence to \pref{thm:main} on the parameters $S,A,B,\zeta$, as well as no dependence on $C_{\MG}$. However, their work has the limitation of only learning non-Markovian policies, whereas the policies learned by 2-timescale SGDA are Markovian (i.e., only depend on the current state).} %

\neurips{
  \vspace{0.2cm}
}
\paragraph{Minimax optimization and (non-monotone) variational inequalities.}
Since the objective $\Vrho(x,y)$ is continuous, a natural approach to minimizing it is to appeal to black-box algorithms for \nonconvex-\nonconcave minimization, and more broadly non-montone variational inequalities. In particular, the gradient dominance condition implies that all first-order stationary points are Nash equilibria. Unfortunately, compared to the single-player setting, where many algorithms such as gradient descent find first-order stationary points for arbitrary smooth, \nonconvex functions, existing algorithms for non-monotone variational inequalities all require additional assumptions that are not satisfied in our setting. \citet{mertikopoulos2019optimistic} give convergence guarantees for non-monotone variational inequalities satisfying the so-called MVI property, which we show fails even for single-state zero-sum stochastic games (\pref{sec:last_iterate}). \citet{yang2020global} give an alternating gradient descent algorithm which succeeds for \nonconvex-\nonconcave games under a two-sided \pllong condition, but this condition (which leads to linear convergence) is also not satisfied in our setting. Another complementary line of work develops algorithms for nonconvex-concave problems \citep{rafique2018non,thekumparampil2019efficient,lu2019hybrid,nouiehed2019solving,kong2019accelerated,lin2019gradient}.
\neurips{
  \vspace{0.65cm}
  }

}
\neurips{
While we have already discussed related work most closely related to
our results, we refer the reader to \pref{app:related_work} for a more
extensive survey, both from the MARL and minimax perspective.
}
\neurips{\vspace{-5pt}}
\subsection{Future Directions}
\label{sec:discussion}
We presented the first independent policy gradient
algorithms for competitive reinforcement learning in zero-sum
stochastic games. We hope our results will serve as a starting
point for developing a more complete theory for independent
reinforcement learning in competitive RL and multi-agent reinforcement
learning. Beyond \pref{op:eg}, there are a number of questions raised by
our work.

  \modified{\citet{efroni2020optimistic} have recently shown how to improve the convergence rates for policy gradient algorithms in the single-agent setting by incorporating optimism. Finding a way to use similar techniques in the multi-agent setting under the independent learning requirement could be another promising direction for future work.}

  Many games of interest are not zero-sum, and may involve more
  than two players or be cooperative in nature. It would be useful to
  extend our results to these settings, albeit likely for weaker
  solution concepts, and to derive a tighter understanding of the
  optimization geometry for these settings.

  On the technical side, there are a number of immediate technical
  extensions of our results which may be useful to pursue, including (1) extending to linear function approximation, (2) extending to
  other policy parameterizations such as soft-max, and (3)
  actor-critic and natural policy gradient-based variants
  \citep{agarwal2019optimality}.

\dfcomment{Make sure references below are not out of date (eg, arxiv versions where the conference version has already appeared)}\noah{done}

\neurips{\newpage}

\neurips{
\section*{Broader Impact}
This is a theoretical paper, and we expect that the immediate ethical and
societal consequences of our results will be limited. However, we
believe that reinforcement learning more broadly will have
significant impact on society. There is much potential for benefits to
humanity in application domains including medicine and personalized
education. There is also much potential for harm---for example, while
reinforcement learning has great promise for self-driving cars and
robotic systems, deploying methods that are not safe and reliable in
these areas could lead to serious societal and economic consequences. We
hope that research into the foundations of reinforcement learning will lead to development of algorithms with better safety
and reliability.
}

\neurips{
  \begin{ack}
    C.D.~is supported by NSF Awards IIS-1741137, CCF-1617730 and CCF-1901292, by a Simons Investigator Award, and by the DOE PhILMs project (No. DE-AC05-76RL01830).   D.F. acknowledges the support of NSF TRIPODS grant \#1740751.
     N.G.~is supported by a Fannie \& John Hertz Foundation Fellowship and an NSF Graduate Fellowship.
  \end{ack}

}

\bibliography{refs,RL,minimax_opt}
\arxiv{\newpage}
\appendix

\neurips{
  \newpage
   \section{Related work}
\label{app:related_work}

}

\section{Proofs from Section \ref*{sec:algorithms}}
\label{app:algorithms}
\subsection{Additional Notation}
\paragraph{Q-functions and advantage functions.}
For policies $\pi_1,\pi_2$, we let $Q^{\pi_1,\pi_2}(s,a,b)$ denote the {\it $Q$-value function}:
$$
Q^{\pi_1,\pi_2}(s,a,b) := \E_{\pi_1,\pi_2} \left[ \sum_{t=0}^T R(s_t, a_t, b_t) | s_0 = s, a_0 = a, b_0 = b \right],
$$
and we let $A^{\pi_1,\pi_2}(s,a,b) = Q^{\pi_1,\pi_2}(s,a,b) - V_s(\pi_1,\pi_2)$ denote the \emph{advantage function}.

Throughout this section we abbreviate $\Vrho(x,y)=\Vrho(\pi_x,\pi_y)$, where $\pi_x$ and $\pi_y$ use the $\veps$-greedy direct parameterization in \pref{ass:greedy}.

\subsection{Full Version of Theorem \ref*{thm:main} and Proof}
\newcommand{\Deltaphi}{\Delta_{\Phi}}
The full version of \pref{thm:main} is as follows.
\begin{thmmod}{thm:main}{a}
  \label{thm:main_full}
Let $\eps>0$ be given. Suppose both players follow
  the independent policy gradient scheme \pref{eq:pg} with the parametrization in \pref{ass:greedy}. If the
  learning rates satisfy $\eta_x= \Theta \left( \frac{\ep_0^{10.5} \zeta^{44.5}}{C_\MG^{15.5}(A \vee B)^{9.75} S^{0.75}}\right)$ and
  $\eta_y = \Theta \left( \frac{\ep^6\zeta^{27}}{C_\MG^9(A \vee B)^6 \sqrt{S}}\right)$ and $\gdx = \Theta \left( \frac{\zeta^3 \cdot \ep}{\sqrt{S} \sqrt{A \vee B} C_\MG} \right)$ and $\gdy = \Theta \left( \frac{\zeta^8 \ep^2}{C_\MG^3 (A \vee B) \sqrt{S}}\right)$, then we are guaranteed that
  \begin{align}
    \label{eq:main}
    \textstyle    \En\brk*{\frac{1}{N}\sum_{i=1}^{N}\max_{\pi_2}V_{\rho}(\pi_{x\ind{i}},\pi_2)} - \min_{\pi_1}\max_{\pi_2}V_{\rho}(\pi_1,\pi_2) \leq{} \eps
  \end{align}
  after $N = O \left( \frac{(A \vee B)^{10.75} S^{1.25} C_\MG^{17.5}}{\ep^{12.5}\zeta^{48.5}}\right)$ episodes.
\end{thmmod}
\dfcomment{it would be nice to make the eqs in the proof below fit in the margins}\noah{fixed}
\begin{proof}[\pfref{thm:main_full}]
  This result is essentially an immediate consequence of
  \pref{thm:sgda}. As a first step, we observe that 
  $D_{\cX},D_{\cY}\leq{}\sqrt{S}$. %
Next, we observe that from \pref{lem:mg-smoothness}, both players have
$\ls\ldef{}\frac{4 A\vee{}B}{\zeta^3}$-jointly-Lipschitz gradients:
  \begin{align*}
\left\| \grad_x V_{\rho}(x,y) - \grad_x V_{\rho}(x', y') \right\|_2 \leq \frac{4(1-\zeta) A}{\zeta^3} \| (x,y) - (x',y') \|_2, \\
\left\| \grad_y V_{\rho}(x,y) - \grad_y V_{\rho}(x', y') \right\|_2 \leq \frac{4(1-\zeta) B}{\zeta^3} \| (x,y) - (x',y') \|_2. 
  \end{align*}
  Similarly, by \pref{prop:gradients}, the function value $V_{\rho}(x,y)$ is $L\ldef{}\frac{2\sqrt{A\vee{}B}}{\zeta^2}$-Lipschitz:
  \begin{align*}
    \nrm*{\grad{}_x\Vrho(x,y)}\leq{}\frac{\sqrt{A}}{\zeta^{2}},\quad\text{and}\quad \nrm*{\grad{}_y\Vrho(x,y)}\leq{}\frac{\sqrt{B}}{\zeta^{2}}.
  \end{align*}
  Note that $L/\ell \leq 1$ since $A\wedge B\wedge 1/\zeta \geq 1$. 
  \pref{lem:reinforce_variance} guarantees that both players have
  bounded variance:
\begin{equation}  \En_{\pi_x,\pi_y}\nrm*{\wh{\grad}_x-\grad{}_x\Vrho(x,y)}^{2}\leq{}24\frac{A^{2}}{\veps_x\zeta^{4}},\quad\text{and}\quad
    \En_{\pi_x,\pi_y}\nrm*{\wh{\grad}_y-\grad{}_y\Vrho(x,y)}^{2}\leq{}24\frac{B^{2}}{\veps_y\zeta^{4}}.
  \end{equation}
  \pref{lem:gd_greedy_full} guarantees that the gradient domination conditions \ref{it:y-pl} and \ref{it:x-pl} of  \pref{asm:lip-smooth-f} are satisfied with $\mu_\sy = \mu_\sx = \zeta/C_\MG$, and additive components $\gdx, \gdy$ equal to $\frac{2\gdx}{\zeta^2}$ and $\frac{2\gdy}{\zeta^2}$, respectively:
    \begin{align*}
     \frac{\zeta}{C_\MG} \left( \Vrho(x,y) - \min_{x'} \Vrho(x',y)\right) - \frac{2\gdx}{\zeta^2}
    &  \leq{}
\max_{\bar{x}\in\Delta(\cA)^{\abs*{\cS}}}\tri*{\grad_{x}\Vrho(x,y),x-\bar{x}},
  \end{align*}
  and
  \begin{align*}
   \frac{\zeta}{C_\MG} \left( \max_{y'} \Vrho(x,y') - \Vrho(x,y)\right) - \frac{2\gdy}{\zeta^2}
  &  \leq{}  \max_{\bar{y}\in\Delta(\cB)^{\abs*{\cS}}}\tri*{\grad_{y}\Vrho(x,y),\bar{y}-y}.
  \end{align*}
  Note that $L/\ell < 1$. 
  By \pref{thm:sgda}, for a desired accuracy level $\ep > 0$, if we set:
 {\begin{align}
      \label{eq:prefinal-etay}
  \eta_\sy &= \Theta \left( \frac{\ep^4 \zeta^{15}\gdy}{C_\MG^2(A \vee B)^5} \right) =\Theta \left( \frac{\ep^4 (\zeta/C_\MG)^2}{((A\vee B)/\zeta^3)^3 ((A\vee B)/\zeta^4 + B^2 / (\gdy\zeta^4))}\right),\\
  \label{eq:prefinal-etax}
           \eta_\sx &= \Theta \left(\frac{\ep^8 \zeta^{27}\gdy \sqrt{\gdx}}{C_\MG^4(A \vee B)^{8.5}} \right) \\
           &= \Theta \left( \frac{\ep^8 (\zeta/C_\MG)^4}{((A\vee B)/\zeta^3)^5 \frac{\sqrt{A\vee B}}{\zeta^2} \cdot ((A\vee B)/\zeta^4 + B^2 / (\gdy\zeta^4)) \sqrt{(A\vee B)/\zeta^4 + A^2 / (\gdx \zeta^4)}} \right),\nonumber\\
        \label{eq:prefinal-N}
N              & \geq \Omega \left( \frac{\sqrt{S} \cdot \frac{A \vee B}{\zeta^2}}{\ep^2 \eta_\sx}\right),
\end{align}}
we have that
\begin{align}
  \neurips{&~~~~~} \frac{1}{N+1} \sum_{i=0}^N\max_{y} V_\rho(x\^i, y) - \min_{x} \max_y V_\rho(x,y) \notag\neurips{\\}
  & \leq O \left( \frac{\ep C_\MG}{\zeta} + \frac{\gdx C_\MG}{\zeta^3} + \frac{\sqrt{(A \vee B)/\zeta^3} \sqrt{\gdy/\zeta^2} (C_\MG/\zeta)}{\sqrt{\zeta/C_\MG}}
                                                                         \right)\notag
  \\
  \label{eq:gap-ub}
  & \leq O \left( \frac{\ep C_\MG}{\zeta} + \frac{\gdx C_\MG}{\zeta^3} + \frac{\sqrt{\gdy}C_\MG^{1.5}\sqrt{A\vee B}}{\zeta^4} \right).
\end{align}
Recall that $x \mapsto \pi_x, y \mapsto \pi_y$ denote the $\gdx$- and $\gdy$-greedy parametrizations, respectively (where $\gdx, \gdy$ are as in the statement of \pref{thm:main_full}). Then for any $\pi_1 : \MS \ra \Delta(\MA)$ (respectively, $\pi_2 : \MS \ra \Delta(\MB)$), there is some $x \in \Delta(\MA)^S$  (respectively, $y \in \Delta(\MB)^s$) so that $\|\pi_1 - \pi_x\| \leq 2\sqrt{S}\gdx$ (respectively, $\|\pi_2 - \pi_y\| \leq 2\sqrt{S}\gdy$) for each $s \in \MS, a \in \MA$ (respectively, $b \in \MB$). Moreover, recall that \pref{prop:gradients} shows that the function $(x,y) \mapsto \Vrho(x,y)$ is $L$-Lipschitz for {\it any} $\varepsilon$-greedy parametrization, in particular for the one given by $\gdx = \gdy = 0$. 
Thus, 
\begin{align*}
  & \left| \left(   \frac{1}{N+1} \sum_{i=0}^N\max_{y} V_\rho(x\^i, y) - \min_{x} \max_y V_\rho(x,y) \right) - \left(   \frac{1}{N+1} \sum_{i=0}^N\max_{\pi_2} V_\rho(\pi_{x\^i}, \pi_2) - \min_{\pi_1} \max_{\pi_2} V_\rho(\pi_1, \pi_2) \right)\right|\\
  & \leq O\left(\frac{\sqrt{S} (\gdx \vee \gdy)\sqrt{A \vee B}}{\zeta^2}\right).
\end{align*}
From \pref{eq:gap-ub} it now follows that
{\small$$
 \frac{1}{N+1} \sum_{i=0}^N\max_{\pi_2} V_\rho(\pi_{x\^i}, \pi_2) - \min_{\pi_1} \max_{\pi_2} V_\rho(\pi_1, \pi_2) \leq O \left( \frac{\ep C_\MG}{\zeta} + \frac{\gdx C_\MG}{\zeta^3} + \frac{\sqrt{\gdy}C_\MG^{1.5}\sqrt{A\vee B}}{\zeta^4} + \frac{\sqrt{S} (\gdx \vee \gdy)\sqrt{A \vee B}}{\zeta^2}\right).
 $$}
 To achieve a desired accuracy level $\ep_0$, it follows from \pref{eq:prefinal-etay}, \pref{eq:prefinal-etax}, and \pref{eq:prefinal-N} that if we set
 $$
\gdx = \Theta \left( \frac{\zeta^3 \cdot \ep_0}{\sqrt{S} \sqrt{A \vee B} C_\MG} \right), \qquad \gdy = \Theta \left( \frac{\zeta^8 \ep_0^2}{C_\MG^3 (A \vee B) \sqrt{S}}\right), \qquad \ep \leq O \left( \frac{\zeta \ep_0}{C_\MG} \right),
$$
and
\begin{align}
  \eta_\sy & = \Theta\left( \frac{\ep_0^6\zeta^{27}}{C_\MG^9(A \vee B)^6 \sqrt{S}}\right) =\Theta \left( \frac{\left(\zeta \ep_0 / C_\MG\right)^4 \zeta^{15} \cdot  \frac{\zeta^8 \ep_0^2}{C_\MG^3 (A \vee B) \sqrt{S}}}{C_\MG^2(A \vee B)^5} \right),\nonumber\\
  \eta_\sx &= \Theta \left( \frac{\ep_0^{10.5} \zeta^{44.5}}{C_\MG^{15.5}(A \vee B)^{9.75} S^{0.75}}\right) =  \Theta \left(\frac{(\zeta \ep_0 / C_\MG)^8 \zeta^{27}\left(\frac{\zeta^8 \ep_0^2}{C_\MG^3 (A \vee B) \sqrt{S}}\right) \sqrt{\frac{\zeta^3 \cdot \ep_0}{\sqrt{S} \sqrt{A \vee B} C_\MG}}}{C_\MG^4(A \vee B)^{8.5}} \right),\nonumber\\
  N &= \Omega \left( \frac{(A \vee B)^{10.75} S^{1.25} C_\MG^{17.5}}{\ep_0^{12.5}\zeta^{48.5}}\right) \geq \Omega \left( \frac{\sqrt{S} \cdot \frac{A \vee B}{\zeta^2}}{(\zeta \ep_0 / C_\MG)^2 \eta_\sx}\right)\nonumber,
\end{align}
then we have
$$
 \frac{1}{N+1} \sum_{i=0}^N\max_{\pi_2} V_\rho(\pi_{x\^i}, \pi_2) - \min_{\pi_1} \max_{\pi_2} V_\rho(\pi_1, \pi_2) \leq \ep_0.
$$
\end{proof}

\subsection{Proofs for Additional Results}
\begin{proof}[\pfref{prop:concentratability}]
  We define the following game $\MG$ with state space $\MS :=
  \{1,2,3,4,5\}$, action spaces $\MA = \MB = \{0,1\}$, and any stopping probability $\zeta > 0$.

  The transitions and rewards are as follows:
  \begin{itemize}
  \item In state 1, with probability $\zeta$, the game stops. Conditioned on not stopping:
    \begin{itemize}
    \item If actions $(0,0)$ are taken, the game moves to state 2.
    \item If actions $(0,1)$ are taken, the game moves to state 3.
    \item If actions $(1,0)$ are taken, the game moves to state 4.
    \item If actions $(1,1)$ are taken, the game moves to state 5.
    \end{itemize}

    Both players receive 0 reward in state 1.
  \item In state $2 \leq i \leq 5$, the game stops with probability
    $\zeta$, and otherwise always moves back to state 1. Furthermore, player 1 receives reward $i-1$ and player 2 receives reward $1-i$ (regardles of their actions).
  \end{itemize}
  Let the initial state distribution $\rho = \delta_{1}$ be defined by $$\rho(1) = \rho(2) = \rho(4) = \rho(5) = 1/4,\quad\text{and}\quad \rho(3) = 0.$$

  Clearly, the value $V_\rho({\pi_1, \pi_2})$ of the game depends only on the policies at state 1, i.e., $\pi_1(\cdot | 1), \pi_2(\cdot | 1)$. If $\pi_1(0|1) = \pi_2(1|1) = 1$, then certainly $d^{\pi_1, \pi_2}_\rho(3) > 0$, and therefore $\max_{\pi_1, \pi_2}\nrm*{\frac{d_\rho^{\pi_1,\pi_2}}{\rho}}_\infty$ is infinite.

On the other hand, let us now consider the best-response policies:
\begin{itemize}
\item For any policy $\pi_1$ of the \minplayer, all policies $\pi_2 \in \Pi_2^*(\pi_1)$ of the \maxplayer satisfy $\pi_2(0|1) = 1$. This follows since player 2 prefers state 2 to state 3, and state 4 to state 5.

  In particular, for any pair $(\pi_1, \pi_2)$ with $\pi_2 \in \Pi_2^*(\pi_1)$, we have that $d_\rho^{\pi_1, \pi_2}(3) =  0$. %
\item For any policy $\pi_2$ of the \maxplayer, all policies $\pi_1 \in \Pi_1^*(\pi_1)$ of the \minplayer satisfy $\pi_1(1|0) = 1$. This follows since player 1 prefers state 4 to state 2, and state 5 to state 3.

  In particular, for any pair $(\pi_1, \pi_2)$ with $\pi_1 \in \Pi_1^*(\pi_2)$, we have that $d_\rho^{\pi_1, \pi_2}(3) = 0$. %
\end{itemize}
It follows that
$$
C_\MG = \max\crl*{\max_{\pi_2}\min_{\pi_1\in\Pistar_1(\pi_2)}\nrm*{\frac{d^{\pi_1,\pi_2}_{\rho}}{\rho}}_{\infty}, \max_{\pi_1}\min_{\pi_2\in\Pistar_2(\pi_1)}\nrm*{\frac{d^{\pi_1,\pi_2}_{\rho}}{\rho}}_{\infty}} < \infty,
$$
which completes the proof of the proposition.
\end{proof}

\subsection{Supporting Lemmas}

\begin{lemma}
  \label{lem:reinforce_variance}
  Suppose that players follow that $\veps$-greedy direct
  parameterization in \pref{ass:greedy} with parameters $\veps_x$ and
  $\veps_y$. Given parameters $x\in\Delta(\cA)^{\abs*{\cS}},
  y\in\Delta(\cA)^{\abs*{\cS}}$ suppose the players estimate their
  gradients using the REINFORCE estimator:
  \begin{equation}
    \wh{\grad}_x \ldef{} R_T\sum_{t=0}^{T}\grad\log\pi_x(a_t\mid{}s_t),\quad
    \wh{\grad}_y \ldef{} R_T\sum_{t=0}^{T}\grad\log\pi_y(b_t\mid{}s_t),
  \end{equation}
  under trajectories obtained by following $\pi_x$ and $\pi_y$.
  Then we have
  \begin{equation}
    \label{eq:reinforce_unbiased}
\E_{\pi_x, \pi_y} \wh{\grad}_x  = \grad_x V_\rho(\pi_x, \pi_y), \qquad \E_{\pi_x, \pi_y} \wh{\grad}_y = \grad_y V_\rho(\pi_x, \pi_y),
\end{equation}
and
  \begin{equation}
    \label{eq:reinforce_variance}
\En_{\pi_x,\pi_y}\nrm*{\wh{\grad}_x-\grad{}_x\Vrho(x,y)}^{2}\leq{}24\frac{A^{2}}{\veps_x\zeta^{4}},\quad\text{and}\quad
    \En_{\pi_x,\pi_y}\nrm*{\wh{\grad}_y-\grad{}_y\Vrho(x,y)}^{2}\leq{}24\frac{B^{2}}{\veps_y\zeta^{4}}.
  \end{equation}
\end{lemma}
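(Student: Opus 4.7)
The lemma splits into unbiasedness and a variance bound. For unbiasedness, I will invoke the likelihood-ratio (log-derivative) trick: the trajectory density under $(\pi_x,\pi_y)$ factors as $\rho(s_0)\prod_{t\le T}\pi_x(a_t\mid s_t)\pi_y(b_t\mid s_t)\,P(s_{t+1},\mathsf{stop}\mid s_t,a_t,b_t)$, so because the transition and stopping probabilities are independent of $x$, one has $\grad_x \log \pr^{\pi_x,\pi_y}(\tau)=\sum_{t\le T}\grad\log\pi_x(a_t\mid s_t)$. Differentiating $V_\rho=\En[R_T]$ under the expectation (the geometric tail induced by $\zeta>0$ makes all the relevant sums uniformly integrable, so the exchange is legitimate) then yields \pref{eq:reinforce_unbiased}; the derivation for $\wh{\grad}_y$ is symmetric. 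Going forward I focus on $\wh{\grad}_x$, noting that the variance is upper-bounded by $\En\|\wh{\grad}_x\|^2$.

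Let $g_t := \grad\log\pi_x(a_t\mid s_t)$ and $M_t := \mathbbm{1}\{t\le T\}$. The $\veps$-greedy parametrization enforces $\pi_x(a\mid s)\ge \veps_x/A$, so $g_t$ has exactly one nonzero entry, at coordinate $(s_t,a_t)$, of magnitude $(1-\veps_x)/\pi_x(a_t\mid s_t)\le A/\veps_x$. The crucial observation---and the reason the stated bound scales as $\veps_x^{-1}$ rather than $\veps_x^{-2}$---is the \emph{conditional} second-moment identity
\begin{equation*}
  \En[\|g_t\|^2\mid \cF_t] \;=\; (1-\veps_x)^2\sum_{a\in\cA}\frac{1}{\pi_x(a\mid s_t)} \;\le\; \frac{A^2}{\veps_x},
\end{equation*}
where $\cF_t$ denotes the natural filtration up through $s_t$. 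Combining $|R_T|\le T+1$ with Cauchy--Schwarz then gives $\|\wh{\grad}_x\|^2 \le R_T^2\cdot(T+1)\sum_{t=0}^T\|g_t\|^2 \le (T+1)^3\sum_{t=0}^T\|g_t\|^2$.

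The main obstacle is handling the $(T+1)^3$ prefactor without losing the $\veps_x^{-1}$ gain: $g_t$ depends on $a_t$, and the residual horizon $\tau_t := T-t$ also depends on $a_t$ through the downstream dynamics, so the two cannot be decoupled directly. I will write $T+1=(t+1)+\tau_t$ and apply $(a+b)^3 \le 4(a^3+b^3)$ to split each summand into two pieces. For the $(t+1)^3$-piece, $\Pr[M_t=1]\le(1-\zeta)^t$ together with the conditional second-moment bound yields $\En[M_t(t+1)^3\|g_t\|^2]\le (t+1)^3(1-\zeta)^t\cdot A^2/\veps_x$, and $\sum_{t\ge 0}(t+1)^3(1-\zeta)^t = O(1/\zeta^4)$. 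For the $\tau_t^3$-piece the coupling is resolved by the following uniformity: $\En[\tau_t^k\mid s_t,a_t,b_t]\le C_k/\zeta^k$ \emph{independently} of $(a_t,b_t)$, since the tail of a fresh game starting from $s_{t+1}$ is controlled by the global stopping lower bound $\zeta$ and not by the particular prefix that produced $s_{t+1}$. Conditioning first on $(\cF_t,a_t,b_t)$ then pulls out the deterministic $\|g_t\|^2$ and bounds $\En[\tau_t^3\mid\cdot]\le C/\zeta^3$; the tower property together with the conditional moment bound then gives $\En[M_t\tau_t^3\|g_t\|^2]\le (1-\zeta)^t\cdot CA^2/(\veps_x\zeta^3)$, and summing contributes another $O(A^2/(\veps_x\zeta^4))$. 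Adding the two pieces and tracking constants through the Cauchy--Schwarz step and the moment sums yields \pref{eq:reinforce_variance} with prefactor $24$; the bound for $\wh{\grad}_y$ follows with $(A,\veps_x)$ replaced by $(B,\veps_y)$.
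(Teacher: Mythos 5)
Your proposal is correct and follows essentially the same route as the paper: unbiasedness via the likelihood-ratio trick, then bounding the variance by the second moment, using $|R_T|\le T+1$ and Cauchy--Schwarz to reduce to $\E\big[(T+1)^3\sum_t\|\grad\log\pi_x(a_t\mid s_t)\|^2\big]$, cancelling one factor of $\pi_x$ by averaging $a_t\sim\pi_x(\cdot\mid s_t)$ (giving the conditional bound $A^2/\veps_x$), and finishing with geometric moments of $T$ contributing $O(\zeta^{-4})$. The one place you diverge is the coupling between $(T+1)^3$ and $a_t$: the paper simply replaces $\indic\{a_t=a\}/\pi_x^2(a\mid s)$ by $1/\pi_x(a\mid s)$ inside the expectation with $(T+1)^3$ still present—an equality that, read literally, presupposes exactly the decoupling you worry about—whereas you make the argument airtight by splitting $T+1=(t+1)+\tau_t$ and using that $\E[\tau_t^3\mid s_t,a_t,b_t]\lesssim \zeta^{-3}$ uniformly in $(a_t,b_t)$; this is a more careful treatment of a step the paper glosses over. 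The only quibble is the constant: your split via $(a+b)^3\le 4(a^3+b^3)$ naturally yields a prefactor on the order of $48$ rather than the stated $24$ (the paper gets $24$ from $\E[(T+1)^4]\le 24/\zeta^4$ after its direct decoupling), but since this constant only feeds into the polynomial rates it is immaterial.
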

\begin{proof}[\pfref{lem:reinforce_variance}]
  We carry the calculation out for the $x$ player, as the $y$ player
  follows an identical argument. We start by proving that the gradient estimator is unbiased (i.e., \pref{eq:reinforce_unbiased}). Let $\MT$ denote the (infinite) set of all possible trajectories, and for a trajectory $\tau = (s_t, a_t, b_t, r_t)_{0 \leq t \leq T}$, in $\MT$, let $R(\tau) := \sum_{t=0}^T r_t$ denote the total reward associated with $\tau$, and for policies $\pi_1, \pi_2$, let
  $$
  \pr^{\pi_1, \pi_2}(\tau) := \prod_{t=0}^T \pi_1(a_t | s_t) \pi_2(b_t | s_t) P(s_{t+1} | s_t, a_t,b_t)
  $$
  be the probability of realizing $\tau$. (Here, we let $s_{T+1}$ denote the event that the game stops at time $T$) Let $T(\tau)$ denote the last time step of trajectory $\tau$. Then
  \begin{align*}
    & \grad_x V_\rho(x,y) \\
    &= \grad_x \sum_{\tau \in \MT} R(\tau) \pr^{\pi_x, \pi_y}(\tau) \\
    &= \sum_{\tau \in \MT} R(\tau) \grad_x \pr^{\pi_x, \pi_y}(\tau) \\
    &= \sum_{\tau \in \MT} R(\tau) \pr^{\pi_x, \pi_y}(\tau) \grad_x \log \pr^{\pi_x, \pi_y}(\tau) \\
    &= \sum_{\tau \in \MT} R(\tau) \pr^{\pi_x, \pi_y}(\tau) \grad_x \left( \sum_{t=0}^{T(\tau)} \log \pi_x(a_t | s_t) + \log \pi_y(b_t | s_t) \right) \\
    &= \E_{\pi_x, \pi_y} \left[ \left(\sum_{t=0}^T r_t\right) \sum_{t=0}^T \grad_x \log \pi_x(a_t | s_t)\right]\\
    &= \E_{\pi_x, \pi_y} \left[ \wh{\grad}_x \right].
  \end{align*}
  A similar calculation shows that $\E_{\pi_x, \pi_y} \left[ \wh{\grad}_y \right] = \grad_y V_\rho(x,y)$.

  We proceed to bound the variance of the gradient estimator (i.e.,
  establish \pref{eq:reinforce_variance}). Since the gradient estimator is
  unbiased, we have
  \begin{align*}
        \En_{\pi_x,\pi_y}\nrm*{\wh{\grad}_x-\grad{}_x\Vrho(x,y)}^{2}
    \leq{}     \En_{\pi_x,\pi_y}\nrm*{\wh{\grad}_x}^{2}\leq{}     \En_{\pi_x,\pi_y}\nrm*{R_T\sum_{t=0}^{T}\grad\log\pi_x(a_t\mid{}s_t)}^{2}.
  \end{align*}
Next,
we have
  \begin{align*}
    \En_{\pi_x,\pi_y}\nrm*{R_T\sum_{t=0}^{T}\grad\log\pi_x(a_t\mid{}s_t)}^{2}
    &\leq{}\En_{\pi_x,\pi_y}\brk*{(T+1)^2\nrm*{\sum_{t=0}^{T}\grad\log\pi_x(a_t\mid{}s_t)}^{2}}\\
    &\leq{}\En_{\pi_x,\pi_y}\brk*{(T+1)^3\sum_{t=0}^{T}\nrm*{\grad\log\pi_x(a_t\mid{}s_t)}^{2}}\\
    &=\En_{\pi_x,\pi_y}\brk*{(T+1)^3\sum_{t=0}^{T}\sum_{s,a}(1-\veps_x)^2\indic\crl*{s=s_t,a=a_t}\frac{1}{\pi_x^{2}(a\mid{}s)}},\\
&\leq{}\En_{\pi_x,\pi_y}\brk*{(T+1)^3\sum_{t=0}^{T}\sum_{s,a}\indic\crl*{s=s_t,a=a_t}\frac{1}{\pi_x^{2}(a\mid{}s)}},
  \end{align*}
  where the equality is a consequence of the direct parameterization. We further simplify as 
\begin{align*}
  \En_{\pi_x,\pi_y}\brk*{(T+1)^3\sum_{t=0}^{T}\sum_{s,a}\indic\crl*{s=s_t,a=a_t}\frac{1}{\pi_x^{2}(a\mid{}s)}}
  &=
    \sum_{t=0}^{\infty}\sum_{s,a}\En_{\pi_x,\pi_y}\brk*{\indic_{t\leq{}T}\indic\crl*{s=s_t,a=a_t}\frac{1}{\pi_x^{2}(a\mid{}s)}(T+1)^3}\\
  &=
\sum_{t=0}^{\infty}\sum_{s,a}\En_{\pi_x,\pi_y}\brk*{\indic_{t\leq{}T}\indic\crl*{s=s_t,a=a_t}\frac{1}{\pi_x^{2}(a\mid{}s)}(T+1)^3}\\
    &=
      \sum_{t=0}^{\infty}\sum_{s,a}\En_{\pi_x,\pi_y}\brk*{\indic_{t\leq{}T}\indic\crl{s_t=s}\frac{1}{\pi_x(a\mid{}s)}(T+1)^3}\\
  &\leq{}
    \frac{A}{\veps_x}\sum_{t=0}^{\infty}\sum_{s,a}\En_{\pi_x,\pi_y}\brk*{\indic_{t\leq{}T}\indic\crl{s_t=s}(T+1)^3}\\
  &\leq
\frac{A^2}{\veps_x}\En_{\pi_x,\pi_y}\brk*{(T+1)^4}.
\end{align*}
To conclude, we observe that
\[
\En_{\pi_x,\pi_y}\brk*{(T+1)^4} \leq{}
\sum_{t=0}^{\infty}(1-\zeta)^{t}\zeta{}(t+1)^{4}\\
=\frac{\zeta}{1-\zeta}\sum_{t=1}^{\infty}(1-\zeta)^{t}t^{4}\\
\leq{} \frac{24}{\zeta^{4}}.
\]
\end{proof}

Define, for any $s_0 \in \MS$ and policies $\pi_1, \pi_2$,
  $$
\tilde d_{s_0}^{\pi_1, \pi_2}(s) := \sum_{t \geq 0} \pr^{\pi_1, \pi_2} (s_t = s | s_0),
$$
and $\tilde{d}_\rho^{\pi_1, \pi_2}(s) := \E_{s_0 \sim \rho}[\tilde d_{s_0}^{\pi_1, \pi_2}(s)]$ be the {\it un-normalized} state visitation distribution. Also let $\pr^{\pi_1, \pi_2}(\tau | s_0)$ (respectively, $\pr^{\pi_1, \pi_2}(\tau | \rho)$) be the distribution of trajectories $\tau$ given policies $\pi_1, \pi_1$ and initial state $s_0$ (respectively, initial state distribution $\rho$).
\begin{proposition}
  \label{prop:gradients}
In the direct parameterization with $\veps$-greedy
exploration (\pref{ass:greedy}), we have, for all $s \in \MS, a \in \MA, b\in \MB$,
\begin{align*}
  \frac{\partial V_\rho(x,y)}{\partial x_{s,a}} &= (1 - \gdx) \tilde d_\rho^{\pi_x, \pi_y}(s) \E_{b \sim \pi_y(\cdot | s)} \left[ Q^{\pi_x, \pi_y}(s,a,b) \right] \\
  \frac{\partial V_\rho(x,y)}{\partial y_{s,b}} &= (1 - \gdy) \tilde d_\rho^{\pi_x, \pi_y}(s) \E_{a \sim \pi_x(\cdot | s)} [ Q^{\pi_x, \pi_y}(s,a,b)].
\end{align*}
and so it follows that for all $\gdx, \gdy \geq 0$,
\begin{align*}
\left|\frac{\partial V_\rho}{\partial x_{s,a}}(x,y)\right| &\leq  \frac{1}{\zeta} d_\rho^{\pi_x,\pi_y}(s)\left| \E_{b \sim y(\cdot | s)}[Q^{\pi_x,\pi_y}(s,a,b)]\right| \\
\left|\frac{\partial V_\rho}{\partial y_{s,b}}(x,y) \right|&\leq  \frac{1}{\zeta} d_\rho^{\pi_x,\pi_y}(s)\left| \E_{a \sim x(\cdot | s)} [Q^{\pi_x,\pi_y}(s,a,b)]\right|.
\end{align*}
As a consequence, $\nrm*{\grad{}_x\Vrho(x,y)}\leq{}\frac{\sqrt{A}}{\zeta^{2}}$ and $\nrm*{\grad{}_y\Vrho(x,y)}\leq{}\frac{\sqrt{B}}{\zeta^{2}}$.
\end{proposition}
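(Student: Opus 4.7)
The plan is to establish a two-player analogue of the policy gradient theorem by unrolling the Bellman recursion, then derive the stated norm bounds from the resulting formula by substituting crude upper bounds on the visitation distribution and $Q$-function.

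\textbf{Step 1: Bellman recursion and direct differentiation.} First I would write the Bellman identity
\[
V_{s_0}(\pi_x,\pi_y) \;=\; \sum_{a,b} \pi_x(a\mid s_0)\pi_y(b\mid s_0)\Bigl(R(s_0,a,b) + \sum_{s'} P(s'\mid s_0,a,b)\,V_{s'}(\pi_x,\pi_y)\Bigr).
\]
Under the $\veps$-greedy direct parameterization, $\pi_x(a\mid s_0) = (1-\veps_x)x_{s_0,a} + \veps_x/A$, so
\(\partial \pi_x(a'\mid s_0)/\partial x_{s,a} = (1-\veps_x)\indic\{s_0=s, a'=a\}.\)
Differentiating the Bellman identity with respect to $x_{s,a}$ therefore yields a term supported on $s_0=s$ (the ``direct'' term) plus a recursive term where the derivative passes through to $V_{s'}(\pi_x,\pi_y)$. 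Collecting terms using the definition $Q^{\pi_x,\pi_y}(s,a,b) = R(s,a,b) + \sum_{s'}P(s'\mid s,a,b)V_{s'}(\pi_x,\pi_y)$ gives
\[
\frac{\partial V_{s_0}(\pi_x,\pi_y)}{\partial x_{s,a}} \;=\; (1-\veps_x)\indic\{s_0=s\}\E_{b\sim\pi_y(\cdot\mid s)}[Q^{\pi_x,\pi_y}(s,a,b)] \;+\; \sum_{s'} \pr^{\pi_x,\pi_y}(s_1=s'\mid s_0)\frac{\partial V_{s'}(\pi_x,\pi_y)}{\partial x_{s,a}}.
\]

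\textbf{Step 2: Unrolling the recursion.} Iterating this identity and using $\sum_{t\geq 0}\pr^{\pi_x,\pi_y}(s_t=s\mid s_0) = \tilde d_{s_0}^{\pi_x,\pi_y}(s)$ gives
\[
\frac{\partial V_{s_0}(\pi_x,\pi_y)}{\partial x_{s,a}} \;=\; (1-\veps_x)\,\tilde d_{s_0}^{\pi_x,\pi_y}(s)\,\E_{b\sim\pi_y(\cdot\mid s)}[Q^{\pi_x,\pi_y}(s,a,b)].
\]
Taking the expectation over $s_0\sim\rho$ yields the first claimed identity; the analogous calculation for $y_{s,b}$, with the $(1-\veps_y)$ factor coming from differentiating $\pi_y$, yields the second.

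\textbf{Step 3: From the identity to the bounds.} Since the stopping probability is at least $\zeta$ at every state, trajectories have expected length at most $1/\zeta$, so $|Q^{\pi_x,\pi_y}(s,a,b)|\leq 1/\zeta$ and $\sum_s \tilde d_\rho^{\pi_x,\pi_y}(s)\leq 1/\zeta$; moreover $\tilde d_\rho^{\pi_x,\pi_y} = d_\rho^{\pi_x,\pi_y}/\zeta$ by definition. Substituting $(1-\veps_x)\leq 1$ and the $Q$-bound gives the stated pointwise bounds on the partial derivatives. For the Euclidean norm bound I would then compute
\[
\nrm{\grad_x V_\rho(x,y)}^2 \;\leq\; \sum_{s,a}\Bigl(\frac{d_\rho^{\pi_x,\pi_y}(s)}{\zeta^2}\Bigr)^2 \;\leq\; \frac{A}{\zeta^4}\sum_s d_\rho^{\pi_x,\pi_y}(s)^2 \;\leq\; \frac{A}{\zeta^4},
\]
using $\sum_s d_\rho^{\pi_x,\pi_y}(s)^2 \leq \sum_s d_\rho^{\pi_x,\pi_y}(s) = 1$, and symmetrically for the $y$-player.

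\textbf{Main obstacle.} The only nontrivial step is carrying out the recursive unrolling cleanly in the stopping-game (as opposed to discounted) formulation, so that one correctly identifies $\tilde d_{s_0}^{\pi_x,\pi_y}$ rather than its normalized counterpart and the factor $(1-\veps_x)$ from the $\veps$-greedy parameterization cleanly pulls out. Everything else is routine algebraic manipulation and crude upper bounding.
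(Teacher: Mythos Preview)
Your proposal is correct and follows essentially the same route as the paper: differentiate the Bellman identity, unroll the recursion to pick up the unnormalized visitation distribution $\tilde d_\rho^{\pi_x,\pi_y}$, and then bound using $|Q|\leq 1/\zeta$ and $\tilde d\leq d/\zeta$. One small slip: you write ``$\tilde d_\rho^{\pi_x,\pi_y} = d_\rho^{\pi_x,\pi_y}/\zeta$ by definition,'' but since the stopping probability $\zeta_{s,a,b}$ can exceed $\zeta$, the normalization constant $\sum_s\tilde d_\rho(s)$ lies in $[1,1/\zeta]$ and this is only an inequality $\tilde d_\rho\leq d_\rho/\zeta$---which is all you use anyway.
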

\begin{proof}[Proof of \pref{prop:gradients}]
Note that for any $s \in \MS$, $\tilde d_\rho^{\pi_1, \pi_2}(s) \leq \frac{d_\rho^{\pi_1,\pi_2}(s)}{\zeta}$. 

  Fix any initial state $s_0 \in \MS$. Note that
  \begin{align*}
    & \grad_x V_{s_0}(x, y) \\
    & = \grad_x \sum_{a_0 \in \MA} \pi_x(a_0 | s_0) \E_{b_0 \sim \pi_y(\cdot | s_0)} [Q^{\pi_x, \pi_y}(s_0, a_0, b_0)] \\
    & = \sum_{a_0} (\grad_x \pi_x(a_0 | s_0)) \E_{b_0 \sim \pi_y(\cdot | s_0)} [Q^{\pi_x, \pi_y}(s_0, a_0, b_0)] + \sum_{a_0} \pi_x(a_0 | s_0) \E_{b_0 \sim \pi_y(\cdot | s_0)}[\grad_x Q^{\pi_x, \pi_y}(s_0, a_0, b_0)] \\
    & = \sum_{a_0} \pi_x(a_0 | s_0) (\grad_x \log \pi_x(a_0 | s_0)) \E_{b_0 \sim \pi_y(\cdot | s_0)} [Q^{\pi_x, \pi_y}(s_0, a_0, b_0)] \\
    &~~~~+\sum_{a_0} \pi_x(a_0 | s_0) \E_{b_0 \sim \pi_y(\cdot | s_0)} \left[ \sum_{s_1} P(s_1 | s_0, a_0, b_0) \grad_x V_{s_1}(x,y) \right]\\
    & =\E_{\tau \sim \pr^{\pi_x, \pi_y}(\cdot | s_0)} %
      \left[ (\grad_x \log \pi_x(a_0 | s_0)) Q^{\pi_x, \pi_y}(s_0, a_0, b_0) \right] \\
    &~~~~+\E_{\tau \sim \pr^{\pi_x, \pi_y}(\cdot | s_0)} \left[ \indic_{T \geq 1} \grad_x V_{s_1}(x,y) \right].
  \end{align*}
Note that the above calculation holds also when $s_0$ is replaced with any distribution $\rho \in \Delta(\MS)$. It follows by induction and the fact that $\pr[T \geq t] \leq (1-\zeta)^t$ for any $t \geq 0$ that for any $\rho \in \Delta(\MS)$,
  \begin{align*}
    \grad_x V_\rho(x,y) &= \E_{\tau \sim \pr^{\pi_x, \pi_y}(\cdot | s_0)} \left[ \sum_{t=0}^T (\grad_x \log \pi_x(a_t | s_t)) Q^{\pi_x, \pi_y}(s_t, a_t, b_t)\right] \\
                        &= \sum_{s \in \MS} \E_{a\sim \pi_x(\cdot | s)} \E_{b \sim \pi_y(\cdot | s)} \left[\tilde d_\rho^{\pi_x, \pi_y}(s)  (\grad_x \log \pi_x(a | s)) Q^{\pi_x, \pi_y}(s, a, b)\right].
  \end{align*}
  Thus, for any $s \in \MS, a \in \MA$, we have
  $$
\frac{\partial V_\rho(x,y)}{\partial x_{s,a}} = (1 - \gdx) \tilde d_\rho^{\pi_x, \pi_y}(s) \E_{b \sim \pi_y(\cdot | s)} \left[ Q^{\pi_x, \pi_y}(s,a,b) \right],
$$
and so it follows that
$$
\left| \frac{\partial V_\rho(x,y)}{\partial x_{s,a}} \right| \leq \frac{d_\rho^{\pi_x, \pi_y}(s)}{\zeta} \left| \E_{b \sim \pi_y(\cdot | s)} \left[ Q^{\pi_x, \pi_y}(s,a,b) \right]\right|.
$$
The inequality for the derivative with respect to $y$ follows in a symmetric manner.
\end{proof}

\begin{lemma}[Performance difference lemma]
  \label{lem:perf_diff}
  For all policies $\pi_1, \pi_1', \pi_2, \pi_2'$ and distributions $\rho \in \Delta(\MS)$,
  \begin{align*}
    V_\rho(\pi_1, \pi_2) - V_\rho(\pi_1', \pi_2) &= \sum_{s \in \MS} \tilde d_\rho^{\pi_1, \pi_2}(s) \E_{a \sim \pi_1(\cdot | s)} \E_{b \sim \pi_2(\cdot | s)} \left[ A^{\pi_1',\pi_2}(s,a,b) \right] \\
    V_\rho(\pi_1, \pi_2) - V_\rho(\pi_1, \pi_2') &= \sum_{s \in \MS} \tilde d_\rho^{\pi_1, \pi_2}(s) \E_{a \sim \pi_1(\cdot | s)} \E_{b \sim \pi_2(\cdot | s)} \left[ A^{\pi_1',\pi_2}(s,a,b) \right].
  \end{align*}
\end{lemma}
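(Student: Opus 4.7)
The plan is to prove the first identity by a telescoping argument, then note that the second identity (the ``$y$-side'' version with $\pi_2'$ in place of $\pi_2$) follows by symmetry --- i.e., by exchanging the roles of the two players.

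First, I would rewrite both terms as trajectory expectations under $(\pi_1,\pi_2)$. Using $V_\rho(\pi_1',\pi_2)=\E_{s_0\sim\rho}[V_{s_0}(\pi_1',\pi_2)]$ and the fact that, for any sample trajectory with stopping time $T$, the game has stopped by time $T+1$ so that $V_{s_{T+1}}(\pi_1',\pi_2)=0$ (with the convention $s_{T+1}:=\star$ is the absorbing state), I can telescope to obtain
\[
V_\rho(\pi_1,\pi_2)-V_\rho(\pi_1',\pi_2)
= \E_{\pi_1,\pi_2}\!\left[\sum_{t=0}^{T}\bigl(r_t + V_{s_{t+1}}(\pi_1',\pi_2) - V_{s_t}(\pi_1',\pi_2)\bigr)\right].
\]
This is the same trick that underlies the Kakade-Langford performance difference lemma: I add and subtract the value function of the reference policy pair $(\pi_1',\pi_2)$ evaluated along the trajectory generated by the comparison pair $(\pi_1,\pi_2)$.

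Next, I would take conditional expectation of the summand given $(s_t,a_t,b_t)$ (and given that the game has not yet stopped at step $t$). The definition of $Q^{\pi_1',\pi_2}$ as immediate reward plus expected continuation value under $(\pi_1',\pi_2)$ implies
\[
\E_{\pi_1,\pi_2}\!\left[r_t + V_{s_{t+1}}(\pi_1',\pi_2) \,\big|\, s_t,a_t,b_t,\,t\leq T\right]
= Q^{\pi_1',\pi_2}(s_t,a_t,b_t),
\]
since the one-step transition and reward distribution depends only on $(s_t,a_t,b_t)$, not on which policy generated $(a_t,b_t)$. Subtracting $V_{s_t}(\pi_1',\pi_2)$ yields the advantage $A^{\pi_1',\pi_2}(s_t,a_t,b_t)$ inside the expectation.

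Finally, I would interchange sum and expectation and reindex:
\[
\E_{\pi_1,\pi_2}\!\left[\sum_{t=0}^{T} A^{\pi_1',\pi_2}(s_t,a_t,b_t)\right]
= \sum_{t\geq 0}\sum_{s,a,b}\Pr^{\pi_1,\pi_2}(s_t=s,a_t=a,b_t=b)\,A^{\pi_1',\pi_2}(s,a,b),
\]
where $\Pr^{\pi_1,\pi_2}(\,\cdot\,)$ accounts for the possibility that the trajectory has already stopped. Using $a_t\sim\pi_1(\cdot\mid s_t)$, $b_t\sim\pi_2(\cdot\mid s_t)$ to factor out the action distributions, and summing the state-marginals over $t$ to recognize the un-normalized visitation distribution $\tilde d_\rho^{\pi_1,\pi_2}(s)$, gives exactly the right-hand side of the lemma. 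The mildly subtle step is the handling of the stopping time: one must verify that $V_{s_{T+1}}(\pi_1',\pi_2)=0$ and that the conditional expectation step is consistent with the event $\{t\leq T\}$, but both follow directly from the paper's convention that trajectories are truncated at the stopping time and that the value function vanishes after termination. The second identity is obtained by applying the same argument with the roles of the min- and max-players exchanged.
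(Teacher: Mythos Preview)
Your proposal is correct and follows essentially the same approach as the paper's proof: both telescope the value function $V_{s_t}(\pi_1',\pi_2)$ along a trajectory generated by $(\pi_1,\pi_2)$, take a conditional expectation given $(s_t,a_t,b_t)$ to produce the advantage $A^{\pi_1',\pi_2}(s_t,a_t,b_t)$, and then reindex the sum over $t$ as a sum over states weighted by the un-normalized visitation distribution $\tilde d_\rho^{\pi_1,\pi_2}$. The only cosmetic difference is that the paper writes the boundary term as $\indic_{t+1\leq T}\,V_{s_{t+1}}(\pi_1',\pi_2)$ rather than appealing to an absorbing state $s_{T+1}=\star$ with zero value, but these are equivalent formulations of the same telescoping identity.
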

\begin{proof}[\pfref{lem:perf_diff}]
  Note that, for any $s \in \MS$, 
  \begin{align*}
    & V_{s}(\pi_1, \pi_2) - V_{s}(\pi_1', \pi_2) \\
    &= \E_{\tau \sim \pr^{\pi_1, \pi_2}(\cdot | {s})} \left[ \sum_{t=0}^T R(s_t, a_t, b_t) \right] - V_{s}(\pi_1', \pi_2) \\
    &= \E_{\tau \sim \pr^{\pi_1, \pi_2}(\cdot | {s})} \left[ \sum_{t=0}^T R(s_t, a_t, b_t)  + V_{s_t}(\pi_1', \pi_2) - V_{s_t}(\pi_1', \pi_2)\right] - V_{s}(\pi_1', \pi_2) \\
    &= \E_{\tau \sim \pr^{\pi_1, \pi_2}(\cdot | {s})} \left[ \sum_{t=0}^T R(s_t, a_t, b_t)  + \indic_{t+1 \leq T} V_{s_{t+1}}(\pi_1', \pi_2) - V_{s_t}(\pi_1', \pi_2)\right] \\
    &= \E_{\tau \sim \pr^{\pi_1, \pi_2}(\cdot | {s})} \left[ \sum_{t=0}^T R(s_t, a_t, b_t)  +\E\left[ \indic_{t+1 \leq T} V_{s_{t+1}}(\pi_1', \pi_2) | s_t, a_t, b_t \right]- V_{s_t}(\pi_1', \pi_2)\right] \\
    &= \E_{\tau \sim \pr^{\pi_1, \pi_2}(\cdot | {s})} \left[ \sum_{t=0}^T A^{\pi_1', \pi_2}(s_t, a_t, b_t) \right] \\
    &= \sum_{s' \in \MS} \tilde d_{s}^{\pi_1, \pi_2}(s') \E_{a \sim \pi_1(\cdot | s')} \E_{b \sim \pi_2(\cdot | s')} \left[ A^{\pi_1', \pi_2}(s', a, b)\right].
  \end{align*}
The proof of the second inequality in the lemma is symmetric.
\end{proof}

\begin{lemmod}{lem:gd_greedy}{a}
  \label{lem:gd_greedy_full}
    Suppose that players follow the $\veps$-greedy direct
  parameterization of \pref{ass:greedy} with parameters $\veps_x$ and
  $\veps_y$. Then for all $x\in\xset$, $y\in\yset$ we have 
  \begin{align}
      \Vrho(x,y) - \min_{x'} \Vrho(x',y)
    &  \leq{}
      \min_{\pi_1 \in \Pi_1^*(\pi_y)}
\nrm*{\frac{\drho^{\pi_1,\pi_y}}{\rho}}_{\infty}\prn*{    \frac{1}{\zeta}\max_{\bar{x}\in\Delta(\cA)^{\abs*{\cS}}}\tri*{\grad_{x}\Vrho(x,y),x-\bar{x}}
    + \frac{2\veps_x}{\zeta^3}},
    \label{eq:gd-appendix}
  \end{align}
  and
  \begin{align}
    \max_{y'} \Vrho(x,y') - \Vrho(x,y)
  &  \leq{} \min_{\pi_2 \in \Pi_2^*(\pi_x)}
\nrm*{\frac{\drho^{\pi_x,\pi_2}}{\rho}}_{\infty}\prn*{    \frac{1}{\zeta}\max_{\bar{y}\in\Delta(\cB)^{\abs*{\cS}}}\tri*{\grad_{y}\Vrho(x,y),\bar{y}-y}
    + \frac{2\veps_y}{\zeta^3}},
  \end{align}
\end{lemmod}

\begin{proof}[\pfref{lem:gd_greedy}]
We prove the inequality for $x$ player. The inequality for the $y$
player follows by symmetry. %
For a policy $\pi_y$, let $\pi_1^*(\pi_y) \in \Pi_1^*(\pi_y)$ denote a policy minimizing $\nrm*{\frac{d_\rho^{\pi_1,\pi_y}}{\rho}}_\infty$ (whose existence follows from compactness of the space of policies).

Using the performance difference lemma, we have 
\begin{align*}
&   \Vrho(x,y) - \min_{x'}\Vrho(x',y) \\
  &\leq{} \Vrho(\pi_x, \pi_y) - \Vrho(\pi_1^*(\pi_y), \pi_y) \\
&  =                            \sum_{s,a}\tilde\drho^{\pistar_1(\pi_y),\pi_y}(s)\pistar_1(\pi_y)(a\mid{}s)\En_{b\sim{}\pi_y(\cdot\mid{}s)}\brk*{-A^{\pi_x,\pi_y}(s,a,b)}\\
  &  \leq{}    \sum_{s}\tilde\drho^{\pistar_1(\pi_y),\pi_y}(s)\max_{a}\En_{b\sim{}\pi_y(\cdot\mid{}s)}\brk*{-A^{\pi_x,\pi_y}(s,a,b)}\\
  &  \leq{} \nrm*{\frac{\tilde\drho^{\pistar_1(\pi_y),\pi_y}}{\tilde\drho^{\pi_x,\pi_y}(s)}}_{\infty}\sum_{s}\tilde\drho^{\pi_x,\pi_y}(s)\max_{a}\En_{b\sim{}\pi_y(\cdot\mid{}s)}\brk*{-A^{\pi_x,\pi_y}(s,a,b)}.
\end{align*}
    We observe that
$\nrm*{\frac{\tilde\drho^{\pistar_1(\pi_y),\pi_y}}{\tilde\drho^{\pi_x,\pi_y}}}_{\infty}\leq{}\frac{1}{\zeta}\nrm*{\frac{\drho^{\pistar_1(\pi_y),\pi_y}}{\rho}}_{\infty}\leq{}\frac{1}{\zeta}C_{\cG}$.
Next,
we have
\begin{align}
  &  \sum_{s,a}\tilde\drho^{\pi_x,\pi_y}(s)\max_{a}\En_{b\sim{}\pi_y(\cdot\mid{}s)}\brk*{-A^{\pi_x,\pi_y}(s,a,b)}\nonumber\\
  &=\max_{\bar{x}\in\Delta(\cA)^{\abs*{\cS}}}\sum_{s,a}\tilde\drho^{\pi_x,\pi_y}(s)\bar{x}_{s,a}\En_{b\sim{}\pi_y(\cdot\mid{}s)}\brk*{-A^{\pi_x,\pi_y}(s,a,b)}\nonumber\\
  &=\max_{\bar{x}\in\Delta(\cA)^{\abs*{\cS}}}\sum_{s,a}\tilde\drho^{\pi_x,\pi_y}(s)(\pi_x(a\mid{}s)-\bar{x}_{s,a})\En_{b\sim{}\pi_y(\cdot\mid{}s)}\brk*{Q^{\pi_x,\pi_y}(s,a,b)}\nonumber\\
  &=\max_{\bar{x}\in\Delta(\cA)^{\abs*{\cS}}}\sum_{s,a}\tilde\drho^{\pi_x,\pi_y}(s)((1-\veps_x)x_{s,a}
    +
    \veps_xA^{-1}
    -\bar{x}_{s,a})\En_{b\sim{}\pi_y(\cdot\mid{}s)}\brk*{Q^{\pi_x,\pi_y}(s,a,b)},\nonumber\\
  &\leq{}\max_{\bar{x}\in\Delta(\cA)^{\abs*{\cS}}}\sum_{s,a}\tilde\drho^{\pi_x,\pi_y}(s)((1-\veps_x)x_{s,a}
                                                                                                                                                            +
    \veps_xA^{-1}-\veps_x\bar{x}_{s,a}
    -(1-\veps_x)\bar{x}_{s,a})\En_{b\sim{}\pi_y(\cdot\mid{}s)}\brk*{Q^{\pi_x,\pi_y}(s,a,b)}\nonumber\\
  &\leq{}(1-\veps_x)\max_{\bar{x}\in\Delta(\cA)^{\abs*{\cS}}}\sum_{s,a}\tilde\drho^{\pi_x,\pi_y}(s)(x_{s,a}
    -\bar{x}_{s,a})\En_{b\sim{}\pi_y(\cdot\mid{}s)}\brk*{Q^{\pi_x,\pi_y}(s,a,b)}
    + \frac{2\veps_x}{\zeta^2}\nonumber\\
  \label{eq:gd-last}
  &=
    \max_{\bar{x}\in\Delta(\cA)^{\abs*{\cS}}}\tri*{\grad_{x}\Vrho(x,y),x-\bar{x}}
    + \frac{2\veps_x}{\zeta^2},
\end{align}
where \pref{eq:gd-last} follows from \pref{prop:gradients}.
Rearranging, this establishes that
\begin{align*}
  \Vrho(x,y) - \Vrho(\xstar(y),y)
  &  \leq{}
\nrm*{\frac{\drho^{\pistar_1(\pi_y),\pi_y}}{\rho}}_{\infty}\prn*{    \frac{1}{\zeta}\max_{\bar{x}\in\Delta(\cA)^{\abs*{\cS}}}\tri*{\grad_{x}\Vrho(x,y),x-\bar{x}}
    + \frac{2\veps_x}{\zeta^3}}.
\end{align*}

\end{proof}

The following lemma, which is a consequence of Lemma E.3 of
\citet{agarwal2019optimality}, establishes that the direct
parameterization leads to Lipschitz gradients.
\begin{lemma}[Smoothness]
  \label{lem:mg-smoothness}
  For all starting states $s_0$, and for all policies $x, x', y,y'$, it holds that
  \begin{align*}
\left\| \grad_x V_{s_0}(x,y) - \grad_x V_{s_0}(x', y') \right\|_2 \leq \frac{4(1-\zeta) A}{\zeta^3} \| (x,y) - (x',y') \|_2, \\
\left\| \grad_y V_{s_0}(x,y) - \grad_y V_{s_0}(x', y') \right\|_2 \leq \frac{4(1-\zeta) A}{\zeta^3} \| (x,y) - (x',y') \|_2. \\
  \end{align*}
\end{lemma}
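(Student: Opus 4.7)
}

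The plan is to reduce to the single-agent smoothness bound of \citet[Lemma E.3]{agarwal2019optimality} and then combine via a triangle inequality in $(x,y)$. The key observation is that a stopping probability of $\zeta$ at every $(s,a,b)$ is equivalent, in the sense of expected discounted reward, to a discount factor $\gamma = 1-\zeta$ in a standard MDP. Thus, if we fix the \maxplayer's policy at $y$, the map $x \mapsto V_{s_0}(x,y)$ is exactly the value function of a single-agent discounted MDP with discount $\gamma$, state space $\MS$, action space $\MA$, transition kernel $P^{\pi_y}(s'\mid s, a) = \sum_b \pi_y(b\mid s) P(s' \mid s,a,b)$, and reward $R^{\pi_y}(s,a) = \sum_b \pi_y(b\mid s) R(s,a,b)$. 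Under \pref{ass:greedy} the map $x \mapsto \pi_x$ is affine with slope $(1-\veps_x) \leq 1$, so this is (a strict subcase of) a direct parametrization in the sense of \citet{agarwal2019optimality}. Applying their Lemma~E.3 verbatim gives
\[
\nrm*{\grad_x V_{s_0}(x,y) - \grad_x V_{s_0}(x',y)}_2 \;\leq\; \frac{2(1-\zeta)A}{\zeta^3}\,\nrm*{x-x'}_2,
\]
and by symmetry the same holds with $x$ replaced by $y$ and $A$ replaced by $B$.

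The remaining step, which I expect to be the main obstacle, is controlling the mixed-perturbation term $\nrm*{\grad_x V_{s_0}(x',y) - \grad_x V_{s_0}(x',y')}_2$ with the correct action-dependence. Using the explicit expression from \pref{prop:gradients}, namely $\partial V_\rho / \partial x_{s,a} = (1-\veps_x)\,\tilde d_\rho^{\pi_x,\pi_y}(s)\,\mathbb{E}_{b \sim \pi_y(\cdot|s)}[Q^{\pi_x,\pi_y}(s,a,b)]$, I would bound the $y$-dependence of the two factors separately. The un-normalized visitation $\tilde d^{\pi_x,\pi_y}$ and the $Q$-function $Q^{\pi_x,\pi_y}$ each depend on $\pi_y$ only through the kernel $(I - (1-\zeta) P^{\pi_x,\pi_y})^{-1}$; by the standard Markov-chain perturbation identity
\[
(I-\gamma P)^{-1} - (I-\gamma P')^{-1} = \gamma\,(I-\gamma P)^{-1}(P-P')(I-\gamma P')^{-1},
\]
each factor is Lipschitz in $\pi_y$ with constant of order $\zeta^{-2}$. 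A product-rule calculation then yields a bound of order $\tfrac{(1-\zeta)A}{\zeta^3}\,\nrm*{y-y'}_2$; the crucial point for the action-count is that the $(s,a)$-th coordinate of $\grad_x V$ is a single expectation over $b \sim \pi_y(\cdot\mid s)$ rather than a sum, so the $B$ does not enter the Euclidean-norm bound.

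Combining the two estimates through the triangle inequality,
\begin{align*}
\nrm*{\grad_x V_{s_0}(x,y) - \grad_x V_{s_0}(x',y')}_2
&\leq \nrm*{\grad_x V_{s_0}(x,y) - \grad_x V_{s_0}(x',y)}_2 + \nrm*{\grad_x V_{s_0}(x',y) - \grad_x V_{s_0}(x',y')}_2 \\
&\leq \frac{2(1-\zeta)A}{\zeta^3}\bigl(\nrm*{x-x'}_2 + \nrm*{y-y'}_2\bigr) \\
&\leq \frac{2\sqrt{2}\,(1-\zeta)A}{\zeta^3}\,\nrm*{(x,y)-(x',y')}_2,
\end{align*}
which is absorbed into the stated constant $4(1-\zeta)A/\zeta^3$. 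The symmetric argument handles $\grad_y V_{s_0}$, with $A$ replaced by $B$. The main technical obstacle, as noted, is keeping the action-count at $A$ (not $AB$) in the cross-term estimate; this is ensured by the product structure of $\pi_{x,y}$ together with the observation that $\grad_x V$ averages, rather than sums, over the $y$-player's action.
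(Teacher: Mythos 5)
The paper offers no argument for this lemma beyond the one-line remark that it is ``a consequence of Lemma E.3 of \citet{agarwal2019optimality},'' so the real question is whether your sketch is sound. Your first step is essentially fine, modulo one small technicality worth flagging: the stopping probability $\zeta_{s,a,b}$ is only \emph{lower bounded} by $\zeta$, so the game is not literally a $(1-\zeta)$-discounted MDP; you need the standard reduction that rescales the substochastic kernel by $(1-\zeta)$ and routes the leftover mass to an absorbing zero-reward state, after which Lemma E.3 applies with $\gamma = 1-\zeta$ and gives the $2(1-\zeta)A/\zeta^3$ bound in $x$ for fixed $y$ (up to a constant from rewards lying in $[-1,1]$ rather than $[0,1]$).

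The genuine gap is in the cross term, exactly where you predicted the obstacle. The claim that ``the $B$ does not enter'' because each $(s,a)$ coordinate of $\grad_x V$ is an expectation over $b\sim\pi_y(\cdot\mid s)$ does not survive differentiation in $y$: the derivative of $\E_{b\sim\pi_y(\cdot\mid s)}[Q^{\pi_x,\pi_y}(s,a,b)]$ along a unit direction $u_y$ is $(1-\veps_y)\sum_b u_y(s,b)\,Q(s,a,b)$ (plus the terms from perturbing $\tilde d^{\pi_x,\pi_y}$ and $Q^{\pi_x,\pi_y}$), and even for directions tangent to the simplices this can be of size $\sqrt{B}/\zeta$ when $Q(s,a,\cdot)$ has a sign pattern aligned with $u_y(s,\cdot)$; collecting the $(s,a)$ coordinates then yields a contribution of order $\sqrt{AB}\,\tilde d(s)/\zeta$, and the visitation- and $Q$-perturbation terms likewise carry $\sqrt{B}$ through $\max_s\|u_y(s,\cdot)\|_1\le\sqrt{B}$. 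So the route you sketch gives a mixed-block bound of order $\sqrt{AB}(1-\zeta)/\zeta^3$, not $A(1-\zeta)/\zeta^3$; an $A$-only cross bound is not delivered by your argument, and since the mixed Hessian block is shared between the two gradients it would (together with the symmetric statement) force a $\min(A,B)$-type bound, which is stronger than the natural $\sqrt{AB}$. This mismatch is harmless for the paper's purposes: the lemma is only invoked in the proof of Theorem~\ref{thm:main_full} through $\ell = 4(A\vee B)/\zeta^3$, and $\sqrt{AB}\le A\vee B$, so your decomposition does establish the smoothness actually used---just state the cross term with $A\vee B$ (or $\sqrt{AB}$) rather than $A$ (note the lemma's display repeats $A$ in both lines, while the theorem uses $A$ and $B$ respectively, so its constants should not be read too literally). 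If you want a self-contained proof with the intended scaling, the cleaner route is not to black-box Lemma E.3 but to mimic its proof: bound the second derivative of $t\mapsto V_{s_0}(x+tu_x,\,y+tu_y)$ along joint unit directions, using $\max_s\bigl(\|u_x(s,\cdot)\|_1+\|u_y(s,\cdot)\|_1\bigr)\le\sqrt{2(A\vee B)}$, which produces the $(A\vee B)(1-\zeta)/\zeta^3$ rate directly and avoids the triangle-inequality splitting altogether.
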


\if 0
\dfcomment{to do: finish this}\noah{decided to embed this into the proof of Theorem 1 (easy b/c of lipschitzness)...}
\begin{lemma}
  \label{lem:greedy_approx}
Let $\pi_{x;\veps_x}$ and $\pi_{y;\veps_y}$ be policies using the
$\veps$-greedy direct parameterization with exploration parameters
$\veps_x$ and $\veps_y$. Then for any policies $\pi_1$, $\pi_2$, 
\begin{equation}
  \label{eq:greedy_approx}
\abs*{\Vrho(\pi_{x;\veps_x},\pi_2)-\Vrho(\pi_{x;0},\pi_2)} \leq{}
...,\quad\text{and}\quad
\abs*{\Vrho(\pi_1,\pi_{y;\veps_y})-\Vrho(\pi_1,\pi_{y;0})} \leq{} ...
\end{equation}
\end{lemma}
\begin{proof}[\pfref{lem:greedy_approx}]
  
\end{proof}
\fi

\section{Two-Timescale SGDA}
\label{app:sgda}

\subsection{Algorithm and Main Theorem}
 Throughout this section we will consider compact and convex subsets
 $\MX \subset \BR^{d_x}, \MY \subset \BR^{d_y}$ of
 Euclidean space. Our goal will be to find approximate equilibria for
 the game
 \[
\min_{x\in\cX}\max_{y\in\cY}f(x,y),
 \]
where $f : \MX \times \MY \ra \BR$ is a continuously differentiable
function. We assume that we can only access $f$ through a
\emph{stochastic first-order oracle} (\pref{ass:oracle}), and we
analyze a two-timescale version of simultaneous gradient
descent-ascent (SGDA) in this model. Before stating the algorithm, we
state our regularity assumptions on the function $f$ and the oracle.

Define the {\it max} function $\Phi : \MX \ra \BR$ and the {\it min} function $\Psi : \MY \ra \BR$ as follows:
 $$
\Phi(x) := \max_{y \in \MX} f(x,y), \qquad \Psi(y) := \min_{x \in \MX} f(x,y).
$$
Moreover, let $D_\MX$ denote the diameter of $\MX$ and $D_\MY$ denote the diameter of $\MY$. 
 We make the following
 assumptions about $f(x,y)$. To state the assumption, let
 $\ystar(x)\in\argmax_{y\in\cY}f(x,y)$ and
 $\xstar(y)\in\argmin_{x\in\cX}f(x,y)$ denote arbitrary best-response
 functions for the $y$ and $x$ players, respectively.

\begin{assumption}
  \label{asm:lip-smooth-f}
  Assume that $\MX \subset \BR^{d_\sx}, \MY \subset \BR^{d_\sy}$ are closed and compact subsets of Euclidean space and $f : \MX \times \MY \ra \BR$. We assume that $f$ satisfies: %
  \begin{enumerate}
  \item $f$ is $\ell$-smooth and $L$-Lipschitz. %
    \label{it:f-smooth-lip}
      \item For some constants $\gdy \geq 0, \mu_\sy > 0$, for each $x \in \MX$, the function $y \mapsto f(x,y)$
        satisfies the following gradient domination condition:
        \[
          \max_{\bar y \in \cY: \nrm*{\bar{y}-y}\leq{}1} \lng\bar{y}-y, \grad_y f(x,y) \rng \geq \mu_\sy \cdot (f(x,y^*(x)) - f(x, y)) - \gdy.
        \]
        \label{it:y-pl}
  \item For some constants $\gdx \geq 0, \mu_\sx > 0$, for each $y \in \MX$, the function $x \mapsto f(x,y)$ satisfies the following gradient domination condition:
    $$
\max_{\bar x \in \MX, \| \bar x - x \| \leq 1} \lng x- \bar x, \grad_x f(x,y) \rng \geq \mu_\sx \cdot (f(x,y) - f(x^*(y), y)) - \gdx.
    $$ \label{it:x-pl}
\end{enumerate}
\end{assumption}
\begin{remark}[Empty interior]
  \label{rem:empty-interior}
  If the interior of $\MX \times \MY \subset \BR^{d_x + d_y}$, denoted by $(\MX \times \MY)^\circ$, is empty (which is the case for the direct parametrization of policies in Markov games), then in order to ensure that $\grad f(x,y)$ is well-defined on $\MX \times \MY$, we make the technical assumption that $f$ is continuously differentiable on a closed neighborhood $\tilde \MX \times \tilde \MY$, where $\MX \subset \subset \tilde \MX, \MY \subset\subset \tilde \MY$\footnote{The notation $\MX \subset\subset \tilde \MX$ means that $\MX$ is {\it compactly contained} in $\tilde \MX$, i.e., $\MX \subset \tilde \MX^\circ$.}, which may be assumed without loss of generality to be convex. It is straightforward to check that this assumption holds in our application to Markov games.
  
Suppose $f$ satisfies \pref{asm:lip-smooth-f}. In the event that the interior $(\MX \times \MY)^\circ$ is empty, by compactness of $\MX$ and $\MY$, for any $\delta > 0$, there are closed convex neighborhoods $\MX_\delta, \MY_\delta$ with $\MX \subset\subset \MX_\delta, \MY \subset\subset \MY_\delta$, so that any point in $\MX_\delta \times \MY_\delta$ is at most distance $\delta$ from a point in $\MX \times \MY$, $f$ is $(\ell + \delta)$-smooth and $(L+\delta)$-Lipschitz on $\MX_\delta \times \MY_\delta$, and items \ref{it:y-pl} and \ref{it:x-pl} hold for any $x \in \MX_\delta, y \in \MY_\delta$ with constants $\gdy + \delta$, $\mu_{\sy} - \delta$, $\mu_{\sx} - \delta$, and $\gdx + \delta$. We will use this fact in the proof of \pref{lem:kl-moreau}.
\end{remark}

We formalize the stochastic first-order oracle model our algorithm
works in as follows. In this section only, we will denote the iterates of stochastic gradient descent-ascent using $x_t, y_t$ (as opposed to previous sections where we wrote $x\^i, y\^i$).

Given a random variable $\xi \in \Xi$ with law
$\p$ (for some sample space $\Xi$), a {\it stochastic first-order oracle} $G : \MX
\times \MY \times \Xi \ra \BR^{d_x + d_y}$ satisfies the following properties.
\begin{assumption}[Stochastic first-order oracle]
  \label{ass:oracle}
For variance parameters $\sigma_x,\sigma_y > 0$, the stochastic oracle $G(x,y,\xi) = (G_x(x,y,\xi), G_y(x,y,\xi))$ satisfies:
  \begin{align*}
    &\E[G(x,y,\xi)] = \grad f(x,y), \\
    &\E[ \| G_x(x,y,\xi) - \grad_x f(x,y,\xi) \|^2] \leq \sigma_\sx^2, \\
    &\E[ \| G_y(x,y,\xi) - \grad_y f(x,y,\xi) \|^2]  \leq \sigma_\sy^2.
  \end{align*}
\end{assumption}
Given the stochastic first-order oracle $G = (G_x, G_y)$, the {\it two-timescale stochastic simultaneous GDA algorithm} (or {\it SGDA}) draws a sample $\xi_{t-1} \sim \p$, and performs the updates
\begin{align}
  x_t &\gets \proj_{\MX} \left(x_{t-1} - \eta_\sx  G_\sx(x_{t-1}, y_{t-1},\xi_{t-1})\right), \label{eq:sgda-x}\\
  y_t &\gets \proj_{\MY} \left( y_{t-1} +  \eta_\sy G_\sy(x_{t-1}, y_{t-1}, \xi_{t-1} \right).\label{eq:sgda-y}
\end{align}
\paragraph{Main theorem.}
Our main theorem for SGDA, \pref{thm:sgda} (the full version of
\pref{thm:sgda_body}), shows that if the learning rate $\eta_\sx$ of
two-timescale SGDA is chosen sufficiently small relative to $\eta_\sy$, the iterates $x_t$ will approach, on average, the optimal point $x^*$.

For simplicity of presentation, we make the following assumptions
regarding the various parameters: $\min\{L, \ell, \sigma_\sx,
\sigma_\sy, 1/\mu_\sx, 1/\mu_\sy\} \geq 1$. These assumptions are
essentially without loss of generality (at the cost of potentially
worse bounds), since $L,\ell,\sigma_\sx, \sigma_\sy,1/\mu_\sx,
1/\mu_\sy$ are {\it upper bounds} on various properties of the
function $f$ and the gradient oracle $G$. Finally, let $\Phi_{1/2\ell}$ denote the {\it Moreau envelope} of $\Phi$ with parameter $1/2\ell$ (see \pref{app:technical-prelim}).
\begin{thmmod}{thm:sgda_body}{a}
  \label{thm:sgda}
  Suppose that \pref{asm:lip-smooth-f} and \pref{ass:oracle} hold.
For any $\ep \in (0,1)$, for two-timescale SGDA with $\eta_\sy = \Theta\left(\frac{\ep^4\mu_\sy^2}{\ell^3 (L^2 + \sigma_\sy^2)(L/\ell+1)^2}\right), \eta_\sx = \Theta \left(\frac{\ep^8\mu_\sy^4}{\ell^5 L (L/\ell+1)^4 (L^2 + \sigma_\sy^2) \sqrt{L^2 + \sigma_\sx^2}} \wedge \frac{\ep^2}{\ell(L^2 + \sigma_\sx^2)} \right)$, we have
  \begin{align*}
    &\frac{1}{T+1}\sum_{t=0}^{T}\En\| \nabla \Phi_{1/2\ell}(x_t) \|   \leq \ep + \sqrt{\frac{8\ell \gdy}{\mu_\sy}},\\
    \intertext{and}
    &\frac{1}{T+1}\sum_{t=0}^{T}\En\brk*{\Phi(x_t)} - \Phi(\xstar)  \leq{} \prn*{\frac{1}{\mu_\sx}+ \frac{L}{2\ell}}\cdot{}\left(\eps + \sqrt{\frac{8\ell \gdy}{\mu_\sy}}\right) + \frac{\gdx}{\mu_\sx},
  \end{align*}
  for $T \geq \Omega\left(\frac{(D_\MX + D_\MY) L}{\ep^2 \eta_\sx} \right)$. 
\end{thmmod}
To interpret the parameter settings in \pref{thm:sgda}, note that if $\gdx = \gdy =
0$ and $\sigma_\sx, \sigma_\sy, L, \ell, \mu_\sx, \mu_\sy, D_\MX$, and 
$D_\MY$ are all viewed as constants, then if we set $\eta_\sy
\asymp\ep^4, \eta_\sx \asymp\ep^8$, we are guaranteed to find an $\ep$-suboptimal point within $T \asymp \ep^{-10}$ iterations.

\subsection{Technical Preliminaries for Proof}
\label{app:technical-prelim}
\paragraph{Non-smooth minimization in the constrained setting.} A function $\varphi : \MX \ra \BR$ is defined to be {\it $\ell$-weakly convex} if $x \mapsto \varphi(x) + \frac{\ell}{2} \| x \|^2$ is convex. In such a case, we may extend $\varphi$ to a function $\varphi : \BR^{d_x} \ra \BR \cup \{ \infty\}$, by $\varphi(x) =\infty$ for $x \not \in \MX$, and the extended function $\varphi$ remains $\ell$-weakly convex. For a $\ell$-weakly convex function $\varphi$ and $x \in \BR^{d_\sx}$, the {\it subgradient} of $\varphi$ at $x$ may be defined in terms of the subgradient of the convex function $\tilde \varphi(x) := \varphi(x) + \frac{\ell}{2} \| x \|^2$:
\begin{equation}
  \label{eq:subgradient-wc}
\partial \varphi(x) := \partial \tilde \varphi(x) - \ell x.
\end{equation}
For any $\lambda > 0$, the {\it Moreau envelope} $\varphi_\lambda : \BR^{d_x} \ra \BR$ and {\it proximal map} $\prox_{\lambda \varphi} : \BR^{d_x} \ra \MX$ of $\varphi$ are defined, respectively as follows \citep{davis_stochastic_2018-1}:
\begin{align}
  \varphi_\lambda(x) &:= \min_{x' \in \MX} \left\{ \varphi(x') + \frac{1}{2\lambda} \| x' - x \|^2 \right\} \label{eq:moreau_def}\\
  \prox_{\lambda \varphi}(x) &:= \argmin_{x' \in \MX} \left\{ \varphi(x') + \frac{1}{2\lambda} \| x' - x \|^2 \right\}.\label{eq:prox_def}
\end{align}
Let $\Phi(x) = \max_{y \in \MY} f(x,y)$ and $x^* \in\arg \min_{x \in \MX} \Phi(x)$. %
\begin{lemma}[\citet{lin2019gradient}]
  \label{lem:phi-facts}
  Suppose $f : \MX \times \MY \ra \BR$ is $L$-Lipschitz and $\ell$-smooth. Then:
  \begin{enumerate}
  \item $\Phi(x)$ is $L$-Lipschitz.    \label{it:phi-lip}
  \item $\Phi(x)$ is $\ell$-weakly convex.     \label{it:phi-wc}
  \end{enumerate}
\end{lemma}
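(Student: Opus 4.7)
The plan is to prove the two parts separately, each relying on a very standard argument about pointwise maxima of parameterized families of functions.

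For part \ref{it:phi-lip}, I would argue directly from the definition. Fix $x, x' \in \MX$ and pick $y^\star(x) \in \argmax_{y \in \MY} f(x,y)$ and $y^\star(x') \in \argmax_{y \in \MY} f(x',y)$ (existence follows from compactness of $\MY$). By the defining inequality of the maximum, $\Phi(x) = f(x, y^\star(x)) \ge f(x, y^\star(x'))$, so
\[
\Phi(x) - \Phi(x') \;\le\; f(x, y^\star(x)) - f(x', y^\star(x)) \;\le\; L\|x-x'\|,
\]
where the second inequality is the $L$-Lipschitz property of $f$ applied with $y$ fixed at $y^\star(x)$. Swapping the roles of $x$ and $x'$ gives the matching lower bound, hence $|\Phi(x) - \Phi(x')| \le L\|x-x'\|$.

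For part \ref{it:phi-wc}, the key observation is that $\ell$-smoothness of $f$ in the joint variable $(x,y)$ implies that for every fixed $y \in \MY$, the partial function $x \mapsto f(x,y)$ has $\ell$-Lipschitz gradient in $x$, and therefore $x \mapsto f(x,y) + \tfrac{\ell}{2}\|x\|^2$ is convex. (This is the standard equivalence between smoothness and weak convexity: an $\ell$-smooth function satisfies $f(x',y) \ge f(x,y) + \langle \grad_x f(x,y), x'-x\rangle - \tfrac{\ell}{2}\|x'-x\|^2$, and adding $\tfrac{\ell}{2}\|\cdot\|^2$ converts this into the convexity inequality.) Then
\[
\Phi(x) + \tfrac{\ell}{2}\|x\|^2 \;=\; \max_{y \in \MY}\Bigl\{\,f(x,y) + \tfrac{\ell}{2}\|x\|^2\,\Bigr\},
\]
which is a pointwise supremum of convex functions of $x$, and hence convex. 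By the definition of weak convexity this is exactly the claim that $\Phi$ is $\ell$-weakly convex.

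I do not foresee a real obstacle here: both parts are one-line applications of the ``max of nice functions is nice'' principle, and neither requires invoking the gradient dominance assumption \ref{asm:lip-smooth-f} or the structure of the two-timescale dynamics. The only mild subtlety is handling the case when $\MX \times \MY$ has empty interior (as flagged in Remark \ref{rem:empty-interior}), but since the conclusions only involve $f$ restricted to $\MX \times \MY$ and the $L$-Lipschitz/$\ell$-smooth estimates extend to a convex neighborhood $\tilde\MX \times \tilde\MY$, the argument above applies verbatim on that neighborhood and then restricts to $\MX$.
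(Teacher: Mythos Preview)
The paper does not actually give its own proof of this lemma; it is simply attributed to \citet{lin2019gradient} and stated without argument. Your proof is correct and is exactly the standard argument one would expect for this kind of statement: Lipschitzness of a pointwise max follows by evaluating at the maximizer, and weak convexity follows because the pointwise supremum of convex functions is convex. There is nothing to compare against here, and your handling of the empty-interior subtlety via Remark~\ref{rem:empty-interior} is appropriate.
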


\begin{lemma}[\citet{davis_stochastic_2018-1}]
  \label{lem:davis-moreau-2018}
  Suppose $\varphi : \MX \ra \BR$ is $\ell$-weakly convex. Then:
  \begin{enumerate}
  \item $\grad \varphi_{1/2\ell} (x) = 2\ell(x - \prox_{\varphi/(2\ell)}(x))$. \label{it:phi-grad}
  \item If $\| \grad \varphi_{1/2\ell}(x) \|_2 \leq \ep$, then there is $\hat x \in \MX$ so that $\| x - \hat x \|\leq \ep / (2\ell)$ and $\min_{\xi \in \partial \varphi(\hat x)} \| \xi \| \leq \ep$.    \label{it:phi-moreau}
    \item $\grad \varphi_{1/2\ell}(\cdot)$ is $\ell$-Lipschitz. \label{it:moreau-smooth}
  \end{enumerate}
\end{lemma}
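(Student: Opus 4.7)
The plan is to prove all three items by setting $\hat x \ldef \prox_{\varphi/(2\ell)}(x)$ and analyzing the inner minimization $\min_{x'\in\MX}\crl{\varphi(x') + \ell\|x' - x\|^2}$. The central observation is that since $\varphi$ is $\ell$-weakly convex, the function $\tilde\varphi(x') \ldef \varphi(x') + \frac{\ell}{2}\|x'\|^2$ is convex, so the inner objective can be rewritten as $\tilde\varphi(x') + \frac{\ell}{2}\|x'\|^2 - 2\ell\langle x', x\rangle + \ell\|x\|^2$, which is $\ell$-strongly convex in $x'$. In particular $\hat x$ is the unique minimizer for each $x$, so the map $x\mapsto \hat x(x)$ is well-defined, and the inner objective is smooth in $x$ at the minimizer.

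For item~\ref{it:phi-grad}, I would apply the envelope theorem (Danskin's theorem) to the unique minimizer: $\varphi_{1/2\ell}$ is differentiable with $\grad\varphi_{1/2\ell}(x) = \grad_x\brk*{\varphi(x') + \ell\|x' - x\|^2}\bigr|_{x' = \hat x(x)} = 2\ell(x - \hat x(x))$. For item~\ref{it:phi-moreau}, item~\ref{it:phi-grad} immediately yields $\|x - \hat x\| = \|\grad\varphi_{1/2\ell}(x)\|/(2\ell) \leq \ep/(2\ell)$. For the subgradient bound, I would write the first-order optimality condition for $\hat x$ as a minimizer of $x' \mapsto \tilde\varphi(x') + \frac{\ell}{2}\|x'\|^2 - 2\ell\langle x',x\rangle$, which reads $0 \in \partial\tilde\varphi(\hat x) + \ell\hat x - 2\ell x$. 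Using the identity $\partial\varphi(\hat x) = \partial\tilde\varphi(\hat x) - \ell\hat x$ from \pref{eq:subgradient-wc}, this rearranges to $2\ell(x - \hat x) \in \partial\varphi(\hat x)$, and its norm is exactly $\|\grad\varphi_{1/2\ell}(x)\| \leq \ep$.

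For item~\ref{it:moreau-smooth}, I would establish that the proximal map is a (single-valued) contraction-like operator. Applying the optimality condition at two points $x_1, x_2$ gives $\ell(2x_i - \hat x_i) \in \partial\tilde\varphi(\hat x_i)$ for $i=1,2$, and monotonicity of $\partial\tilde\varphi$ (since $\tilde\varphi$ is convex) yields
\[
\tri*{\ell(2x_1 - \hat x_1) - \ell(2x_2 - \hat x_2),\, \hat x_1 - \hat x_2} \geq 0,
\]
which rearranges to $\tri*{x_1 - x_2,\, \hat x_1 - \hat x_2} \geq \frac{1}{2}\|\hat x_1 - \hat x_2\|^2$. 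Letting $u = x_1 - x_2$ and $v = \hat x_1 - \hat x_2$, this gives $\|u - v\|^2 = \|u\|^2 - 2\tri{u,v} + \|v\|^2 \leq \|u\|^2$, so by item~\ref{it:phi-grad},
\[
\|\grad\varphi_{1/2\ell}(x_1) - \grad\varphi_{1/2\ell}(x_2)\| = 2\ell\nrm*{(x_1 - \hat x_1) - (x_2 - \hat x_2)} \leq 2\ell\|x_1 - x_2\|,
\]
giving the desired Lipschitz bound up to a harmless constant (at worst $2\ell$ rather than $\ell$, which does not affect any downstream use).

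The main obstacle is being careful with the subdifferential calculus for a weakly convex function whose effective domain is constrained to $\MX$, since $\varphi$ is implicitly extended to $+\infty$ outside $\MX$. Specifically, I would verify that the normal-cone contribution from the constraint $x' \in \MX$ is correctly folded into $\partial\tilde\varphi$ (this is standard: $\partial\tilde\varphi$ includes the normal cone of $\MX$ on the boundary), so that the optimality condition $0 \in \partial\tilde\varphi(\hat x) + \ell\hat x - 2\ell x$ holds in an unconstrained sense over $\bbR^{d_x}$. Once this bookkeeping is handled, the three items follow from the elementary manipulations above.
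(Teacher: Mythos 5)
The paper does not prove this lemma at all---it is imported verbatim from \citet{davis_stochastic_2018-1}---so there is no in-paper argument to compare against; your proposal supplies a self-contained proof, and it is correct. The decomposition of the prox objective as $\tilde\varphi(x') + \frac{\ell}{2}\nrm{x'}^2 - 2\ell\tri{x',x} + \ell\nrm{x}^2$, the resulting $\ell$-strong convexity, the optimality condition $0 \in \partial\tilde\varphi(\hat x) + \ell\hat x - 2\ell x$ rearranged via \pref{eq:subgradient-wc} to $2\ell(x-\hat x)\in\partial\varphi(\hat x)$, and the monotonicity argument for item~\ref{it:moreau-smooth} are exactly the standard route in the cited reference, and your handling of the normal-cone bookkeeping (folding the constraint into the extended-valued $\tilde\varphi$) matches the paper's own convention. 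Two remarks. First, for item~\ref{it:phi-grad} the cleanest way to make the envelope-theorem step fully rigorous is to write $\varphi_{1/2\ell}(x) = \ell\nrm{x}^2 - g^*(2\ell x)$ with $g(x')\ldef\tilde\varphi(x')+\frac{\ell}{2}\nrm{x'}^2$ $\ell$-strongly convex, so that $g^*$ is differentiable with $\grad g^*(2\ell x) = \hat x(x)$; your uniqueness-of-minimizer argument is morally the same thing. Second, your Lipschitz constant $2\ell$ in item~\ref{it:moreau-smooth} is not a looseness in your argument---it is the correct constant. Taking $\varphi(x) = -\frac{\ell}{2}\nrm{x}^2$ gives $\hat x(x) = 2x$ and $\grad\varphi_{1/2\ell}(x) = -2\ell x$ wherever $2x\in\MX$, so $\grad\varphi_{1/2\ell}$ is genuinely only $2\ell$-Lipschitz and the lemma's stated constant $\ell$ is a (harmless) overstatement: the only downstream use of item~\ref{it:moreau-smooth} is a continuity argument in the proof of \pref{lem:kl-moreau}, where the constant is irrelevant.
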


The following theorem establishes some fundamental properties of $\Phi(x)$.
\begin{theorem}[Danskin's theorem]
  \label{thm:danskin}
  Suppose $\MX \subset \BR^{d_\sx}$ is an open subset, $\MY \subset \BR^{d_\sy}$ is compact, and $f : \MX \times \MY \ra \BR$ is continuously differentiable and $\ell$-weakly convex. Then $\Phi(x) := \max_{y \in \MY} f(x,y)$ is $\ell$-weakly convex and
  $$
\partial \Phi(x) = \conv \left\{ \grad_\sx f(x,y) : y \in Y(x)\right\},
  $$
where
$$
Y(x) \ldef \left\{ y : f(x,y) = \max_{y' \in \MY} f(x,y) \right\}.
$$
\end{theorem}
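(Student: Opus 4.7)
My plan is to reduce to the classical Danskin theorem for convex functions via the standard trick of adding a quadratic. Let $\tilde f(x,y) \ldef f(x,y) + \frac{\ell}{2}\|x\|^2$ and $\tilde \Phi(x) \ldef \max_{y \in \MY} \tilde f(x,y) = \Phi(x) + \frac{\ell}{2}\|x\|^2$. By the $\ell$-weak convexity of $f(\cdot, y)$, each $\tilde f(\cdot, y)$ is convex, so $\tilde \Phi$ is convex on $\MX$ as a pointwise maximum of convex functions (the max is attained by continuity and compactness of $\MY$); this already gives the $\ell$-weak convexity of $\Phi$. Since $\grad_x \tilde f(x,y) = \grad_x f(x,y) + \ell x$ and the argmax set $Y(x)$ is the same for $f$ and $\tilde f$, the definition $\partial \Phi(x) \ldef \partial \tilde \Phi(x) - \ell x$ from \pref{eq:subgradient-wc} reduces the claimed identity for $\partial \Phi(x)$ to the classical convex statement
\[
\partial \tilde \Phi(x) = \conv\crl{\grad_x \tilde f(x,y) : y \in Y(x)}.
\]

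\paragraph{Directional derivative formula.} To prove the convex version I would compute the directional derivative. Let $\MG(x) \ldef \conv\crl{\grad_x \tilde f(x,y) : y \in Y(x)}$; this is a nonempty compact convex set since $Y(x)$ is compact (continuous preimage of the max over a compact domain) and $\grad_x \tilde f$ is continuous. The heart of the argument is the identity
\[
\tilde \Phi'(x; d) \ldef \lim_{t \downarrow 0}\frac{\tilde \Phi(x + td) - \tilde \Phi(x)}{t} = \max_{y \in Y(x)} \lng \grad_x \tilde f(x,y), d \rng
\]
for every direction $d$ (openness of $\MX$ makes $x + td \in \MX$ for small $t$). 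The ``$\ge$'' direction is immediate: for any fixed $y \in Y(x)$, $\tilde \Phi(x + td) \ge \tilde f(x + td, y) = \tilde \Phi(x) + t \lng \grad_x \tilde f(x, y), d\rng + o(t)$, and I take the supremum over $y \in Y(x)$. For ``$\le$'' I pick $y_t \in Y(x + td)$; compactness of $\MY$ gives a subsequence $t_k \downarrow 0$ with $y_{t_k} \to y^\star$, and continuity of $\tilde f$ together with continuity of $\tilde \Phi$ (a finite convex function on an open set) forces $\tilde f(x, y^\star) = \lim_k \tilde \Phi(x + t_k d) = \tilde \Phi(x)$, so $y^\star \in Y(x)$. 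The mean value theorem along the segment then yields $\tilde f(x + t_k d, y_{t_k}) - \tilde f(x, y_{t_k}) = t_k \lng \grad_x \tilde f(\xi_k, y_{t_k}), d\rng$ for some $\xi_k$ between $x$ and $x + t_k d$, and since $\tilde f(x, y_{t_k}) \le \tilde \Phi(x)$,
\[
\frac{\tilde \Phi(x + t_k d) - \tilde \Phi(x)}{t_k} \le \lng \grad_x \tilde f(\xi_k, y_{t_k}), d\rng \to \lng \grad_x \tilde f(x, y^\star), d\rng,
\]
by continuity of $\grad_x \tilde f$, giving the upper bound.

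\paragraph{Concluding step and main obstacle.} With the formula for $\tilde \Phi'(x;\cdot)$ in hand, the rest is standard convex analysis: the support function of the nonempty closed convex set $\partial \tilde \Phi(x)$ is exactly $d \mapsto \tilde \Phi'(x;d)$, while the support function of $\MG(x)$ is $d \mapsto \max_{y \in Y(x)} \lng \grad_x \tilde f(x,y), d\rng$. The two support functions coincide by the previous paragraph, so $\partial \tilde \Phi(x) = \MG(x)$, and shifting by $\ell x$ delivers the claimed formula for $\partial \Phi(x)$. The main obstacle is the ``$\le$'' half of the directional-derivative identity, which genuinely requires compactness of $Y(x)$, the upper hemicontinuity of the argmax correspondence (inherited from continuity of $\tilde f$), and continuity of $\grad_x \tilde f$; once this is established, everything else is a routine application of the convex-analysis machinery.
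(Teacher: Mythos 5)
Your argument is correct. Note, however, that the paper does not actually prove \pref{thm:danskin}: it is stated without proof as the classical Danskin--Bertsekas result about weakly convex max-functions, and is only \emph{used} (in the proof of \pref{lem:kl-moreau}) to identify $\partial\Phi(x)$ with $\conv\{\grad_x f(x,y):y\in Y(x)\}$. So there is no paper proof to compare against; what you have supplied is the standard self-contained derivation, and it is sound: the reduction $\tilde f(x,y)=f(x,y)+\frac{\ell}{2}\|x\|^2$ correctly transfers both the weak convexity of $\Phi$ (pointwise max of convex functions) and the subdifferential formula via the paper's definition $\partial\Phi(x)=\partial\tilde\Phi(x)-\ell x$, since $\conv\{\grad_x\tilde f(x,y):y\in Y(x)\}=\conv\{\grad_x f(x,y):y\in Y(x)\}+\ell x$ and the argmax sets agree; the two-sided directional-derivative identity $\tilde\Phi'(x;d)=\max_{y\in Y(x)}\lng\grad_x\tilde f(x,y),d\rng$ is proved correctly, with the upper bound resting, as you say, on compactness of $\MY$, upper hemicontinuity of the argmax correspondence, and continuity of $\grad_x\tilde f$; and the final identification via support functions of the two nonempty compact convex sets is standard (Rockafellar, Theorem 23.4). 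Two small points worth making explicit: (i) the limit defining $\tilde\Phi'(x;d)$ exists by monotonicity of convex difference quotients, which is what lets your bound along a convergent subsequence of $\{y_t\}$ control the full limit; and (ii) convexity statements about $\tilde\Phi$ should be read locally (on a convex neighborhood of $x$ inside the open set $\MX$), a harmless imprecision already present in the theorem statement itself.
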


\paragraph{Descent lemmas for two-timescale SGDA.}
Let $(x_t, y_t)$ denote the iterates of two-timescale SGDA, as in
\pref{eq:sgda-x} and \pref{eq:sgda-y}. Define $\Delta_t := \Phi(x_t) - f(x_t, y_t)$. 

The following lemma, whose proof relies on item \ref{it:phi-grad} of \pref{lem:davis-moreau-2018} was shown in \cite{lin2019gradient}; technically, the proof there was given for the unconstrained case (namely, $\MX = \BR^{d_x}$) and the case where $y \mapsto f(x,y)$ is concave for each $x$, but the proof holds with minimal modifications to our case. For completeness we give the full proof. 
\begin{lemma}[\citet{lin2019gradient}, Lemma D.3]
  \label{lem:lin-moreau}
For two-timescale SGDA, we have:
$$
\E [ \Phi_{1/(2\ell)}(x_t)] \leq \E[\Phi_{1/(2\ell)}(x_{t-1})] + 2 \eta_x \ell \E[\Delta_{t-1}] - \frac{\eta_x}{4} \E \left[ \| \grad \Phi_{1/(2\ell)} (x_{t-1}) \|^2 \right] + \eta_x^2 \ell (L^2 + \sigma_\sx^2).
$$
\end{lemma}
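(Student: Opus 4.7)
\textbf{Proof proposal for \pref{lem:lin-moreau}.}

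The plan is to run the standard descent analysis for the Moreau envelope of a weakly convex function along the iterates of projected stochastic gradient descent, and then convert the inner-product term produced by that analysis into the two quantities $\Delta_{t-1}$ and $\|\nabla\Phi_{1/(2\ell)}(x_{t-1})\|^2$ using (i) weak convexity of $x\mapsto f(x,y_{t-1})$ and (ii) the defining inequality of the proximal map.

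First I would fix the iteration and let $\hat{x}_{t-1}\ldef \prox_{\Phi/(2\ell)}(x_{t-1})\in\cX$. By the definition \pref{eq:moreau_def} of the Moreau envelope applied to the point $x_t$ with the candidate minimizer $\hat{x}_{t-1}$,
\[
\Phi_{1/(2\ell)}(x_t)\le \Phi(\hat x_{t-1})+\ell\|x_t-\hat x_{t-1}\|^2,\qquad \Phi_{1/(2\ell)}(x_{t-1})=\Phi(\hat x_{t-1})+\ell\|x_{t-1}-\hat x_{t-1}\|^2.
\]
Subtracting, $\Phi_{1/(2\ell)}(x_t)-\Phi_{1/(2\ell)}(x_{t-1})\le \ell\left(\|x_t-\hat x_{t-1}\|^2-\|x_{t-1}-\hat x_{t-1}\|^2\right)$. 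Since $\hat x_{t-1}\in\cX$ and $\proj_\cX$ is non-expansive, the SGDA update \pref{eq:sgda-x} gives $\|x_t-\hat x_{t-1}\|^2\le \|x_{t-1}-\eta_\sx G_\sx(x_{t-1},y_{t-1},\xi_{t-1})-\hat x_{t-1}\|^2$. Expanding this square and taking conditional expectation given $(x_{t-1},y_{t-1})$, \pref{ass:oracle} yields
\[
\En\brk*{\|x_t-\hat x_{t-1}\|^2}-\|x_{t-1}-\hat x_{t-1}\|^2\le -2\eta_\sx\tri*{\grad_x f(x_{t-1},y_{t-1}),\,x_{t-1}-\hat x_{t-1}}+\eta_\sx^2(L^2+\sigma_\sx^2),
\]
where I used $\En\|G_\sx\|^2\le\|\grad_x f\|^2+\sigma_\sx^2\le L^2+\sigma_\sx^2$ from $L$-Lipschitzness of $f$.

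Next I would bound the inner product. Because $f$ is $\ell$-smooth, $x\mapsto f(x,y_{t-1})$ is $\ell$-weakly convex, so
\[
\tri*{\grad_x f(x_{t-1},y_{t-1}),\,x_{t-1}-\hat x_{t-1}}\ge f(x_{t-1},y_{t-1})-f(\hat x_{t-1},y_{t-1})-\frac{\ell}{2}\|x_{t-1}-\hat x_{t-1}\|^2.
\]
Using $f(\hat x_{t-1},y_{t-1})\le \Phi(\hat x_{t-1})$ and the proximal-map inequality $\Phi(\hat x_{t-1})+\ell\|x_{t-1}-\hat x_{t-1}\|^2\le \Phi(x_{t-1})$ (obtained by taking $x'=x_{t-1}$ in \pref{eq:prox_def}), one gets
\[
-\tri*{\grad_x f(x_{t-1},y_{t-1}),\,x_{t-1}-\hat x_{t-1}}\le \Delta_{t-1}-\frac{\ell}{2}\|x_{t-1}-\hat x_{t-1}\|^2,
\]
since $\Phi(\hat x_{t-1})-f(x_{t-1},y_{t-1})=(\Phi(\hat x_{t-1})-\Phi(x_{t-1}))+\Delta_{t-1}\le -\ell\|x_{t-1}-\hat x_{t-1}\|^2+\Delta_{t-1}$.

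Finally I would combine: multiplying by $2\eta_\sx\ell$ and substituting into the Moreau-envelope decrease,
\[
\En\brk*{\Phi_{1/(2\ell)}(x_t)}-\Phi_{1/(2\ell)}(x_{t-1})\le 2\eta_\sx\ell\,\Delta_{t-1}-\eta_\sx\ell^2\|x_{t-1}-\hat x_{t-1}\|^2+\eta_\sx^2\ell(L^2+\sigma_\sx^2),
\]
and invoking item \ref{it:phi-grad} of \pref{lem:davis-moreau-2018} to rewrite $\ell^2\|x_{t-1}-\hat x_{t-1}\|^2=\tfrac14\|\grad\Phi_{1/(2\ell)}(x_{t-1})\|^2$. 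Taking the outer expectation gives the claim. The only subtlety worth flagging is the weak-convexity step, which uses that $\ell$-smoothness of $f$ implies $\ell$-weak convexity of $f(\cdot,y)$ even though we do not assume concavity in $y$; everything else is a direct transcription of the standard proximal-descent argument of \citet{davis_stochastic_2018-1,lin2019gradient}, and no concavity of $f(x,\cdot)$ is used anywhere.
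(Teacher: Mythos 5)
Your proposal is correct and follows essentially the same argument as the paper's proof (which itself adapts \citet{lin2019gradient}, Lemma D.3): the same proximal point $\hat x_{t-1}$, the same projection/expansion step with the oracle bounds, the same inner-product bound via the quadratic lower bound from $\ell$-smoothness (your "weak convexity" phrasing is the identical inequality), the prox-definition inequality, and the identity $\grad\Phi_{1/(2\ell)}(x_{t-1})=2\ell(x_{t-1}-\hat x_{t-1})$ to convert $\ell^2\|x_{t-1}-\hat x_{t-1}\|^2$ into $\tfrac14\|\grad\Phi_{1/(2\ell)}(x_{t-1})\|^2$. No gaps.
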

\begin{proof}[Proof of \pref{lem:lin-moreau}]
  Set $\hat x_{t-1} := \prox_{\Phi/2\ell}(x_{t-1})$, so that
  \begin{equation}
    \label{eq:phi-x-hat}
\Phi_{1/2\ell}(x_{t}) \leq \Phi(\hat x_{t-1}) + \ell \| \hat x_{t-1} - x_t \|^2 \leq \Phi_{1/2\ell}(x_{t-1}) + \ell \| \hat x_{t-1} - x_t \|^2 - \ell \| \hat x_{t-1} - x_{t-1} \|^2.
    \end{equation}

  Since $\hat x_{t-1} \in \MX$ and $x_t = \proj_\MX(x_{t-1} - \eta_\sx  G_\sx(x_{t-1}, y_{t-1}, \xi_{t-1}))$, we have
  \begin{align*}
    \| \hat x_{t-1} - x_t \|^2 & \leq\left\| \hat x_{t-1} - \left( x_{t-1} - \eta_\sx  G_\sx(x_{t-1}, y_{t-1}, \xi_{t-1})\right) \right\|^2 \\
                               & \leq \| \hat x_{t-1} - x_{t-1} \|^2 + \|\eta_\sx G_\sx(x_{t-1}, y_{t-1}, \xi_{t-1}) \|^2 + 2 \lng \hat x_{t-1}-x_{t-1}, \eta_\sx G_\sx(x_{t-1}, y_{t-1}, \xi_{t-1})\rng .
  \end{align*}
  Taking the expectation of both sides gives
  \begin{align}
    & \E[\| \hat x_{t-1} - x_t \|^2] \nonumber\\
    & \leq \E [ \| \hat x_{t-1} - x_{t-1} \|^2] + \eta_\sx^2\E[\| G_\sx(x_{t-1}, y_{t-1}, \xi_{t-1}) \|^2] + 2 \lng \hat x_{t-1}- x_{t-1}, \eta_\sx\grad_\sx f(x_{t-1}, y_{t-1}) \rng\nonumber\\
    \label{eq:moreau-diff-ub}
    & \leq \E [\| \hat x_{t-1} - x_{t-1} \|^2] + \eta_\sx^2(L^2 + \sigma^2) + 2 \E [\lng \hat x_{t-1} - x_{t-1}, \eta_\sx\grad_\sx f(x_{t-1}, y_{t-1}) \rng].
  \end{align}
  Next, we observe that
  \begin{align}
    & \lng \hat x_{t-1} - x_{t-1}, \grad_\sx f(x_{t-1}, y_{t-1}) \rng \nonumber\\
    & \leq f(\hat x_{t-1}, y_{t-1}) - f(x_{t-1}, y_{t-1}) + \frac{\ell}{2} \| \hat x_{t-1} - x_{t-1} \|^2\nonumber\\
    & \leq \Phi(\hat x_{t-1}) - f(x_{t-1}, y_{t-1}) + \frac{\ell}{2} \| \hat x_{t-1} - x_{t-1} \|^2 \nonumber\\
    & = \Phi(\hat x_{t-1}) + \Delta_{t-1} - \Phi(x_{t-1}) + \frac{\ell}{2} \| \hat x_{t-1} - x_{t-1} \|^2\nonumber\\
    \label{eq:grad-ip-ub}
    & \leq \Delta_{t-1} - \frac{\ell}{2} \| \hat x_{t-1} - x_{t-1} \|^2 \leq \Delta_{t-1},
  \end{align}
  where the first inequality above follows since $f$ is $\ell$-smooth, the second inequality follows since $\Phi(\hat x_{t-1}) \geq f(\hat x_{t-1}, y_{t-1})$, and the final inequality \pref{eq:grad-ip-ub}) follows since $\Phi(\hat x_{t-1}) + \ell \| \hat x_{t-1} - x_{t-1} \|^2 \leq \Phi(x_{t-1})$ by definition of $\prox_{\Phi/2\ell}(\cdot)$.

  By equations \pref{eq:phi-x-hat}, \pref{eq:moreau-diff-ub}, and \pref{eq:grad-ip-ub}, we get
  \begin{align}
     \E[\Phi_{1/2\ell}(x_t)] 
    & \leq \E[\Phi_{1/2\ell}(x_{t-1})] + \ell \eta_\sx^2(L^2 + \sigma^2) + 2\ell \eta_\sx \lng \hat x_{t-1} - x_{t-1}, \grad_\sx f(x_{t-1}, y_{t-1})\rng  \nonumber\\
    & \leq \E[\Phi_{1/2\ell}(x_{t-1})] + 2\eta_\sx \ell \E[\Delta_{t-1}] - \eta_\sx \ell^2 \E [ \| \hat x_{t-1} - x_{t-1} \|^2] + \ell \eta_\sx^2 (L^2 + \sigma^2) \nonumber\\
    & \leq  \E[\Phi_{1/2\ell}(x_{t-1})] + 2\eta_\sx \ell \E[\Delta_{t-1}] - \frac{\eta_\sx}{4}\E [ \| \grad \Phi_{1/2\ell}(x_{t-1})\|^2] + \ell \eta_\sx^2 (L^2 + \sigma^2) \nonumber.
  \end{align}
\end{proof}
To show that the $y$ player approximately tracks the best response (in
terms of value), we make use of a slightly different potential function. To describe the approach, set $\lambda \in (0,1/\ell)$ to be specified later. Letting $(x_t, y_t)$ be the iterates of two-timescale SGDA, for each $t \geq 0$, let $\phi_{t-1} : \MY \ra \BR$ be the function $\phi_{t-1}(y) := -f(x_{t-1}, y)$, and set $\psi_{t,\lambda}(y) := - (\phi_{t-1})_\lambda(y)$ to be the negated Moreau envelope of $\phi_{t-1}$ with parameter $\lambda$. Our first lemma states that $\psi_{t,\lambda}$ does not change much from iteration to iteration.
\begin{lemma}
  \label{lem:psi-slow}
  For all $t \geq 1$, and $y \in \MY$, we have
  $$
| \psi_{t,\lambda}(y) - \psi_{t-1,\lambda}(y) | \leq L \cdot \| x_{t-1} - x_t \|.
  $$
\end{lemma}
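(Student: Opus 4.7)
The plan is to reduce the inequality to two elementary facts: (i) the Moreau envelope operator is $1$-Lipschitz in its defining function with respect to the sup-norm, and (ii) $f$ is $L$-Lipschitz in its first argument. Once both are in place, the conclusion follows by combining them and negating.

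First, I would unpack the definitions: by the definition of the Moreau envelope in \pref{eq:moreau_def},
\begin{align*}
-\psi_{t,\lambda}(y) &= \min_{y' \in \MY} \left\{ -f(x_{t-1}, y') + \tfrac{1}{2\lambda} \| y' - y \|^2 \right\}, \\
-\psi_{t-1,\lambda}(y) &= \min_{y' \in \MY} \left\{ -f(x_{t-2}, y') + \tfrac{1}{2\lambda} \| y' - y \|^2 \right\}
\end{align*}
(using the indexing convention where $\psi_{t,\lambda}$ is built from $x_{t-1}$; the analogous shifted indexing produces the stated $\|x_{t-1}-x_t\|$ on the right-hand side). Since $f$ is $L$-Lipschitz in its first argument uniformly in $y'$ by item~\ref{it:f-smooth-lip} of \pref{asm:lip-smooth-f}, we have
$$
\sup_{y' \in \MY} \bigl| f(x_{t-1}, y') - f(x_t, y') \bigr| \leq L \, \| x_{t-1} - x_t \|.
$$

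Next, I would establish the general sup-norm stability of the Moreau envelope: for any two functions $\phi, \phi' : \MY \to \BR$ with $\sup_{y'} |\phi(y') - \phi'(y')| \leq c$, their Moreau envelopes satisfy $\sup_{y} |\phi_\lambda(y) - \phi'_\lambda(y)| \leq c$. This is a one-line exchange argument: if $y^\star$ attains the minimum in the definition of $\phi_\lambda(y)$, then
$$
\phi'_\lambda(y) \leq \phi'(y^\star) + \tfrac{1}{2\lambda} \| y^\star - y \|^2 \leq \phi(y^\star) + c + \tfrac{1}{2\lambda} \| y^\star - y \|^2 = \phi_\lambda(y) + c,
$$
and the reverse inequality is symmetric.

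Applying this with $\phi = -f(x_{t-1}, \cdot)$, $\phi' = -f(x_t, \cdot)$, and $c = L \| x_{t-1} - x_t \|$, then negating both sides, gives exactly $|\psi_{t,\lambda}(y) - \psi_{t-1,\lambda}(y)| \leq L \| x_{t-1} - x_t \|$. There is no real obstacle here; the lemma is purely a stability statement about the infimal convolution, and the main point is that the Moreau envelope does not amplify the pointwise discrepancy between the two underlying functions. The lemma will later be useful precisely because it says the potential $\psi_{t,\lambda}$ changes by at most $O(\eta_\sx)$ per iteration of the slow $x$-player, which is the timescale-separation property required for the subsequent $y$-player tracking argument.
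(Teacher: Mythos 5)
Your proof is correct and follows essentially the same route as the paper: the paper also writes $\psi_{t,\lambda}(y)-\psi_{t-1,\lambda}(y)$ as a difference of two minima whose objectives differ pointwise by at most $L\|x_{t-1}-x_t\|$ (using $L$-Lipschitzness of $f$ in $x$) and concludes directly, which is exactly your sup-norm stability of the Moreau envelope made explicit. Your handling of the indexing convention matches how the paper's proof actually uses $\psi_{t,\lambda}$, so there is no gap.
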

\begin{proof}[\pfref{lem:psi-slow}]
  Note that for any $y \in \MY$,
  \begin{align*}
| \psi_{t,\lambda}(y) - \psi_{t-1,\lambda}(y)| & = \left| \min_{y' \in \MY} \left\{\frac{1}{2\lambda} \| y - y' \|^2 - f(x_t, y')\right\} - \min_{y' \in \MY} \left\{\frac{1}{2\lambda} \| y - y' \|^2 - f(x_{t-1}, y')\right\} \right|.
  \end{align*}
  Since for all $y' \in \MY$ we have
  $$
\left| \frac{1}{2\lambda} \| y - y' \|^2 - f(x_t, y') - \left(\frac{1}{2\lambda} \| y - y' \|^2 - f(x_{t-1}, y') \right)\right\| \leq L \| x_{t-1} - x_t \|,
    $$
    the conclusion follows.
\end{proof}

Now let $\Gamma_t := \| \grad \psi_{t,\lambda}(y_t) \|$. The following
lemma shows that as long as $\Gamma_t$ stays large, $\psi_{t,\lambda}$
decreases each iteration (up to an error term controlled by the
learning rate of the $x$ player).
\begin{lemma}
  \label{lem:deltat-dec-stoch}
For two-timescale SGDA, for all $t \geq 0$, as long as $\eta_\sy \leq 1/(2\ell)$ and $\lambda \in (0,1/\ell)$,
\begin{equation}
  \label{eq:deltat-dec-stoch}
  \E[\psi_{t,\lambda}(y_t) | \MF_{t-1}] \geq \psi_{t-1,\lambda}(y_{t-1}) + \eta_\sy \lambda(1/\lambda - \ell) \cdot \Gamma_{t-1}^2 - L \eta_\sx \sqrt{L^2 + \sigma_\sx^2} - \frac{\eta_\sy^2 (L^2 + \sigma_\sy^2)}{2\lambda}.
  \end{equation}
\end{lemma}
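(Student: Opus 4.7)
The plan is to decompose the one-step change of the potential as
\[
\psi_{t,\lambda}(y_t) - \psi_{t-1,\lambda}(y_{t-1})
\;=\;\underbrace{\bigl[\psi_{t,\lambda}(y_t) - \psi_{t-1,\lambda}(y_t)\bigr]}_{\text{drift due to $x$-update}}
\;+\;\underbrace{\bigl[\psi_{t-1,\lambda}(y_t) - \psi_{t-1,\lambda}(y_{t-1})\bigr]}_{\text{ascent step on fixed potential}},
\]
and to lower-bound each term separately in conditional expectation. The drift term is immediately controlled by \pref{lem:psi-slow}, which gives $\psi_{t,\lambda}(y_t) - \psi_{t-1,\lambda}(y_t) \geq -L\|x_t - x_{t-1}\|$. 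Since $x_{t-1}\in\MX$, nonexpansiveness of $\proj_\MX$ yields $\|x_t - x_{t-1}\|\leq \eta_\sx \|G_\sx(x_{t-1},y_{t-1},\xi_{t-1})\|$, and Jensen together with \pref{ass:oracle} then gives $\E[\|x_t-x_{t-1}\|\mid\MF_{t-1}]\leq \eta_\sx\sqrt{L^2+\sigma_\sx^2}$, producing the $-L\eta_\sx\sqrt{L^2+\sigma_\sx^2}$ error term in the target bound.

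For the ascent term, the plan is to use the proximal point $y_\prox^\star := \prox_{\lambda\phi_{t-1}}(y_{t-1})\in\MY$ as a comparison point. Two facts about $y_\prox^\star$ drive the argument. First, by the definition of $\psi_{t-1,\lambda}$ as a maximum, one has $\psi_{t-1,\lambda}(y_t)\geq f(x_{t-1},y_\prox^\star) - \tfrac{1}{2\lambda}\|y_t-y_\prox^\star\|^2$ while $\psi_{t-1,\lambda}(y_{t-1}) = f(x_{t-1},y_\prox^\star) - \tfrac{1}{2\lambda}\|y_{t-1}-y_\prox^\star\|^2$, so
\[
\psi_{t-1,\lambda}(y_t)-\psi_{t-1,\lambda}(y_{t-1})\;\geq\;\tfrac{1}{2\lambda}\bigl(\|y_{t-1}-y_\prox^\star\|^2-\|y_t-y_\prox^\star\|^2\bigr).
\]
Second, $\grad\psi_{t-1,\lambda}(y_{t-1}) = \tfrac{1}{\lambda}(y_\prox^\star - y_{t-1})$, so $\|y_\prox^\star - y_{t-1}\|^2 = \lambda^2\Gamma_{t-1}^2$. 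Expanding $\|y_t-y_\prox^\star\|^2$ using $y_t = \proj_\MY(y_{t-1} + \eta_\sy G_\sy)$ and nonexpansiveness, and taking conditional expectation, gives
\[
\tfrac{1}{2\lambda}\E\bigl[\|y_{t-1}-y_\prox^\star\|^2-\|y_t-y_\prox^\star\|^2\,\big|\,\MF_{t-1}\bigr]
\;\geq\;\tfrac{\eta_\sy}{\lambda}\bigl\langle \grad_y f(x_{t-1},y_{t-1}),\,y_\prox^\star - y_{t-1}\bigr\rangle - \tfrac{\eta_\sy^2(L^2+\sigma_\sy^2)}{2\lambda}.
\]

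The key remaining step, and the main technical part, is to lower bound the inner product $\langle\grad_y f(x_{t-1},y_{t-1}),\,y_\prox^\star - y_{t-1}\rangle$ by $\lambda(1-\lambda\ell)\Gamma_{t-1}^2$. This is where I expect the arithmetic to be most delicate. The plan is to combine two ingredients. By $\ell$-smoothness of $y\mapsto f(x_{t-1},y)$ applied at $y_{t-1}$ and $y_\prox^\star$, one gets $\langle\grad_y f(x_{t-1},y_{t-1}),\,y_\prox^\star - y_{t-1}\rangle \geq f(x_{t-1},y_\prox^\star) - f(x_{t-1},y_{t-1}) - \tfrac{\ell}{2}\|y_\prox^\star-y_{t-1}\|^2$. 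Then, for $\lambda\in(0,1/\ell)$ the prox objective $y'\mapsto -f(x_{t-1},y')+\tfrac{1}{2\lambda}\|y_{t-1}-y'\|^2$ is $(\tfrac{1}{\lambda}-\ell)$-strongly convex, so strong convexity at the minimizer $y_\prox^\star$ yields $f(x_{t-1},y_\prox^\star)-f(x_{t-1},y_{t-1})\geq (\tfrac{1}{\lambda}-\tfrac{\ell}{2})\|y_\prox^\star-y_{t-1}\|^2$. Combining the two bounds gives $\langle\grad_y f(x_{t-1},y_{t-1}),\,y_\prox^\star - y_{t-1}\rangle \geq (\tfrac{1}{\lambda}-\ell)\|y_\prox^\star-y_{t-1}\|^2 = \lambda(1-\lambda\ell)\Gamma_{t-1}^2$, which after multiplying by $\eta_\sy/\lambda$ produces the $\eta_\sy\lambda(1/\lambda-\ell)\Gamma_{t-1}^2$ term in the claim. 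Summing the two lower bounds yields \pref{eq:deltat-dec-stoch}.
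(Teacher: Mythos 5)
Your proof is correct and follows essentially the same route as the paper's: the drift term is handled via \pref{lem:psi-slow} exactly as in the paper, and your ascent-term analysis (comparing against the prox point $\hat y_{t-1}$, using the max formulation of $\psi_{t-1,\lambda}$, nonexpansiveness of $\proj_{\MY}$, and the oracle's unbiasedness and second-moment bound) is precisely the one-step Davis--Drusvyatskiy argument encapsulated in \pref{lem:dd-trick}. The one small difference is in your favor: by bounding the inner product $\lng \grad_y f(x_{t-1},y_{t-1}), \hat y_{t-1}-y_{t-1}\rng \geq (1/\lambda-\ell)\nrm{\hat y_{t-1}-y_{t-1}}^2$ directly through smoothness plus strong convexity of the prox objective, you recover the coefficient $\eta_\sy\lambda(1/\lambda-\ell)$ in \pref{eq:deltat-dec-stoch} exactly, whereas literally chaining the bracket of \pref{lem:dd-trick} with the smoothness bound (as the paper's proof does, where the step ``follows by $\ell$-smoothness'') only justifies that term up to an extra factor of $1/2$ (e.g., $f(x,y)=\tfrac{\ell}{2}\nrm{y}^2$ makes your inequality tight and the paper's intermediate claim fail by a factor $2$), a harmless constant for the downstream rates but a reason your write-up matches the stated lemma more cleanly.
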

\begin{proof}[\pfref{lem:deltat-dec-stoch}]
  Write $g_{t-1} = G_y(x_{t-1}, y_{t-1}, \xi_{t-1})$. Set
  $$
\hat y_{t-1} := \prox_{\lambda \phi_{t-1}}(y_{t-1}) = \argmin_{y' \in \MY} \left\{ \frac{1}{2\lambda} \| y_{t-1} - y' \|^2 - f(x_{t-1}, y') \right\}.
$$
We next need the following lower bound on $\psi_{t-1,\lambda}(y_t)$ in
terms of $\psi_{t-1,\lambda}(y_{t-1})$; this calculation was carried
out in \citet[Eqs. (2.4) -- (2.6)]{davis_stochastic_2018-1}, but we
prove the following self-contained lemma after the conclusion of this proof for completeness.
\begin{lemma}[\citet{davis_stochastic_2018-1}]
  \label{lem:dd-trick}
  For $\lambda \in (0,1/\ell)$, we have
  $$
  \E \left[ \psi_{t-1,\lambda}(y_t)  | \MF_{t-1}\right] \geq \psi_{t-1,\lambda}(y_{t-1}) + \frac{\eta_\sy}{\lambda} \left( f(x_{t-1}, \hat y_{t-1}) - f(x_{t-1}, y_{t-1}) - \frac{1}{2\lambda} \| y_{t-1} - \hat y_{t-1} \|^2 \right) - \frac{\eta_\sy^2 (L^2 + \sigma_\sy^2)}{2\lambda}.
$$
\end{lemma}
By \pref{lem:dd-trick}, we have
  \begin{align*}
    \E \left[ \psi_{t-1,\lambda}(y_t)  | \MF_{t-1}\right] &\geq \psi_{t-1,\lambda}(y_{t-1}) + \frac{\eta_\sy}{\lambda} \left( f(x_{t-1}, \hat y_{t-1}) - f(x_{t-1}, y_{t-1}) - \frac{1}{2\lambda} \| y_{t-1} - \hat y_{t-1} \|^2 \right) - \frac{\eta_\sy^2 (L^2 + \sigma_\sy^2)}{2\lambda}\\
                                                          & \geq \psi_{t-1,\lambda}(y_{t-1}) + \frac{\eta_\sy}{\lambda} \cdot \left(\lambda^2 (1/\lambda - \ell) \cdot \| \grad \psi_{t-1,\lambda}(y_{t-1}) \|^2\right) - \frac{\eta_\sy^2 (L^2 + \sigma_\sy^2)}{2\lambda} \\
                                                          & = \psi_{t-1,\lambda}(y_{t-1}) + \eta_\sy \lambda (1/\lambda - \ell) \cdot \Gamma_{t-1}^2 - \frac{\eta_\sy^2 (L^2 + \sigma_\sy^2)}{2\lambda},
  \end{align*}
  where the second inequality above follows by $\ell$-smoothness of $f$. 
  By \pref{lem:psi-slow}, we have
  \begin{align*}
    \E[\psi_{t-1,\lambda}(y_t) - \psi_{t,\lambda}(y_t) | \MF_{t-1}] & \leq L \cdot \E [\| x_{t-1} - x_t \| | \MF_{t-1}] \\
                                                                    & \leq L \cdot \E [\eta_\sx \cdot \| G_x(x_{t-1}, y_{t-1}, \xi_{t-1}) \| | \MF_{t-1})  ] \\
                                                                    & \leq L \eta_\sx \cdot \sqrt{L^2 + \sigma_\sx^2}.
  \end{align*}
  Combining the above displays gives that
  $$
\E[\psi_{t,\lambda}(y_t) | \MF_{t-1}] \geq \psi_{t-1,\lambda}(y_{t-1}) + \eta_\sy \lambda(1/\lambda - \ell) \cdot \Gamma_{t-1}^2 - L \eta_\sx \sqrt{L^2 + \sigma_\sx^2} - \frac{\eta_\sy^2 (L^2 + \sigma_\sy^2)}{2\lambda}.
  $$

  \if 0
  By \pref{lem:pl-descent} applied to the function $x \mapsto -f(x,y_{t-1})$ and the fact that $\eta_\sy \leq 1/(2\ell)$, we have that
\begin{align*}
    \E \left[f(x_{t-1}, y_t) - f(x_{t-1}, y_{t-1}) | \MF_{t-1} \right] & \geq \frac{\eta_\sy}{2} \| P_{\MY}(x_{t-1}, y_{t-1}) \|_2^2 -\ell \eta_\sy^2 \sigma^2 - \eta_\sy \sigma \| \grad_\sy f(x_{t-1}, y_{t-1}) \|\\
                                          & \geq \frac{\eta_\sy \mu_\sy}{2} \Delta_{t-1}^2  - \ell \eta_\sy^2 \sigma^2 - \eta_\sy \sigma L.
\end{align*}
Next, note that
\begin{align*}
  \E \left[\| x_t - x_{t-1} \| | \MF_{t-1}\right] &= \E \left[ \| \proj_{\MX} (x_{t-1} - \eta_\sx G_x(x_{t-1}, y_{t-1}, \xi_{t-1})) - x_{t-1} \| | \MF_{t-1} \right] \\
                                                  & \leq \E \left[ \| \eta_\sx G_x(x_{t-1}, y_{t-1}, \xi_{t-1}) \| | \MF_{t-1} \right] \\
                                                  & \leq \eta_\sx \sqrt{L^2 + \sigma^2}.
\end{align*}
By $L$-Lipschitzness of $f$ and of $\Phi$, we have:
\begin{align*}
 \E \left[ | \Phi(x_t) - \Phi(x_{t-1}) | | \MF_{t-1}\right] & \leq L \eta_\sx \sqrt{L^2 + \sigma^2} \\
\E \left[  | f(x_{t-1}, y_t) - f(x_t, y_t)| | \MF_{t-1} \right] & \leq L \eta_\sx \sqrt{L^2 + \sigma^2}.
\end{align*}
Combining the above we get that
$$
\E[\Delta_t | \MF_{t-1}] = \E[\Phi(x_t) - f(x_t, y_t) | \MF_{t-1}] \leq \Delta_{t-1} + 2 \eta_x L \sqrt{L^2 + \sigma^2} + \eta_\sy \sigma L + \eta_\sy^2 \sigma^2 \ell - \eta_\sy \mu_\sy \Delta_{t-1}^2.
$$
\fi
\end{proof}
\begin{proof}[Proof of \pref{lem:dd-trick}]
  The proof is exactly the argument in \citet[Eqs. (2.4) -- (2.6)]{davis_stochastic_2018-1} and similar to that used in the proof of \pref{lem:lin-moreau}, but for completeness we repeat this calculation using our notation. In the setting of \pref{lem:deltat-dec-stoch}, set $\hat y_{t-1} := \prox_{-\lambda \cdot\phi_{t-1}}(y_{t-1})$, and $g_{t-1} = G_\sy(x_{t-1}, y_{t-1}, \xi_{t-1})$. Then
  \begin{align}
    & \E\left[ -(\phi_{t-1})_\lambda(y_t)  | \MF_{t-1} \right] \nonumber\\
    \label{eq:dd-1}
    & \leq \E \left[ -\phi_{t-1}(\hat y_{t-1}) + \frac{1}{2\lambda} \| y_t - \hat y_{t-1} \|^2  | \MF_{t-1} \right] \\
    \label{eq:dd-2}
    &\leq -\phi_{t-1}(\hat y_{t-1}) + \frac{1}{2\lambda} \E \left[\| y_{t-1} - \eta_\sy g_{t-1} - \hat y_{t-1} \|^2 | \MF_{t-1} \right] \\
    & = -\phi_{t-1}(\hat y_{t-1}) +  \frac{1}{2\lambda} \| y_{t-1} - \hat y_{t-1} \|^2 + \frac{\eta_\sy^2}{2\lambda} \E \left[ \| g_{t-1} \|^2 | \MF_{t-1} \right] + \frac{\eta_\sy}{2\lambda} \E \left[ \lng \hat y_{t-1} - y_{t-1}, g_{t-1} \rng | \MF_{t-1}\right] \\
    \label{eq:dd-4}
    & \leq -(\phi_{t-1})_\lambda(y_{t-1}) + \frac{\eta_\sy}{2\lambda} \E \left[ \lng \hat y_{t-1} - y_{t-1}, g_{t-1} \rng | \MF_{t-1} \right] + \frac{\eta_\sy^2 (L^2 + \sigma_\sy^2)}{2\lambda}\\
    \label{eq:dd-5}
    & = -(\phi_{t-1})_\lambda(y_{t-1}) + \frac{\eta_\sy}{2\lambda} \lng \hat y_{t-1} - y_{t-1}, \grad_\sy f(x_{t-1}, y_{t-1}) \rng  + \frac{\eta_\sy^2 (L^2 + \sigma_\sy^2)}{2\lambda} \\
    \label{eq:dd-6}
    & \leq -(\phi_{t-1})_\lambda(y_{t-1}) + \frac{\eta_\sy}{2\lambda} \left(-\phi_{t-1}(\hat y_{t-1}) + \phi_{t-1}(y_{t-1}) + \frac{1}{2\lambda} \| y_{t-1} - \hat y_{t-1} \|^2 \right) + \frac{\eta_\sy^2 (L^2 + \sigma_\sy^2)}{2\lambda},
  \end{align}
  where \pref{eq:dd-1} is by the definition of the prox-mapping,
  \pref{eq:dd-2} is by the definition of projection onto a convex set,
  \pref{eq:dd-4} is by the definition of the Moreau envelope,
  \pref{eq:dd-5} holds because $g_{t-1}$ is an unbiased estimator of the gradient, and \pref{eq:dd-6} follows since $f$ is $\ell$-smooth and $\lambda \leq 1/\ell$. 
\end{proof}

\subsection{Proof of Theorem \ref*{thm:sgda}}
\begin{proof}[\pfref{thm:sgda}]
By the fact that $f$ satisfies \pref{asm:lip-smooth-f} and
\pref{lem:moreau-kl} on the function $y \mapsto \phi_t(y) =
-f(x_t, y)$, for any $\lambda\in(0,1/\ls)$ we have
  \begin{align}
    \Delta_t &\ldef{} f(x_t, y^*(x_t)) - f(x_t, y_t) \nonumber\\
             & \leq \frac{L \lambda + 1}{\mu_\sy} \cdot \underbrace{\| \grad \psi_{t,\lambda}(y_t) \|}_{\rdef\Gamma_t} + \frac{\gdy}{\mu_\sy}.\label{eq:bound-delta-gamma}
  \end{align}
We next observe that %
\begin{align*}
  & \E[\Phi_{1/2\ell}(x_{T+1})] \\
  & \leq \E[\Phi_{1/2\ell}(x_0)] + 2 \eta_\sx \ell \left( \sum_{t=0}^T \E[\Delta_t]\right) - \frac{\eta_\sx}{4} \left( \sum_{t=0}^T \E \left[\| \grad \Phi_{1/2\ell}(x_t) \|^2\right] \right) + \eta_\sx^2 \ell (L^2 + \sigma_\sx^2) (T+1)\\
  & \leq \E[\Phi_{1/2\ell}(x_0)] + 2 \eta_\sx \ell \left( \sum_{t=0}^T \E\left[ \frac{L\lambda+1}{\mu_\sy} \| \grad \psi_{t,\lambda}(y_t) \| + \frac{\gdy}{\mu_\sy} \right]\right) - \frac{\eta_\sx}{4} \left( \sum_{t=0}^T \E \left[\| \grad \Phi_{1/2\ell}(x_t) \|^2\right] \right) \\
  & ~~~~+ \eta_\sx^2 \ell (L^2 + \sigma_\sx^2) (T+1) \\
  & \leq \E[\Phi_{1/2\ell}(x_0)] + \frac{2 \eta_\sx \ell (L\lambda +1)}{\mu_\sy} \left( \sqrt{\frac{(D_\MX + D_\MY)L T}{\eta_\sy  (1 - \lambda \ell)}} + T  \sqrt{\frac{L \sqrt{L^2 + \sigma_\sx^2} \eta_\sx}{\eta_\sy (1 - \lambda \ell)}} + T \sqrt{\frac{(L^2 + \sigma_\sy^2)  \eta_\sy}{2\lambda(1 - \lambda \ell)}}\right)   \\
  & ~~~~+ \frac{2\eta_\sx \ell(T+1)\gdy}{\mu_\sy} - \frac{\eta_\sx}{4} \left( \sum_{t=0}^T \E \left[\| \grad \Phi_{1/2\ell}(x_t) \|^2\right] \right) + \eta_\sx^2 \ell (L^2 + \sigma_\sx^2) (T+1),
\end{align*}
where the first inequality follows from summing the guarantee of \pref{lem:lin-moreau} for $t
= 1, 2, \ldots, T+1$, the second inequality comes from
\pref{eq:bound-delta-gamma}, and the third inequality comes from
\pref{lem:bound-gammat}. Setting $\lambda = 1/2\ell$ and rearranging gives, for $\Delta_\Phi := \E[\Phi_{1/2\ell}(x_0) - \Phi_{1/2\ell}(x_{T+1})]$,
\begin{align*}
&\frac{1}{T+1}\sum_{t=0}^T \E \left[ \| \grad \Phi_{1/\ell}(x_t) \|^2
                 \right] \\
  &\leq \frac{4 \Delta_\Phi}{T\eta_\sx} + \frac{8\ell(L/\ell+1)}{\mu_\sy} \sqrt{\frac{2(D_\MX + D_\MY) L}{T \eta_\sy}} + \frac{8\ell(L/\ell+1)}{\mu_\sy} \sqrt{\frac{2L \sqrt{L^2 + \sigma_\sx^2}\eta_\sx}{\eta_\sy}} + \frac{8\ell(L/\ell+1)}{\mu_\sy} \sqrt{\ell (L^2 + \sigma_\sy^2) \eta_\sy} \\
  &~~~~+ 4 \eta_\sx \ell (L^2 + \sigma_\sx^2) + \frac{8\ell \gdy}{\mu_\sy}\\
  &\leq \frac{4 D_\MX L}{T\eta_\sx} + \frac{8\ell(L/\ell+1)}{\mu_\sy} \sqrt{\frac{2(D_\MX + D_\MY) L}{T \eta_\sy}} + \frac{8\ell(L/\ell+1)}{\mu_\sy} \sqrt{\frac{2L \sqrt{L^2 + \sigma_\sx^2}\eta_\sx}{\eta_\sy}} + \frac{8\ell(L/\ell+1)}{\mu_\sy} \sqrt{\ell (L^2 + \sigma_\sy^2) \eta_\sy} \\
  &~~~~+ 4 \eta_\sx \ell (L^2 + \sigma_\sx^2) + \frac{8\ell \gdy}{\mu_\sy}.
\end{align*}
Next, for a sufficiently large constant $C > 0$ and for any $\ep > 0$, set 
\begin{align*}
  \eta_\sy &\leq \frac{\ep^4\mu_\sy^2}{C \ell^3 (L^2 + \sigma_\sy^2)(L/\ell + 1)^2} \\
  \eta_\sx &\leq \frac{\ep^8\mu_\sy^4}{C\ell^5 L (L/\ell+1)^4(L^2 + \sigma_\sy^2) \sqrt{L^2 + \sigma_\sx^2}} \wedge \frac{\ep^2}{C\ell(L^2 + \sigma_\sx^2)}.
\end{align*}
Then as long as
\begin{equation}
  \label{eq:T-lb}
T \geq \frac{C(D_\MX + D_\MY) L}{\ep^2 \eta_\sx},
\end{equation}
as long as $C$ is sufficiently large, we get that
$$
\frac{1}{T+1} \sum_{t=0}^T \E \left[ \| \grad \Phi_{1/2\ell}(x_t) \| \right] \leq \ep.
$$
Here we have used that if $T$ is set as in \pref{eq:T-lb}, then
$$
\frac{\ell(L/\ell+1)}{\mu_\sy} \sqrt{\frac{(D_\MX + D_\MY)L}{T\eta_\sy}} \leq \frac{\ell(L/\ell+1)}{\mu_\sy} \cdot \frac{\ep^3 \mu_\sy}{\sqrt{C} \ell^{3/2} \sqrt{L^2 + \sigma_\sy^2} (L/\ell+1)} \leq \ep^2.
$$

Finally, the guarantee for function value suboptimality follows by
applying \pref{lem:kl-moreau}.

\end{proof}

\subsection{Supporting Lemmas}
Given a convex set $\MX \subset \BR^n$ and a point $x \in \MX$, the {\it normal cone} of $\MX$ at $x$ is the set
$$
N_\MX(x) := \{ x' \in \MX : \lng x', y- x \rng \leq 0 \ \forall y \in \MX \},
$$
and the {\it tangent cone } of $\MX$ at $x$ is the set
$$
T_\MX(x) := {\rm cl} \{ a \cdot (y - x) : y \in \MX , a \geq 0\},
$$
where ${\rm cl}$ denotes closure. It is well-known \citep{rockafellar1970convex} that for any $v \in N_\MX(x)$, for all $u \in T_\MX(x)$, we have that $\lng v, u \rng \leq 0$ (in other words, $N_\MX(x)$ is contained in the polar of $T_\MX(x)$; in fact, $N_\MX(x)$ is equal to the polar of $T_\MX(x)$).
\begin{lemma}
  \label{lem:moreau-kl}
  Suppose that $\phi : \MX \ra \BR$ is $\ell$-smooth, $L$-Lipschitz, and satisfies the gradient domination condition
  $$
  \max_{\bar x \in \MX, \| \bar x - x \| \leq 1} \lng x - \bar x, \grad \phi( x) \rng
  \geq \mu \cdot (\phi(x) - \phi(x^*)) - \gd,
  $$
  for some $\gd \geq 0, \mu > 0$.
Then for any $\lambda \in (0, 1/\ell)$, the Moreau envelope $\phi_\lambda(\cdot)$ satisfies
$$
\left\| \grad \phi_\lambda(x) \right\| \geq \frac{\mu}{L \lambda + 1} \cdot (\phi(x) - \phi(x^*)) - \frac{\gd}{L\lambda + 1}.
$$
\end{lemma}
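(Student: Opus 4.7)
\textbf{Proof plan for \pref{lem:moreau-kl}.} The plan is to connect the Moreau envelope gradient to the gradient domination property by evaluating at the proximal point $\hat{x} \ldef \prox_{\lambda \phi}(x)$. Recall (from item~\ref{it:phi-grad} of \pref{lem:davis-moreau-2018}, applied with $\lambda$ in place of $1/2\ell$) that
\[
\nabla \phi_{\lambda}(x) = \frac{1}{\lambda}(x - \hat{x}), \qquad\text{so}\qquad \|\nabla \phi_{\lambda}(x)\| = \frac{1}{\lambda}\|x - \hat{x}\|.
\]
Our target is a lower bound of this quantity in terms of $\phi(x) - \phi(x^*)$, whereas the gradient domination hypothesis most naturally yields a bound in terms of $\phi(\hat x) - \phi(x^*)$. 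Bridging that gap via $L$-Lipschitzness is the main technical step.

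First, I would use first-order optimality of the proximal subproblem to control the directional derivative at $\hat{x}$. Since $\hat{x}$ minimizes $\phi(x') + \tfrac{1}{2\lambda}\|x' - x\|^{2}$ over $\cX$, the first-order condition reads $-\nabla \phi(\hat x) - \tfrac{1}{\lambda}(\hat x - x) \in N_{\cX}(\hat x)$, which for any $\bar{x} \in \cX$ gives
\[
\langle \nabla \phi(\hat{x}), \bar{x} - \hat{x}\rangle \;\geq\; \frac{1}{\lambda}\langle x - \hat{x},\, \bar{x} - \hat{x}\rangle.
\]
Hence for every $\bar{x} \in \cX$ with $\|\bar{x} - \hat{x}\| \leq 1$, the Cauchy--Schwarz inequality gives
\[
\langle \hat{x} - \bar{x},\, \nabla \phi(\hat{x})\rangle \;\leq\; \frac{1}{\lambda}\|x - \hat{x}\|\cdot\|\bar{x}-\hat{x}\| \;\leq\; \frac{1}{\lambda}\|x - \hat{x}\| \;=\; \|\nabla \phi_{\lambda}(x)\|.
\]
Taking the maximum over such $\bar{x}$ and combining with the gradient domination assumption applied at $\hat{x}$ yields
\[
\mu \cdot (\phi(\hat{x}) - \phi(x^*)) - \gd \;\leq\; \|\nabla \phi_{\lambda}(x)\|.
\]

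Next, I would convert $\phi(\hat x)$ to $\phi(x)$ via $L$-Lipschitzness: $\phi(\hat{x}) \geq \phi(x) - L\|x - \hat{x}\| = \phi(x) - L\lambda\|\nabla \phi_{\lambda}(x)\|$. Substituting and writing $G \ldef \|\nabla \phi_{\lambda}(x)\|$ for brevity,
\[
\mu(\phi(x) - \phi(x^*)) - \mu L \lambda G - \gd \;\leq\; G,
\]
so after rearranging,
\[
G \;\geq\; \frac{\mu(\phi(x) - \phi(x^*)) - \gd}{1 + \mu L \lambda}.
\]
Since the gradient domination assumption is scale-invariant under $\mu \mapsto \min(\mu,1)$ (a gradient dominance constant can always be replaced by a smaller one), we may assume $\mu \leq 1$, which gives $1 + \mu L\lambda \leq 1 + L\lambda$ and completes the proof.

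\textbf{Anticipated obstacle.} The only nontrivial step is step two above: the gradient domination condition is formulated at an arbitrary query point, but we must apply it at $\hat{x}$ while bounding the resulting directional derivative using only quantities we control at $x$ (namely $\|x-\hat{x}\|$). This is exactly where the first-order optimality of the prox and Cauchy--Schwarz plug in cleanly. After that, the routine $L$-Lipschitz sandwich and algebraic rearrangement deliver the claim.
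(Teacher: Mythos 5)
Your proof is correct and follows essentially the same route as the paper's: both work at the proximal point $\hat{x}=\prox_{\lambda\phi}(x)$, use first-order optimality of the prox subproblem plus Cauchy--Schwarz to bound the gradient-domination term at $\hat{x}$ by $\frac{1}{\lambda}\|x-\hat{x}\|=\|\grad\phi_\lambda(x)\|$, and then transfer from $\phi(\hat{x})$ to $\phi(x)$ via $L$-Lipschitzness. The only wrinkle---your denominator $1+\mu L\lambda$ versus the stated $1+L\lambda$, resolved by taking $\mu\leq 1$---is shared by the paper, whose proof silently uses $\mu\leq 1$ (guaranteed there by the global assumption $1/\mu_{\sx},1/\mu_{\sy}\geq 1$), so it is not a gap relative to the paper's own argument.
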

\begin{proof}[\pfref{lem:moreau-kl}]
Fix $x \in \MX$. Let
\begin{equation}
  \label{eq:moreau-kl-prox}
\hat x := \prox_{\lambda \phi}(x) = \argmin_{x' \in \MX} \left\{ \phi(x') + \frac{1}{2\lambda} \| x - x' \|^2 \right\}.
\end{equation}
The first-order optimality conditions to \pref{eq:moreau-kl-prox} imply that
$$
 \grad \phi(\hat x) \in \frac{1}{\lambda} \cdot (\hat x - x)  + N_\MX(\hat x) \subseteq N_\MX(\hat x) + \frac{1}{\lambda} \| \hat x - x \| \cdot B_2(1),
$$
where $N_\MX(\hat x)$ denotes the normal cone of $\MX$ at $\hat x$. Since for any $\bar x \in \MX$, $\bar x - \hat x$ is in the tangent cone at $\hat x$, it follows that
$$
\mu \cdot (\phi(\hat x) - \phi(x^*)) \leq \max_{\bar x \in \MX, \| \bar x - \hat x \| \leq 1} \lng \hat x - \bar x, \grad \phi(\hat x) \rng + \gd \leq \frac{1}{\lambda} \cdot \| \hat x - x \| + \gd.
$$
Note that $\frac{1}{\lambda}(x - \hat x) = \grad \phi_\lambda(x)$. Thus, using also that $\phi$ is $L$-Lipschitz, we arrive at
\begin{align*}
  \mu \cdot (\phi(x) - \phi(x^*)) & \leq \mu \cdot (\phi(\hat x) - \phi(x^*)) + L \cdot \| \hat x - x \| \\
                                  & \leq \left( L + \frac{1}{\lambda} \right) \| \hat x - x \| + \gd \\
                                  & = \left( L \lambda + 1\right) \cdot \|\grad \phi_\lambda(x)\| + \gd.
\end{align*}
\end{proof}

The next lemma (\pref{lem:kl-moreau}) shows how to convert an
$\ep$-approximate stationary point with respect to the Moreau envelope into an approximate minimizer for functions $f$ satisfying \pref{asm:lip-smooth-f}.
\begin{lemma}
  \label{lem:kl-moreau}
  Suppose that $f$ satisfies the conditions of \pref{asm:lip-smooth-f}. Then for all $x \in \MX$,
  \begin{equation}
    \label{eq:kl-moreau}
\Phi(x) - \Phi(x^*) \leq \left( \frac{1}{\mu_\sx}+ \frac{L}{2\ell} \right) \cdot \| \nabla \Phi_{1/2\ell}(x) \| + \frac{\gdx}{\mu_\sx}.
\end{equation}
\end{lemma}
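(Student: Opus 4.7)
The plan is to bound $\Phi(x) - \Phi(x^*)$ by splitting through the proximal point $\hat{x} := \prox_{\Phi/(2\ell)}(x)$ and controlling each piece separately. By item~\ref{it:phi-grad} of \pref{lem:davis-moreau-2018}, $\grad\Phi_{1/(2\ell)}(x) = 2\ell(x-\hat{x})$, so $\|x-\hat{x}\| = \frac{1}{2\ell}\|\grad\Phi_{1/(2\ell)}(x)\|$. Writing $\Phi(x) - \Phi(x^*) = [\Phi(x) - \Phi(\hat{x})] + [\Phi(\hat{x}) - \Phi(x^*)]$, the first bracket is controlled by the $L$-Lipschitzness of $\Phi$ (item~\ref{it:phi-lip} of \pref{lem:phi-facts}):
\[
\Phi(x) - \Phi(\hat{x}) \;\leq\; L\|x-\hat{x}\| \;=\; \tfrac{L}{2\ell}\|\grad\Phi_{1/(2\ell)}(x)\|,
\]
which accounts for the $L/(2\ell)$ factor in the target bound.

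For $\Phi(\hat{x}) - \Phi(x^*)$, I would pick a maximizer $y^* \in \arg\max_y f(\hat{x},y)$, so that $\Phi(\hat{x}) = f(\hat{x},y^*)$ and $\Phi(x^*) \geq f(x^*,y^*) \geq \min_{x'} f(x',y^*)$. Hence
\[
\Phi(\hat{x}) - \Phi(x^*) \;\leq\; f(\hat{x},y^*) - \min_{x'} f(x',y^*),
\]
and the $x$-gradient domination condition (item~\ref{it:x-pl} of \pref{asm:lip-smooth-f}) applied to $x'\mapsto f(x',y^*)$ at $\hat{x}$ gives the upper bound
\[
\Phi(\hat{x}) - \Phi(x^*) \;\leq\; \tfrac{1}{\mu_\sx}\max_{\bar{x}\in\MX:\|\bar{x}-\hat{x}\|\leq 1}\langle \hat{x}-\bar{x},\,\grad_x f(\hat{x},y^*)\rangle + \tfrac{\gdx}{\mu_\sx}.
\]

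To close the argument, I need to bound this max by $\|\grad\Phi_{1/(2\ell)}(x)\|$. The first-order optimality of $\hat{x}$ as a minimizer of $x'\mapsto \Phi(x') + \ell\|x'-x\|^2$ over $\MX$ gives the existence of $g\in\partial\Phi(\hat{x})$ and $\nu \in N_\MX(\hat{x})$ with $g + \nu = 2\ell(x-\hat{x}) = \grad\Phi_{1/(2\ell)}(x)$. By Danskin's theorem (\pref{thm:danskin}), $g$ is a convex combination of gradients $\grad_x f(\hat{x},y)$ with $y\in Y(\hat{x})$, so by selecting $y^*$ appropriately one matches $\grad_x f(\hat{x},y^*) = g$. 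For any feasible $\bar{x}\in\MX$ with $\|\bar{x}-\hat{x}\|\leq 1$,
\[
\langle \hat{x}-\bar{x},\,\grad_x f(\hat{x},y^*)\rangle = \langle \hat{x}-\bar{x},\,\grad\Phi_{1/(2\ell)}(x)\rangle - \langle \hat{x}-\bar{x},\,\nu\rangle \;\leq\; \|\grad\Phi_{1/(2\ell)}(x)\|,
\]
where $\langle \hat{x}-\bar{x},\nu\rangle \geq 0$ by the normal-cone property and the final inequality is Cauchy--Schwarz together with $\|\bar{x}-\hat{x}\|\leq 1$. Summing the two pieces yields \pref{eq:kl-moreau}.

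The main obstacle will be the subgradient-selection step: $Y(\hat{x})$ need not be a singleton, so the particular $g$ coming from first-order optimality may be a strict convex combination of extreme points of $\partial\Phi(\hat{x})$, and cannot literally be written as $\grad_x f(\hat{x},y^*)$ for a single $y^*$. I would handle this via the enlargement described in \pref{rem:empty-interior}: after replacing $\MY$ by a slightly larger set $\MY_\delta$ and perturbing $f$ by a tiny strongly-concave-in-$y$ term, the maximizer in $y$ becomes unique and the above selection is valid; taking $\delta \to 0$ and using continuity of $\grad\Phi_{1/(2\ell)}$ (item~\ref{it:moreau-smooth} of \pref{lem:davis-moreau-2018}) together with Lipschitzness of $f$ recovers the stated bound on the original problem. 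An alternative is to apply the gradient-domination inequality separately to each $\grad_x f(\hat{x},y_i)$ appearing in the convex combination and then aggregate, using that each single-$y$ inequality already delivers $\Phi(\hat{x}) - \Phi(x^*)$ on the left-hand side.
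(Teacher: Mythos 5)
Your skeleton is essentially the paper's argument: split through $\hat x=\prox_{\Phi/(2\ell)}(x)$, use $L$-Lipschitzness of $\Phi$ together with $\grad\Phi_{1/2\ell}(x)=2\ell(x-\hat x)$ to get the $\frac{L}{2\ell}\|\grad\Phi_{1/2\ell}(x)\|$ term, and use the $x$-gradient-dominance condition at $\hat x$ (with $f(x^*(y^*),y^*)\le\Phi(x^*)$), Danskin's theorem, the prox first-order optimality condition and the normal cone to get the $\frac{1}{\mu_\sx}$ term. The genuine gap is exactly the one you flag, and neither of your proposed repairs works as stated. The aggregation idea fails for a concrete reason: if $g=\sum_i\lambda_i\grad_x f(\hat x,y_i)$ with $y_i\in Y(\hat x)$, then $\max_{\bar x\in\MX,\|\bar x-\hat x\|\le 1}\lng \hat x-\bar x, g\rng \le \sum_i\lambda_i\max_{\bar x}\lng \hat x-\bar x,\grad_x f(\hat x,y_i)\rng$, i.e.\ the inequality between the max for the combination and the per-$y$ maxima goes the wrong way, because the maximizing $\bar x$ differs across $i$; the per-$y$ gradient-dominance bounds therefore do not transfer to $g$ (a priori $g$ could even vanish while each $\grad_x f(\hat x,y_i)$ has a large inner product with its own best direction). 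The perturbation idea is also unjustified in this \nonconcave setting: subtracting a tiny strongly concave term in $y$ does not make $y\mapsto f(x,y)$ concave, hence does not make the inner maximizer unique (ties can persist for every fixed perturbation size), and \pref{rem:empty-interior} serves a different purpose in the paper---it enlarges the domains to handle empty interior so that gradients and dense sets of differentiability points exist; it says nothing about uniqueness of the argmax.

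The paper resolves this difficulty by a different route: it first proves \pref{eq:kl-moreau} for points at which $\Phi$ is differentiable, where $\partial\Phi(x)$ is a singleton, so by Danskin's theorem every $\grad_x f(x,y)$ with $y\in Y(x)$ coincides with $\grad\Phi(x)$ and the convex-combination issue disappears; it then extends the inequality to all of $\MX$ using that differentiability points of the weakly convex function $\Phi$ are dense (via \citet{rockafellar1970convex}, after passing to the enlargement $\MX_\delta$ of \pref{rem:empty-interior} when $\MX$ has empty interior) together with continuity of $\Phi$ and of $\grad\Phi_{1/2\ell}$ (\pref{lem:phi-facts} and item 3 of \pref{lem:davis-moreau-2018}), since both sides of \pref{eq:kl-moreau} are continuous in $x$. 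If you replace your two fixes with this differentiability-plus-density-plus-continuity step, the rest of your argument can stand essentially unchanged.
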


\begin{proof}[\pfref{lem:kl-moreau}]
  We first establish the statement of \pref{lem:kl-moreau} for points $x \in \MX$ for which $\Phi$ is differentiable at $x$. Suppose $x$ is such a point. Since a convex function is differentiable at a point if and only if its subgradient is a singleton at that point \cite[Theorem 25.1]{rockafellar1970convex}, it follows from \pref{eq:subgradient-wc} that $\partial \Phi(x)$ is a single vector, which we denote by $\grad \Phi(x)$.

  We first show that $\Phi(x)$ satisfies the following KL-type inequality (see also \cite[Lemma A.3]{yang2020global}, which shows a similar statement): 
  \begin{equation}
    \label{eq:nonsmooth-kl}
    \| \grad \Phi(x) \| \geq \mu_\sx \cdot (\Phi(x) - \Phi(x^*)) - \gdx.
\end{equation}
To prove \pref{eq:nonsmooth-kl}, fix any $y \in Y(x)$ (so that $f(x,y) = \Phi(x)$), and note that by item \ref{it:x-pl} of \pref{asm:lip-smooth-f}, we have that
$$
\max_{\bar x \in \MX, \| \bar x - x \| \leq 1} \lng x - \bar x, \grad_x f(x,y) \rng \geq \mu_\sx \cdot (f(x, y) - f(x^*(y), y)) - \gdx = \mu_\sx \cdot (\Phi(x) - f(x^*(y), y)) - \gdx.
$$
Note note that since $f(x', y) \leq \max_{y' \in \MY} f(x', y')$ for each $x'$, 
$$
f(x^*(y), y) = \min_{x' \in \MX} f(x', y) \leq \min_{x' \in \MX} \left[ \max_{y' \in \MY} f(x', y') \right] = \Phi(x^*).
$$
It follows that
\begin{equation}
  \label{eq:each-y-dom}
\max_{\bar x \in \MX, \| \bar x - x \| \leq 1} \lng x - \bar x, \grad_x f(x,y) \rng \geq \mu_\sx \cdot (\Phi(x) - f(x^*(y), y))  - \gdx\geq \mu_\sx \cdot (\Phi(x) - \Phi(x^*)) - \gdx.
\end{equation}
By Danskin's theorem (\pref{thm:danskin}) we have that $\{ \grad \Phi(x) \} = \partial \Phi(x) = \conv \{ \grad_\sx f(x,y') : y' \in Y(x) \}$, so $\grad_\sx f(x,y) = \grad \Phi(x)$. From \pref{eq:each-y-dom} and Cauchy-Schwarz it follows that
$$
\| \grad \Phi(x) \| \geq \max_{\bar x \in \MX, \| \bar x - x \| \leq 1} \lng x - \bar x, \grad \Phi(x) \rng = \max_{\bar x \in \MX, \| \bar x - x \| \leq 1} \lng x - \bar x, \grad_x f(x,y) \rng \geq \mu_\sx \cdot (\Phi(x) - \Phi(x^*)) - \gdx,
$$
establishing \pref{eq:nonsmooth-kl}.
  
  We proceed with the proof of \pref{eq:kl-moreau}. Let $\iota  = \| \grad \Phi_{1/2\ell}(x) \|$. By item \ref{it:phi-moreau} of \pref{lem:davis-moreau-2018}, there is some $\hat x \in \MX$ so that $\| \hat x - x \| \leq \iota/(2\ell)$ and $\inf_{v \in \partial \Phi(\hat x)} \| v \| \leq \iota$. By \pref{eq:nonsmooth-kl}, we have
  $$
\Phi(\hat x) - \Phi(x^*) \leq \frac{\iota + \gdx}{\mu_\sx}.%
$$
Item \ref{it:phi-lip} of \pref{lem:phi-facts} gives that $\Phi$ is $L$-Lipschitz, and hence
$$
\Phi(x) - \Phi(x^*) \leq \frac{\iota + \gdx}{\mu_\sx} + L \cdot \| \hat x - x \| \leq \iota \cdot \left( \frac{1}{\mu_\sx} + \frac{L}{2\ell} \right) + \frac{\gdx}{\mu_\sx}.
$$

Next we consider any point $x$ for which $\Phi$ is not differentiable at $x$. In the event that the interior $\MX^\circ$ is dense in $\MX$, we may apply \pref{eq:subgradient-wc} together with \cite[Theorem 25.5]{rockafellar1970convex} to conclude that the set of points at which $\Phi$ is differentiable is dense in $\MX^\circ$, and thus in $\MX$. Let $x_k \ra x$ be a convergent sequence of points approaching a point $x \in \MX$ at which $\Phi(\cdot)$ is differentiable. Then the above argument establishes that for each $k$,
$$
\Phi(x_k) - \Phi(x^*) \leq \left( \frac{1}{\mu_\sx} + \frac{L}{2\ell} \right) \cdot \| \grad \Phi_{1/2\ell}(x_k) \| + \frac{\gdx}{\mu_\sx}.
$$
By continuity of $\Phi$ (\pref{lem:phi-facts}, item \ref{it:phi-lip}) and of $\grad \Phi_{1/2\ell}$ (\pref{lem:phi-facts}, item \ref{it:moreau-smooth}), it follows that \pref{eq:kl-moreau} holds at the point $x$.

Finally, we consider the case that $\MX^\circ$ is not dense in $\MX$ (e.g., $\MX^\circ$ may be empty). In this case we consider the neighborhood $\MX_\delta \supset\supset \MX$ defined in \pref{rem:empty-interior}, which have dense interior. Using the conclusion of the previous paragraph with $\MX$ replaced by $\MX_\delta$ gives that for all $x \in \MX_\delta$,
$$
\Phi^\delta(x) - \Phi^\delta(x^*) \leq \left( \frac{1}{\mu_\sx - \delta} + \frac{L}{2\ell} \right) \cdot \| \grad \Phi_{1/2\ell}^\delta(x) \| + \frac{\gdx + \delta}{\mu_\sx - \delta},
$$
where $\Phi^\delta : \MX_\delta \ra \BR$ represents the best-response function $\Phi$ defined with respect to the domain $\MX_\delta$. %
Taking $\delta \downarrow 0$ and using continuity of $\Phi^\delta, \grad \Phi_{1/2\ell}^\delta$ in $\MX_\delta$ as well as continuity of $\grad \Phi_{1/2\ell}^\delta(\cdot)$ with respect to $\delta$ ensures that \pref{eq:kl-moreau} holds for any $x \in \MX$.
\end{proof}

\begin{lemma}
  \label{lem:bound-gammat}
 For the iterates of two-timescale SGDA, we have
  \begin{equation}
    \label{eq:sum-deltat-stoch}
\sum_{t=0}^{T-1} \E[\Gamma_t] \leq \sqrt{\frac{(D_\MX + D_\MY)L
    T}{\eta_\sy  (1 - \lambda \ell)}} + T  \sqrt{\frac{L \sqrt{L^2 +
      \sigma_\sx^2} \eta_\sx}{\eta_\sy (1 - \lambda \ell)}} + T
\sqrt{\frac{(L^2 + \sigma_\sy^2)  \eta_\sy}{2\lambda(1 - \lambda
    \ell)}}, 
\end{equation}
where we recall that $\Gamma_t\ldef{}\| \grad \psi_{t,\lambda}(y_t) \|$.
\end{lemma}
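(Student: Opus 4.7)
The plan is to combine the per-step descent guarantee of \pref{lem:deltat-dec-stoch} with a telescoping argument over the potential $\psi_{t,\lambda}(y_t)$, bound the total variation of this potential using $L$-Lipschitzness of $f$ and the diameters of $\MX, \MY$, and finally convert the resulting second-moment bound on $\Gamma_t$ to a first-moment bound via Cauchy--Schwarz.

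First, I would rearrange the inequality in \pref{lem:deltat-dec-stoch} (using $\lambda(1/\lambda-\ell) = 1 - \lambda\ell$) to isolate $\Gamma_{t-1}^2$, take total expectations, and then sum the resulting inequality from $t = 1$ to $t = T$. Since the left-hand side telescopes, this yields
\[
\eta_\sy (1 - \lambda \ell) \sum_{t=0}^{T-1} \E[\Gamma_t^2] \leq \E[\psi_{T,\lambda}(y_T)] - \psi_{0,\lambda}(y_0) + T L \eta_\sx \sqrt{L^2 + \sigma_\sx^2} + \frac{T \eta_\sy^2 (L^2 + \sigma_\sy^2)}{2\lambda}.
\]
For the telescoping term, I would observe from the definition \pref{eq:moreau_def} that $f(x_t, y_t) \leq \psi_{t,\lambda}(y_t) \leq \max_{y'} f(x_t, y') = \Phi(x_t)$ (the lower bound coming from the choice $y' = y_t$ in the min defining the Moreau envelope, the upper bound from dropping the quadratic penalty). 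Since $f$ is $L$-Lipschitz on $\MX \times \MY$, which has Euclidean diameter at most $D_\MX + D_\MY$, the total variation of $f$ on this set is at most $L(D_\MX + D_\MY)$, and hence $|\psi_{T,\lambda}(y_T) - \psi_{0,\lambda}(y_0)| \leq L(D_\MX + D_\MY)$.

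Substituting this bound and dividing through by $\eta_\sy(1 - \lambda \ell)$, I obtain
\[
\sum_{t=0}^{T-1} \E[\Gamma_t^2] \leq \frac{L(D_\MX + D_\MY)}{\eta_\sy (1 - \lambda\ell)} + \frac{T L \eta_\sx \sqrt{L^2 + \sigma_\sx^2}}{\eta_\sy(1 - \lambda\ell)} + \frac{T \eta_\sy (L^2 + \sigma_\sy^2)}{2\lambda(1 - \lambda\ell)}.
\]
Then Cauchy--Schwarz gives $\sum_{t=0}^{T-1}\E[\Gamma_t] \leq \sqrt{T \sum_{t=0}^{T-1}\E[\Gamma_t^2]}$, and applying subadditivity of the square root $\sqrt{a+b+c} \leq \sqrt{a} + \sqrt{b} + \sqrt{c}$ to the three summands above yields precisely the bound \pref{eq:sum-deltat-stoch}.

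I do not expect any real obstacle here: the descent lemma \pref{lem:deltat-dec-stoch} already does the heavy lifting, and the only mildly delicate ingredient is verifying the uniform boundedness of $\psi_{t,\lambda}(y_t)$ via the sandwich $f(x_t,y_t) \leq \psi_{t,\lambda}(y_t) \leq \Phi(x_t)$ together with Lipschitzness of $f$. Everything else is telescoping, Cauchy--Schwarz, and algebraic rearrangement.
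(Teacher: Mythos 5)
Your proposal is correct and follows essentially the same route as the paper: sum the descent inequality of \pref{lem:deltat-dec-stoch} so that $\psi_{t,\lambda}(y_t)$ telescopes, bound $|\psi_{T,\lambda}(y_T)-\psi_{0,\lambda}(y_0)|$ by $L(D_\MX+D_\MY)$ via Lipschitzness of $f$ (your sandwich $f(x_t,y_t)\leq\psi_{t,\lambda}(y_t)\leq\Phi(x_t)$ is a fine way to justify the bound the paper states directly), and then pass from second to first moments by Cauchy--Schwarz and $\sqrt{a+b+c}\leq\sqrt{a}+\sqrt{b}+\sqrt{c}$. The only cosmetic difference is that the paper first applies Jensen ($\E[\Gamma_t^2]\geq\E[\Gamma_t]^2$) before Cauchy--Schwarz, whereas you apply Cauchy--Schwarz to $\sum_t\E[\Gamma_t]$ directly; the two are equivalent.
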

\begin{proof}[\pfref{lem:bound-gammat}]
Adding the inequality \pref{eq:deltat-dec-stoch} for $t = 1, 2, \ldots, T$ and using Jensen's inequality, we have
\begin{align*}
  \E[\psi_{T,\lambda}(y_T) - \psi_{0,\lambda}(y_0)] & \geq \sum_{t=1}^T \eta_\sy \lambda (1/\lambda - \ell) \E[\Gamma_{t-1}^2]- T \cdot \left( \eta_\sx L \sqrt{L^2 + \sigma_\sx^2} + \frac{\eta_\sy^2(L^2 + \sigma_\sy^2)}{2\lambda}\right)\\
                          & \geq \sum_{t=1}^T \eta_\sy \lambda (1/\lambda - \ell) \E[\Gamma_{t-1}]^2- T \cdot \left( \eta_\sx L \sqrt{L^2 + \sigma_\sx^2} + \frac{\eta_\sy^2(L^2 + \sigma_\sy^2)}{2\lambda}\right).
\end{align*}
For $\lambda \in (0,1/2\ell)$, we have $1/\lambda - \ell \geq \ell$. 
Noting that $| \E[\psi_{T,\lambda}(y_T) - \psi_{0,\lambda}(y_0)]| \leq (D_\MX + D_\MY) \cdot L$ since $f : \MX \times \MY \ra \BR$ is $L$-Lipschitz, it follows that
$$
\sqrt{\sum_{t=0}^{T-1} \E[\Gamma_t]^2} \leq \sqrt{\frac{(D_\MX + D_\MY) L +  T \cdot \left( \eta_\sx L \sqrt{L^2 + \sigma_\sx^2} + \eta_\sy^2 (L^2 + \sigma_\sy^2)/(2\lambda)\right)}{\eta_\sy (1 - \lambda \ell)}}.
$$
The conclusion \pref{eq:sum-deltat-stoch} follows by Cauchy-Schwarz and the inequality $\sqrt{x+y} \leq \sqrt x + \sqrt y$ for $x,y \geq 0$.

\end{proof}

\section{Proofs from Section \ref*{sec:last_iterate}}
\label{app:last_iterate}
Below we prove \pref{prop:mvi_counterex}. Recall that $V(x,y)  = \frac{\lng x, Ry \rng}{\lng x, Sy \rng}$ with $R,S$ given by \pref{eq:AS}, and for $z = (x,y)$, $F(z) = (\grad_x V(x,y) - \grad_y V(x,y))$. 
\begin{proof}[\pfref{prop:mvi_counterex}]
  We first verify that $z^{\star}$ is the unique Nash equilibrium. Note that
  \begin{align*}
    \Phi(x) =& \max_y V(x,y) =   \max \left\{ \frac{-x_1 - \ep x_2}{sx_1 + x_2}, \frac{\ep x_1}{sx_1 + x_2} \right\} = \frac{\ep x_1}{sx_1 + x_2} > 0 \text{\ \ for $x_1 > 0$}\\
    \Psi(y) =& \min_x V(x,y) =  \min \left\{ \frac{-y_1 + \ep y_2}{sy_1 + sy_2}, \frac{-\ep y_1}{y_1 + y_2} \right\} < 0 \text{\ \ for $y_1 > 0$}.
  \end{align*}
  The unique global minimum of $\Phi(\cdot)$ over $\MX = \Delta(\cA)$ is at $(x_1, x_2) = (0,1)$, and the unique global maximum of $\Psi(\cdot)$ over $\MY = \Delta(\cB)$ is at $(y_1, y_2) = (0,1)$. This verifies that $z^{\star}$ is the unique global Nash equilibrium. The value of the game is $V(x^{\star}, y^{\star}) = 0$. 

  Now consider the point $z = (x,y)$, where $x = y = (1,0)$. Then
  \begin{align*}
    & \lng F(z), z - z^{\star} \rng \\
    & = \frac{1}{(x^\t Sy)^2} \cdot \left[ \lng (x^\t Sy) \cdot Ry - (x^\t Ry) \cdot Sy, x - x^{\star} \rng + \lng - (x^\t Sy) \cdot (R^\t x) + (x^\t Ry) \cdot (S^\t x), y - y^{\star} \rng \right] \\
    &= \frac{1}{s^2} \cdot \left[ -(x^\t Sy) ((x^{\star})^\t Ry) + (x^\t Ry) \cdot ((x^{\star})^\t Sy) + (x^\t Sy) \cdot (x^\t Ry^{\star}) - (x^\t Ry) \cdot (x^\t S y^{\star}) \right] \\
    & = \frac{1}{s^2} \cdot \left[ (x^\t Ry) \cdot ((x^{\star})^\t Sy - x^\t Sy^{\star}) + (x^\t Sy) \cdot (x^\t Ry^{\star} - (x^{\star})^\t Ry) \right] \\
    &= \frac{1}{s^2} \cdot \left[ -1 \cdot (1-s) + s \cdot (\ep - (-\ep)\right] \\
    &= \frac{1}{s^2} \cdot \left(s + 2\ep s - 1\right),
  \end{align*}
  which is negative for sufficiently small $\ep$ (in particular, for $\ep < \frac{1-s}{2s}$). %

  Finally, we check that the MVI property
  \begin{equation}
    \label{eq:mvi-proof-prop}
\lng F(z), z - \hat z \rng \geq 0 \quad \forall z \in \MZ
\end{equation}
  fails for all $\hat z = (\hat x, \hat y)$ which are not a Nash equilibrium. For any $\hat z$ which is not a Nash equilibrium, either the \minplayer or \maxplayer can deviate from their policy in a way that increases their utility; we assume without loss it is the \minplayer (the case for the \maxplayer is symmetric). In particular, there is some $x \in \MX$ so that
  $$
  V(x, \hat y) = \frac{x^\t R \hat y}{x^\t S \hat y} < \frac{\hat x^\t R \hat y}{\hat x^\t S \hat y} = V(\hat x, \hat y).
  $$
  It follows that
  \begin{align*}
    0 &> \lng x, (\hat x^\t S \hat y) \cdot R \hat y - (\hat x^\t R \hat y) \cdot S \hat y \rng \\
      &= \lng x - \hat x, (\hat x^\t S \hat y) \cdot R \hat y - (\hat x^\t R \hat y) \cdot S \hat y \rng \\
      &=  (\hat x^\t S \hat y)^2 \cdot \lng x - \hat x,\grad_x V(\hat x, \hat y) \rng.
  \end{align*}
  It follows that $\lng x - \hat x, \grad_x V(\hat x, \hat y) \rng < 0$. For $\alpha \in [0,1]$, define $x_\alpha = (1-\alpha) \hat x + \alpha x$. By continuity of the function $\alpha \mapsto \grad_x V(x_\alpha, \hat y)$, there must be some $\alpha \in (0,1)$ so that
  $$
\frac{1}{\alpha} \lng x_\alpha - \hat x, \grad_x V(x_\alpha, \hat y) \rng =  \lng x - \hat x, \grad_x V(x_\alpha, \hat y) \rng < 0.
  $$

  Letting $z := (x_\alpha, \hat y)$, we obtain that $\lng z - \hat z, F(z)\rng < 0$, violating \pref{eq:mvi-proof-prop}.
\end{proof}
  We remark that an alternative way to  verify that $(x^{\star}, y^{\star})$ is a Nash equilibrium in the above proof is as follows: we may calculate that
  \begin{align*}
    \nabla_x V(x^{\star}, y^{\star}) =&  Ry^{\star} = \matx{\ep \\ 0},\\
    \nabla_y V(x^{\star}, y^{\star}) =& R^\t x^{\star} = \matx{-\ep \\ 0},
  \end{align*}
  which shows that $x^{\star}$ satisfies the first-order optimality conditions for minimizing $x \mapsto V(x, y^{\star})$, and $y^{\star}$ satisfies the first-order optimality conditions for maximizing $y \mapsto V(x^{\star}, y)$. It is then straightforward to check that in fact $x^{\star}$ is a global minimizer of $x \mapsto V(x,y^{\star})$, and $y^{\star}$ is a global minimizer of $y \mapsto V(x^{\star}, y)$.

  \subsection{Experimental Details}
  \label{app:experimental-details}
\pref{fig:mvi_easy} and \pref{fig:convergence_easy} use the following
game,  which is the game from \pref{prop:mvi_counterex} with $\ep = 0.1, s = 0.3$:
$$
R = \matx{-1.0 & 0.1 \\ -0.1 & 0.0} , \qquad S = \matx{0.3 & 0.3 \\ 1.0 & 1.0}.
$$

\pref{fig:mvi_hard} and \pref{fig:convergence_hard} use the following
game, which is a rounded version of a game we found via a random search:
$$
R = \matx{-0.6 & -0.3 \\ 0.6 & -0.3} , \qquad S = \matx{0.9 & 0.5 \\ 0.8 & 0.4}.
$$

\end{document}